%%%%%%%% ICML 2022 EXAMPLE LATEX SUBMISSION FILE %%%%%%%%%%%%%%%%%

\documentclass[nohyperref]{article}
\usepackage{arxiv}
\usepackage{natbib}
\usepackage[colorlinks,citecolor=blue,urlcolor=blue,bookmarks=false,hypertexnames=true,linkcolor=blue]{hyperref}
\usepackage[english]{babel}
\usepackage{nicefrac}       % compact symbols for 1/2, etc.

% Set page size and margins
% Replace `letterpaper' with `a4paper' for UK/EU standard size

% Recommended, but optional, packages for figures and better typesetting:
\usepackage{microtype}
\usepackage{graphicx}
\usepackage{subfigure}
\usepackage{booktabs} % for professional tables

% \usepackage[compact]{titlesec}
% \titlespacing{\section}{0pt}{2ex}{1ex}
% \titlespacing{\subsection}{0pt}{1ex}{0ex}
% \titlespacing{\subsubsection}{0pt}{0.5ex}{0ex}

\usepackage{alias}
\usepackage{amsmath,amsfonts,amssymb}
\usepackage{amsthm}
\usepackage{xcolor}
% \usepackage{subcaption}

% hyperref makes hyperlinks in the resulting PDF.
% If your build breaks (sometimes temporarily if a hyperlink spans a page)
% please comment out the following usepackage line and replace
% \usepackage{icml2022} with \usepackage[nohyperref]{icml2022} above.
\usepackage{hyperref}

\usepackage{thm-restate}
\usepackage{graphicx}
\usepackage{hyperref,url}
\usepackage{booktabs}
\usepackage{tabu}

% Attempt to make hyperref and algorithmic work together better:

% Use the following line for the initial blind version submitted for review:

% If accepted, instead use the following line for the camera-ready submission:
% \usepackage[accepted]{icml2022}

% For theorems and such
\usepackage{amsmath}
\usepackage{amssymb}
\usepackage{mathtools}
\usepackage{amsthm}

% if you use cleveref..
\usepackage[capitalize,noabbrev]{cleveref}

%%%%%%%%%%%%%%%%%%%%%%%%%%%%%%%%
% THEOREMS
%%%%%%%%%%%%%%%%%%%%%%%%%%%%%%%%
\theoremstyle{plain}

\theoremstyle{definition}

\theoremstyle{remark}

% Todonotes is useful during development; simply uncomment the next line
%    and comment out the line below the next line to turn off comments
%\usepackage[disable,textsize=tiny]{todonotes}
\usepackage[textsize=tiny]{todonotes}

\usepackage{color-edits}
\addauthor{ab}{blue}

% The \icmltitle you define below is probably too long as a header.
% Therefore, a short form for the running title is supplied here:

\title{On the (Non-)Robustness of Two-Layer Neural Networks in Different Learning Regimes}

% \author{%
%   \name Elvis Dohmatob$^1$ \email{dohmatob@fb.com}\\
%   \name Alberto Bietti$^2$
%   \email{alberto.bietti@nyu.edu}\\
%     \addr{$1$ Facebook AI Research}\\
%   \addr{$2$ New York University}
% }

\usepackage{afterpage}

\author{%
  \name Elvis Dohmatob$^1$ \email{dohmatob@fb.com}\\
  \name Alberto Bietti$^2$
  \email{alberto.bietti@nyu.edu}\\
    \addr{$1$ Facebook AI Research}\\
  \addr{$2$ New York University}
}
\begin{document}

\maketitle
% \addtocontents{toc}{\protect\setcounter{tocdepth}{0}}

% It is OKAY to include author information, even for blind
% submissions: the style file will automatically remove it for you
% unless you've provided the [accepted] option to the icml2022
% package.

% List of affiliations: The first argument should be a (short)
% identifier you will use later to specify author affiliations
% Academic affiliations should list Department, University, City, Region, Country
% Industry affiliations should list Company, City, Region, Country

% You can specify symbols, otherwise they are numbered in order.
% Ideally, you should not use this facility. Affiliations will be numbered
% in order of appearance and this is the preferred way.

% this must go after the closing bracket ] following \twocolumn[ ...

% This command actually creates the footnote in the first column
% listing the affiliations and the copyright notice.
% The command takes one argument, which is text to display at the start of the footnote.
% The \icmlEqualContribution command is standard text for equal contribution.
% Remove it (just {}) if you do not need this facility.

\begin{abstract}
% We consider the problem of learning a quadratic form using a two-layer neural network and establish gaps in the robustness (to perturbations in the input at test time) in the following regimes: (i) at initialization; (i) random features regime where only output layer is trained; (iii) neural tangent regime; and (iv) the SGD regime where all the layers are trained with any linear approximations.
% % Our analysis shows that the linearized regimes (RF and NT) tend to be linear (and therefore more robust than the ground truth) in the under-parametrized regime, and become as robust as the ground-truth degree of over-parametrization is increased.
% For this problem, our analysis allows us to identify No Free Lunch phenomena, whereby test error and robustness are tied together: the one can only decrease at the expense of the other increasing, and vise versa. 
%     % networks trained with SGD with infinite training data...

% \textbf{alternative:}
Neural networks are known to be highly sensitive to adversarial examples. These may arise due to different factors, such as random initialization, or spurious correlations in the learning problem.
To better understand these factors, we provide a precise study of the adversarial robustness in different scenarios, from initialization to the end of training in different regimes, as well as intermediate scenarios, where initialization still plays a role due to “lazy” training. We consider over-parameterized networks in high dimensions with quadratic targets and infinite samples. Our analysis allows us to identify new tradeoffs between approximation (as measured via test error) and robustness, whereby robustness can only get worse when test error improves, and vice versa. We also show how linearized lazy training regimes can worsen robustness, due to improperly scaled random initialization. Our theoretical results are illustrated with numerical experiments.
\end{abstract}

\setcounter{tocdepth}{2}

\tableofcontents

% \mytodo{Missing NeurIPS checklist at end of manuscript (i.e just after "Conclusion" section) !}

% \tableofcontents

% \textbf{Remaining tasks (in decreasing order of importance):}
% \begin{itemize}
  % \item[(0)] \textcolor{red}{\textbf{\textit{XXX TODO: Make space (summarize some stuff, move overly-tech stuff to appendix)!}}}

    %\item[(1)] \textcolor{red}{\textbf{\textit{XXX Include better figures + better captions.}}}
    
    % \item[(2)] \textcolor{red}{\textbf{\textit{XXX TODO: Write soft / high-level introduction (main story, contributions, etc.). Improve on the abstract.}}}

    % \item[(3)] \textcolor{red}{\textbf{\textit{XXX TODO: Write related works section.}}}

%    \item[(4)] \textcolor{red}{\textbf{\textit{XXX TODO: Reorganize manuscript. Move technical details to appendix (especially much of section on lazy RF)}}}
    
   % \item[(5)] \textcolor{red}{\textbf{\textit{XXX TODO: Include illustrative figures / diagrams.}}}
    
%   \textcolor{red}{\textbf{\textit{XXX TODO: (Crucial) I don't know if the terms I'm using "RFL (lazy RF) and "NTL (lazy NT) really make sense or are conceptually nonsensical.}}}

% \end{itemize}

\section{Introduction}

Deep neural networks have enjoyed tremendous practical success in many applications involving high-dimensional data, such as images.
Yet, such models are highly sensitive to small perturbations known as adversarial examples~\cite{szegedy2013intriguing}, which are often imperceptible by humans.
While various strategies such as adversarial training~\cite{madry2017towards} can mitigate this vulnerability empirically, lack of robustness remains highly problematic for many safety-critical applications like autonomous vehicles and health, and motivates a better understanding of the phenomena at play.

Various factors are known to contribute to adversarial examples. In linear models, features that are only weakly correlated with the label, possibly in a spurious manner, may improve prediction accuracy but induce large sensitivity to adversarial perturbations~\cite{tsipras18,sanyal2021benign}.
On the other hand, common neural networks may exhibit high sensitivity to adversarial perturbations at random initialization~\cite{yann2019firstorder,mostrelu2020,bubecksinglestep2021,bartlett2021}.
Trained networks may thus involve multiple sources of vulnerability arising from initialization, training algorithms, as well as the data distribution at hand.

% \subsection{Contributions}
In this paper, we study the interplay between these different factors by analyzing approximation and robustness properties (i.e., stability of predictions, w.r.t perturbations in test data) of two-layer networks in different learning regimes.
We consider two-layer finite-width networks in high dimensions with infinite data, in asymptotic regimes inspired by~\cite{Ghorbani19}.
This allows us to focus on the effects inherent to the data distribution and the inductive bias of architecture and training algorithms, while side-stepping issues due to finite samples.
Following~\cite{Ghorbani19}, we focus on regression settings with structured quadratic target functions, and consider commonly studied training regimes for two-layer networks, namely (i) neural networks with quadratic activations trained with stochastic gradient descent on the population risk, which exhibit a form of feature learning by finding the global optimum; (ii) random features~\cite[RF,][]{rf}, (iii) neural tangent kernel~\cite[NT,][]{jacot18}, as well as (iv) ``lazy'' training~\cite{chizat2018lazy} regimes for RF and NT, where we consider a first-order Taylor expansion of the network around initialization, including the initialization term itself (in contrast to the RF and NT regimes which focus on the first-order term).
Note that, though the theoretical setting is inspired by \cite{Ghorbani19}, our work differs from~\cite{Ghorbani19} in its focus and scope. Indeed, we are concerned with robustness and its interplay with approximation, in different learning regimes, while \cite{Ghorbani19} was only concerned with approximation.
We also note that the lazy/linearized regimes we study as part of this work were not considered by~\cite{Ghorbani19}, and help us highlight the impact of initialization on robustness.
%\AB{je pense que lazy est un truc à part, en tout cas distinct de Ghorbani. Aussi}

% \mytodo{Check citations for RF and NT are complete / accurate.}

% \mytodo{Missing reference for feature learning / lazy NT regime.}\abcomment{feature learning c'est SGD dans notre cas, quand tu apprends des poids adaptés aux données. lazy NT c'est un kernel regime aussi}

% \abdelete{In order to better understand the role of initialization, we also consider “lazy training” regimes, consisting of linearized models around initialization~\cite{chizat2018lazy}.}

\paragraph{Main contributions.}
% We characterize robustness using a Sobolev norm quantity, which is tightly linked to adversarial risk.
Our work establishes theoretical results which uncover novel tradeoffs between approximation (as measured via test error) and robustness that are inherent to all the regimes considered. These tradeoffs appear to be due to misalignment between the target function and the input distribution (or weight distribution) for random features (Section \ref{sec:rf}), or to the inductive bias of fully-trained networks (Sections \ref{sec:sgd} and \ref{sec:init}).
We also show that improperly scaled random initialization can further degrade robustness in lazy/linearized models (Section \ref{sec:nt}), since they might inherit the nonrobustness inherent to random initialization. This raises the question of how large should the initialization be to in order to enhance the robustness of the trained model.
Our theoretical results are empirically verified with extensive numerical experiments on simulated data.

The setting of our work is regression with two-layer neural networks, where we assume access to infinite training data. Thus, the only complexity parameters are the structure of the ground-truth function, the input dimension $d$ and the width of the neural network $m$, assumed to both "large" but proportional to one another. Refer to Section \ref{sec:prelim} for details.
% \begin{restatable}{rmk}{}{}
The infinite-sample setting allows us to focus on the effects inherent to the data distribution and the inductive bias of architecture (choice of activation function) and different learning regimes, while side-stepping issues due to finite samples and label noise. Also note that in this infinite-data setting, label noise provably has no influence on the learned model, in all the learning regimes considered. The observation that there is a tradeoff between robustness and approximation, even in this infinite-sample setting, is one of the surprising findings of our work. This complements related works such as~\cite{lor,bubeck2021universal}, which show that finite training samples with label noise is a possible source of nonrobustness in neural networks.
% \end{restatable}

% \paragraph{Organization of the paper.}

% \mytodo{move what's below to section 2 "Preliminaries"}

% \mytodo{Reorganize / improve intro}

% We aim in this paper to expose gaps between the robustness of RF, NTK, and SGD regimes of neural nets. % Hopefully SGD is the most robust of them all (i.e smallest $\mathfrak{S}(f)^2$), followed by NT.
% \subsection{Problem setup}

\section{Related work}

Various works have theoretically studied adversarial examples and robustness in supervised learning, and the relationship prediction performance.

\cite{tsipras18} considers a specific data distribution where good accuracy implies poor robustness. For example, \cite{goldstein,saeed2018, gilmerspheres18,dohmatob19} show that for high-dimensional data distributions which have concentration property (e.g., multivariate Gaussians, distributions satisfying log-Sobolev inequalities, etc.), an imperfect classifier will admit adversarial examples. \cite{dobriban2020provable} studies trade-offs in Gaussian mixture classification problems, highlighting the impact of class imbalance. On the other hand, \cite{closerlook2020} observed empirically that natural images are well-separated, and so locally-lipschitz classifies shouldn't suffer any kind of test error vs robustness tradeoff. However, gradient-descent is not likely to find such models.
Our work studies regression problems with quadratic targets, and shows that there are indeed trade-offs between test error and robustness which are controlled by the learning algorithm / regime and model.

\cite{yann2019firstorder,mostrelu2020,bubecksinglestep2021,bartlett2021} study adversarial vulnerability of neural networks at initialization, but do not consider the effects of training the model, in contrast to our work.

\cite{schmidt2018,khim2018adversarial,yin2019rademacher,bhattacharjee2020sample,min2021curious,Karbasi2021} study the sample complexity of robust learning. In contrasts, our work focuses on the case of infinite data, so that the only complexity parameters are the input dimension $d$ and the network with $m$. \cite{bhattacharjee2020sample} studies robustness vs accuracy for data distributions which are well-separated (e.g., say the two classes are supported on disjoint balls). The main finding in that paper is that (i) the robustness vs accuracy tradeoff doesn't exist for well-separated datasets. The work also posits that (ii) real-world datasets are well-separated. We think (i) is only an artifact of the well-separatedness assumption (an assumption which fails for Gaussians (as noted in the paper), say, due to infinite support). Also, (ii) is likely due to the fact that most real datasets are limited in sample size, and so, deceptively appear to be well-separated. Indeed, in the real world, there are cats which look like dogs (e.g, Siamese cats), even though such data might be under-represented in ML datasets.

% \mytodo{add~\citep{sanyal2021benign} somewhere}

\cite{RuiqiGao2019,lor,bubeck2021universal} show that over-parameterization may be necessary for robust interpolation in the presence of noise. In contrast, our paper considers a structured problem with noiseless signal and infinite-data $n=\infty$, where the network width $m$ and the input dimension $d$ tend to infinity proportionately. In this under-complete asymptotic setting, our results show a precise picture of the trade-offs between approximation (test error) and robustness in different learning regimes.  Our work nuances this picture by exhibiting a nontrivial interplay between robustness and test error which persists even in the case of infinite samples, where the model isn't affected by label noise.
% \abcomment{Move the last two sentences to related work? Or in the general intro?}

\cite{dohmatob2021fundamental,hassani2022curse} study the tradeoffs between interpolation, predictive performance (test error), and robustness for finite-width over-parameterized networks in kernel regimes with noisy linear target functions. In contrast, we consider structured quadratic target functions and compare different learning settings, including SGD optimization in a non-kernel regime, as well as lazy/linearized models.

\section{Preliminaries}
\label{sec:prelim}

\subsection{Notations}
% \abcomment{je mettrais la subsection notation soit en premier soit en dernier dans la section 2?}
% Before moving forth, let us introduce some technical notation that will be used in this paper.
% \paragraph{Linear algebra.}
The set of integers from $1$ through $d$ will be denoted $[d]$. We will denote the identity matrix of size $d$ by $I_d$. The euclidean norm of a vector $x \in \mathbb R^d$ will be denoted $\|x\|$. The $k$th largest singular-value of a matrix $A$ will be denoted $\sigma_k(A)$, and equals the positive square-root of the $k$th largest eigenvalue $\lambda_k(AA^\top)$ of the psd matrix $AA^\top$. If $A$ itself is psd then $\sigma_k(A) = \lambda_k(A)$ for any $k \in [d]$.
% $\lambda_j(B)$ will denote the $j$th largest eigenvalue of the psd matrix $B$.
The Frobenius norm of $A$ denoted $\|A\|_F$ is defined by $\|A\|_F := \sqrt{\sum_{k=1}^d \sigma_k(A)^2}$. More generally, we define $\|A\|_{F,m} := \sqrt{\sum_{k=1}^{m \land d}\sigma_k(A)^2}$, so that $\|A\|_{F,d} = \|A\|_F$ in particular. Note that $m \mapsto \|A\|_{F,m}$ is a nondecreasing function which is upper-bounded by $\|A\|_F$. The Hadamard / element-wise product of two compatible matrices $A_1$ and $A_2$ will be denoted $A_1 \circ A_2$. 
The squared $L_2$-norm of a function $f:\mathbb R^d \to \mathbb R$ w.r.t the standard $d$-dimensional Gaussian distribution on $N(0,I_d)$ will be denoted $\|f\|_{L^2(N(0,I_d))}^2$, and defined by
% \begin{eqnarray}
$\|f\|_{L^2(N(0,I_d))} := \mathbb E_{x \sim N(0,I_d)}[f(x)^2],$
% \end{eqnarray}
when this integral exists.

\paragraph{Hermite coefficients.}
For any nonnegative integer $k$, let $\mathrm{He}_k:\mathbb R \to \mathbb R$ be the (probabilist's) $k$th Hermite polynomial. For example, note that $\mathrm{He}_0(t) := 0$, $\mathrm{He}_1(t) := t$, $\mathrm{He}_2(t) \equiv t^2-1$, $\mathrm{He}_3(t) := t^3 - 3t$, etc. The sequence $(\mathrm{He}_k)_k$ forms an orthonormal basis for the Hilbert space $L^2 = L^2(N(0,1))$ for functions $\mathbb R \to \mathbb R$ which are square-integrable w.r.t the standard normal distribution $N(0,1)$. Under suitable integrability conditions (refer to Section \ref{subsec:keyassume}), the coefficients of the activation function $\sigma$ in this basis are called its \emph{Hermite coefficients}, denoted $\lambda_k$, and are given by
\begin{eqnarray}
\lambda_k=\lambda_k(\sigma):= \mathbb E_{G \sim N(0,1)}[\sigma(G)\mathrm{He}_k(G)].
\end{eqnarray}

Finally, 
% \begin{eqnarray}
$\|\sigma\|_{L^2(N(0,1))}^2 = \mathbb E_{G \sim N(0,1)}[\sigma(G)^2]$
% \end{eqnarray}
defines the squared $L_2$-norm of $\sigma$ w.r.t the %one-dimensional
standard Gaussian distribution $N(0,1)$. Note that by construction, one has $\|\sigma\|_{L^2(N(0,1))}^2 = \sum_{k=0}^\infty \lambda_k^2(\sigma)$.
% \abcomment{This paragraph is a bit too math heavy, maybe we can move it to the notations section?}

\paragraph{Asymptotics.} The usual notation $\mathcal O_d(1)$ (resp. $\mathcal O_{d,\mathbb P}(1)$) is used to denote a quantity which remains bounded (resp. bounded in probability) in the limit $d \to \infty$. Likewise $o_{d}(1)$ (resp. $o_{d,\mathbb P}(1)$) denotes a quantity which goes to zero (resp. which goes to zero in probability) in the limit $d \to \infty$. As usual, the acronym "a.s" stands for \emph{almost-surely}, while "w.p $p$" stands for \emph{with probability at least $p$}.
% \section{Preliminaries}
% \subsection{Dirichlet energy as a measure of nonrobustness}

\subsection{Ground truth / teacher model.}
We consider the following regression setup proposed in Ghorbani et al ~\cite{Ghorbani19}.
Let~$B$ be a fixed $d \times d$ psd matrix and let $b_0 \in \mathbb R$ be a fixed unknown scalar. Consider the following quadratic target model
\begin{eqnarray}
\label{eq:truemodel}
f_\star(x) := x^\top B x + b_0,\;\forall x \in \mathbb R^d.
\end{eqnarray}
% This corresponds to a \emph{teacher} network with quadratic activation function and hidden neuron with parameters $W^\star=(w^\star_1,\ldots,w^\star_M) \in \mathbb R^{M \times d}$ chosen to be a rank-$M$ Cholesky factor of $B$. Let $\lambda_1(B) \ge \lambda_2(B)\ge \ldots \ge \lambda_r(B)$ be the nonzero eigenvalues of $B$, and let $r \le d$ be its rank.
% \subsection{Two-layer neural network}

We assume the input data is distributed according to $N(0,I_d)$, the standard Gaussian distribution in $d$ dimensions. Thus,
the structure of the problem of learning the ground-truth function $f_\star$ in \eqref{eq:truemodel} is completely determined by the unknown $d \times d$ matrix $B$. We assume an idealized scenario where the learner has access to an infinite number of iid samples of the form $(x,f_\star(x))$ with $x \sim N(0,I_d)$.
% To simplify the picture, we assume that the sample size is $n = \infty$, i.e, the learner has oracle access to iid samples from this distribution.
For simplicity of analysis, we will further assume as in \cite{Ghorbani19} that the ground-truth function $f_\star$ defined in \eqref{eq:truemodel} is centered, i.e $\mathbb E_{x \sim N(0,I_d)}[f_\star(x)] = 0$. This forces the offset $b_0$ to be equal to $-\trace(B)$.

% Even under these simplifications, we shall put to light a number of interesting phenomena which reveal a fundamental tradeoff between test (i.e., approximation) error and robustness to (possibly adversarial) input perturbations.

\subsection{Finite-width two-layer neural network}
% \mytodo{Include illustration of a neural network}
Consider a two-layer neural network 
\begin{eqnarray}
f_{W,z,s}(x) &:= \sum_{j=1}^m z_j \sigma (x^\top w_j) - s,
\label{eq:nn}
\end{eqnarray}
where $m$ is the network width, i.e., the number of hidden neurons each with parameter vector $w_j \in \mathbb R^d$, output weights $z=(z_1,\ldots,z_m) \in \mathbb R^m$, and activation function $\sigma:\mathbb R \to \mathbb R$. We define $W$ as the $m \times d$ matrix with $j$th row $w_j$. The scalar $s$ is an offset which we will sometimes set to zero, in which case we will simply write $f_{W,z,s} := f_{W,z,0}$.
% This neural net template was proposed in Ghorbani et al. as a theoretical testbest for exploring the gap between  linearized regimes (random features, neural tangents, etc.) and models learned via SGD.

\paragraph{Asymptotic regime with infinite data.}
Motivated by~\cite{Ghorbani19}, we will consider the following regime:

\emph{-- Infinite data.} The sample size $n$ is \textit{equal to} $\infty$, i.e., the learner has access to the entire data distribution, allowing us to step-aside issues linked with finite samples.
    % \abcomment{ change to: The learner has access to the entire data distribution, leading us to focus on approximation error?}
    
\emph{-- Proportionate scaling of dimensionality and width.} The input-dimension $d$ and the network width $m$ are \textit{finite}, and large of the same order, i.e.,
\begin{eqnarray}
\label{eq:proportionate}
m,d \to \infty,\; m/d \to \rho \in (0,\infty).
\end{eqnarray}
% \end{restatable}
The parameter $\rho$ (called the \emph{aspect ratio}), which will play a crucial role in the sequel, corresponds to the parametrization rate: $\rho < 1$ corresponds to \emph{under-parametrization}, while
$\rho > 1$ corresponds to \emph{over-parametrization} (more hidden neurons than input dimension). We will also consider the extreme over-parametrization regime corresponding to the double limit: $m,d\to \infty$, $m/d \to \rho$, $\rho\to \infty$.

% \subsection{Objectives and main contributions of the paper}
% Our aim is to analyze the robustness of the model \eqref{eq:nn} in 3 different regimes
% \begin{itemize}
%     \item[(1)] \textbf{SGD regime}, where the network is trained via stochastic gradient descent (SGD).
%     \item[(2)] \textbf{Random features (RF) regime} wherein only the output weight $a$ is optimized, while the hidden weights $w_j$ are frozen at their random values at initialization.
%     \item[(3)] \textbf{Neural Tangent (NT) regime}, wherein only the hidden weights $w_j$ are trained.
%     \item[(4)] \textbf{Init regime}, wherein the neural network is not trained at all, but frozen at the random values of its output and hidden weights at initialization.
% \end{itemize}

% Just as with test error as shown in \cite{Ghorbani19},
The parameter $\rho$ will play a crucial role in our analysis of robustness, similar to~\cite{Ghorbani19} for approximation.
% We wish to understand the interplay between robustness and generalization, and uncover the impact of problem specific parameters like the rate of over-parametrization $\rho$ and the shape of the coefficient matrix $B$ on the picture.

% % \paragraph{Complete picture of robustness of neural networks in different regimes}
% \paragraph{Analytic formula for nonrobustness.}
% \lorem

% \paragraph{Tradeoffs between generalization and robustness.}
% \lorem 

% \paragraph{Lazy regimes and the influence of random inialization.}
% \lorem

\subsection{Metrics for test error and robustness}
\label{subsec:metrics}
\paragraph{Test error.}
The test / approximation error of a function $f:\mathbb R^d \to \mathbb R$ (e.g a neural network) is
\begin{eqnarray}
\label{eq:generr}
\varepsilon_{{\rm test}}(f)
% := \|f-f_\star\|_{L^2(N(0,I_d))}^2
:= \|f-f_\star\|_{L^2(N(0,I_d))}^2 = \mathbb E_{x \sim N(0,I_d)}|f(x)-f_\star(x)|^2,
\end{eqnarray}
and measures how well $f$ approximates the ground-truth function $f^\star$ w.r.t to the data distribution $N(0,I_d)$.
% where $\|g\|_{L^2(N(0,I_d))}^2 := \mathbb E_{x \sim N(0,I_d)}|g(x)|^2$.
It will be instructive to compare the test error of a model with that of the \emph{null predictor}, which outputs $0$ on every input $x$.
% , i.e., $\|f_\star\|_{L^2(N(0,I_d))}^2$.
To this end, we will usually consider the \emph{normalized} test error,
\begin{eqnarray}
\egen(f) := \varepsilon_{{\rm test}}(f)/\|f_\star\|_{L^2(N(0,I_d))}^2.
% \,\text{ where }\|f_\star\|_{L^2(N(0,I_d))}^2 := \mathbb E_{x \sim N(0,I_d)}|f(x)|^2
\end{eqnarray}
% \paragraph{Parametrization rate.}
This quantity was studied in \cite{Ghorbani19} where explicit analytic formulae were obtained in various regimes of interest for two-layer networks: networks fully trained by stochastic gradient-descent (SGD) on the population risk, random features (RF), and neural tangent (NT).
We shall consider these same regimes and establish tradeoffs between test error and robustness of the corresponding models.
% \abdelete{We shall consider these same regimes and study the robustness of the resulting models.}

% \input{table.tex}

\paragraph{Measure of robustness / stability.}
% \abcomment{for consistency, this should also be a paragraph? otherwise, change title of previous section to "Measure of generalization"}
We will measure the robustness of a (weakly) differentiable
% \footnote{We can replace continous differentiability on $\mathbb R^d$ with much weaker notions of differentiability, but such general considerations would get us a bit astray. For example, it is sufficient for to require $f \in \mathcal C^1(\Omega)$, for some measurable $\Omega \subseteq \mathbb R^d$ such that $\mu(\Omega)=1$. For example, a neural net with ReLU activation function, though not studied in this paper.}
function $f:\mathbb R^d \to \mathbb R$ (e.g., the two-layer neural net \eqref{eq:nn}) by the square-root of its Dirichlet energy (aka squared Sobolev-seminorm) $\mathfrak{S}(f)^2$ w.r.t. to a random test point $x \sim N(0,I_d)$, defined by setting
\begin{eqnarray}
\label{eq:sob2}
\mathfrak{S}(f)^2 := \|\nabla_x f\|_{L^2(N(0,I_d))}^2 = \mathbb E_{x \sim N(0,I_d)}\|\nabla_x f(x)\|^2.
\end{eqnarray}
% The factor $1/4$ will help simplify subsequent formula, as we will see later.
The smaller the value of $\mathfrak{S}(f)$, the more robust / stable $f$ is to changes in a typical input data point, namely $x \sim N(0,I_d)$ here.
% This can be seen as the regression analog of the \emph{perturbation stability} metric considered in \cite{xyz} in the classification setting.
We justify the choice of this quantity as a measure of robustness in Appendix~\ref{sec:justice}, where we show that it may be viewed as a first-order approximation of the adversarial risk~\citep{madry2017towards}.
Note that, stability as a measure of robustness has been considered in other works like ~\cite{lor,bubeck2021universal} (for regression settings) and \cite{widerwu2021} (for classification settings).
% We argue that $\mathfrak{S}(f)$ is a more reasonable measure of robustness than say, Lipschitz constant (considered in \cite{lor,bubeck2021universal}), because the former takes into account the distribution of the data, while the latter is a worst-case measure. Finally, note note that $\|f\|_{\Lip} \ge \mathfrak{S}(f)$.
% We also note that this metric has been used empirically as a regularizer to impose robustness~\cite{jacobianreg}.\abcomment{Remove this last sentence?}

It will be convenient to compare the robustness of a model $f$  to that of the baseline quadratic ground-truth function $f_\star$ defined in \eqref{eq:truemodel}.
% \begin{restatable}[Normalized measure of nonrobustness]{df}{}
To this end, the \emph{normalized} measure of robustness of $f$ is % denoted $\erob(f)$ is defined by
\begin{eqnarray}
\label{eq:erob}
\erob(f) := \mathfrak{S}(f)^2/\mathfrak{S}(f_\star)^2.
\end{eqnarray}
% \end{restatable}
The objective of our paper is to study the quantity $\erob(f)$ for neural networks \eqref{eq:nn} in various regimes % (SGD, random features (RF), neural tangent (NT), etc.)
in the limit \eqref{eq:proportionate}, and put to light interesting phenomena. It paints a picture complementary to \cite{Ghorbani19}.
% \paragraph{An analytic formula.}

Let us start by deriving an analytic formula for the robustness measure for the neural network general model \eqref{eq:nn}. This result will be exploited in the sequel in the analysis of the different learning regimes we will consider.
% the study of random features regime and models trained via SGD.
\begin{restatable}{lm}{formula}
For the neural net $f_{W,z,s}$ defined in \eqref{eq:nn}, we have the analytic formula
% \begin{eqnarray}
$\mathfrak{S}(f_{W,z,s})^2 = z^\top C z$, % \sum_{j,k=1}^m a_j a_k (w_j^\top w_k)(w_j^\top \Lambda w_k),
% \label{eq:formula}
% \end{eqnarray}
where $C=C(W)$ is the $m \times m$ psd matrix with entries given by (with $x \sim N(0,I_d)$)  
\begin{eqnarray}
c_{j,k}:=(w_j^\top w_k)\mathbb E_x[\sigma'(x^\top w_j)\sigma'(x^\top w_k)]\,\forall j,k \in [m].
\label{eq:randomCbis}
\end{eqnarray}
In particular, for a quadratic activation $\sigma(t) \equiv t^2 + s$, we have
% \begin{eqnarray}
$c_{j,k} = 4(w_j^\top w_k)^2 \,\,\forall j,k \in [m].$
% \end{eqnarray}
\label{lm:formula}
\end{restatable}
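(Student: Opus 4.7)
The plan is a direct computation from the definition of $\mathfrak{S}$ in \eqref{eq:sob2}, using linearity of the gradient and the fact that the offset $s$ contributes nothing.

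First, I would differentiate $f_{W,z,s}$ with respect to $x$. Since the offset $s$ is constant in $x$ and each summand $z_j \sigma(x^\top w_j)$ has gradient $z_j \sigma'(x^\top w_j)\, w_j$ by the chain rule, we obtain
\begin{equation*}
\nabla_x f_{W,z,s}(x) = \sum_{j=1}^m z_j \sigma'(x^\top w_j)\, w_j.
\end{equation*}
Next I would expand the squared norm by taking an inner product of this gradient with itself:
\begin{equation*}
\|\nabla_x f_{W,z,s}(x)\|^2 = \sum_{j,k=1}^m z_j z_k\, \sigma'(x^\top w_j)\sigma'(x^\top w_k)\, (w_j^\top w_k).
\end{equation*}

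Then I would take the expectation over $x \sim N(0,I_d)$ and pull the sum and the deterministic factors $z_j z_k (w_j^\top w_k)$ outside. This yields $\mathfrak{S}(f_{W,z,s})^2 = \sum_{j,k} z_j z_k c_{j,k} = z^\top C z$ with $c_{j,k}$ as stated in \eqref{eq:randomCbis}. Positive semidefiniteness of $C$ is automatic since $z^\top C z = \mathbb E_x \|\nabla_x f_{W,z,s}(x)\|^2 \geq 0$ for every $z$, by construction.

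For the quadratic activation $\sigma(t) \equiv t^2 + s$, we have $\sigma'(t) = 2t$, so $\sigma'(x^\top w_j)\sigma'(x^\top w_k) = 4 (x^\top w_j)(x^\top w_k)$. Using the identity $\mathbb E_{x \sim N(0,I_d)}[(x^\top w_j)(x^\top w_k)] = w_j^\top w_k$ for standard Gaussian inputs, the formula simplifies to $c_{j,k} = 4(w_j^\top w_k)^2$, as claimed.

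There is no real obstacle: the only step requiring care is making sure the interchange of expectation and finite sum is legitimate (it is, since $m$ is finite and each $\sigma'(x^\top w_j)$ is assumed square-integrable under the Gaussian measure, which will be ensured by the integrability assumptions stated in Section \ref{subsec:keyassume}). The argument does not use any Hermite expansion, and the offset $s$ plays no role because the robustness functional $\mathfrak{S}$ is translation-invariant in function space.
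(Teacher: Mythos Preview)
Your proof is correct and follows exactly the same route as the paper: compute the gradient via the chain rule, expand the squared norm as a double sum, take expectations to obtain $z^\top C z$, and specialize to the quadratic activation using $\mathbb E_{x\sim N(0,I_d)}[(x^\top w_j)(x^\top w_k)] = w_j^\top w_k$. The only additions you make (the remark on psd-ness of $C$ and on integrability) are harmless extras.
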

% \begin{restatable}{rmk}{}
% Formula \eqref{eq:formula} is instructive: it says that the (non)robustness of $f_{W,z}$ (as measured by its Dirichlet energy $\mathfrak{S}(f_{W,z})^2$) only depends on the alignment between the (covariance matrix of) the data distribution $\mu$ \abcomment{where is~$\mu$ defined?} and the parameters $w_1,\ldots,w_m$ of the hidden neurons.
% % \end{restatable}

% \input{justification.tex}
% \vspace{-.75cm}
\section{Results for fully-trained networks}
\label{sec:sgd}
Consider a neural network model $f_\sgd:\mathbb R^d \to \mathbb R$ given by
\begin{eqnarray}
f_\sgd(x) := \sum_{j=1}^m (x^\top w_j)^2 + s.
\label{eq:sgd}
\end{eqnarray}
Here, $W=(w_1,\ldots,w_m)\in \mathbb R^{m \times d}$ is a matrix of learnable parameters (one per hidden neuron), and $s \in \mathbb R$ is a learnable offset. The output weights vector is fixed to $z = 1_m$, while $W$ and $c$ are optimized via SGD, where each update is on a single new sample point.

It is shown in Thm. 3 of \cite{Ghorbani19} that if $W_t \in \mathbb R^{m \times d}$ is the matrix of hidden parameters after $t$ steps of SGD, then in the limit \eqref{eq:proportionate}, the matrix $W_tW_t^\top \in \mathbb R^{m \times m}$ converges almost-surely to the best rank-$m$ approximation of $B$. Here, the almost-sure convergence is w.r.t randomness in the training data and in the initial random state of the parameters of the hidden neurons.  Thus, by continuity of matrix norms, we deduce that $\|W_tW_t^\top\|^2_F$ converges a.s to % $\sum_{k=1}^{m \land d}\lambda_k(B)^2 =:
$\|B\|_{F,m}^2$, in the infinite data limit $t \to \infty$.

Combining with Lemma \ref{lm:formula} establishes the following asymptotic formula for the (normalized) robustness of the resulting model $f_\sgd$, in the high-dimensional limit \eqref{eq:proportionate}.
% Combining with Lemma \ref{lm:formula}, we obtain the following result.

% \begin{mdframed}[backgroundcolor=cyan!10,rightline=false,leftline=false]
\begin{restatable}{thm}{}
In the limit \eqref{eq:proportionate}, it holds that
% \begin{eqnarray}
$\erob(f_\sgd)
% := \frac{\mathfrak{S}(f_\sgd)^2}{\mathfrak{S}(f_\star)^2}
\overset{a.s}{\longrightarrow} \dfrac{\|B\|_{F,m}^2}{\|B\|_F^2} = \dfrac{\sum_{k=1}^{m \land d}\lambda_k(B)^2}{\sum_{j=1}^d \lambda_k(B)^2} \le 1,$
% \end{eqnarray}
with equality iff $\rank(B) \le m$.
In particular, if $\rho \ge 1$, then in the limit \eqref{eq:proportionate}, it holds that
$\erob(f_\sgd) \overset{a.s}{\longrightarrow} 1$.
\label{thm:sobsgd}
\end{restatable}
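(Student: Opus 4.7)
The plan is to assemble the result from three pieces: (i) the closed-form expression for $\mathfrak{S}(f_\sgd)^2$ obtained by specializing Lemma \ref{lm:formula}; (ii) an exact computation of the denominator $\mathfrak{S}(f_\star)^2$; and (iii) the almost-sure convergence of $W_tW_t^\top$ to the best rank-$m$ approximation of $B$, which is cited from Theorem 3 of \cite{Ghorbani19}.

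First, I would apply Lemma \ref{lm:formula} to $f_\sgd$ in \eqref{eq:sgd}. Since $f_\sgd$ uses the quadratic activation $\sigma(t)=t^2$ with output weights $z=1_m$, the lemma gives $c_{j,k}=4(w_j^\top w_k)^2$, and since the additive offset $s$ has zero gradient it does not enter $\mathfrak{S}$. Therefore
\begin{eqnarray}
\mathfrak{S}(f_\sgd)^2 \;=\; 1_m^\top C\, 1_m \;=\; 4\sum_{j,k=1}^m (w_j^\top w_k)^2 \;=\; 4\,\|WW^\top\|_F^2.
\end{eqnarray}
Next, for the target $f_\star(x)=x^\top B x + b_0$ with symmetric $B$, one has $\nabla_x f_\star(x)=2Bx$, so by Gaussian integration
\begin{eqnarray}
\mathfrak{S}(f_\star)^2 \;=\; \mathbb E_{x \sim N(0,I_d)}\|2Bx\|^2 \;=\; 4\,\trace(B^2) \;=\; 4\,\|B\|_F^2.
\end{eqnarray}

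The third ingredient is the cited convergence $W_t W_t^\top \to B_{[m]}$ almost surely in the limit \eqref{eq:proportionate}, where $B_{[m]}$ denotes the best rank-$m$ approximation of $B$ in Frobenius norm. Since the Frobenius norm is continuous and $\|B_{[m]}\|_F^2$ equals the sum of the top $m \wedge d$ squared eigenvalues of the psd matrix $B$, i.e., $\|B\|_{F,m}^2$, we obtain
\begin{eqnarray}
\|W_tW_t^\top\|_F^2 \;\overset{a.s}{\longrightarrow}\; \|B\|_{F,m}^2.
\end{eqnarray}
Dividing the two displays above and taking limits yields $\erob(f_\sgd)\overset{a.s}{\to} \|B\|_{F,m}^2/\|B\|_F^2$, with the eigenvalue expansion following from $\|B\|_F^2=\sum_k \lambda_k(B)^2$ and the analogous truncated identity.

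Finally, I would record the two easy corollaries. The ratio is $\le 1$ with equality iff $\sum_{k=m+1}^d \lambda_k(B)^2=0$, i.e., iff $B$ has rank at most $m$. When $\rho\ge 1$, eventually $m\ge d$ so $m\wedge d=d$ and $\|B\|_{F,m}=\|B\|_F$, giving $\erob(f_\sgd)\overset{a.s}{\to}1$. There is no serious obstacle in this proof; the only subtlety is making sure the continuous-mapping step goes through, which is immediate because $\|\cdot\|_F^2$ is a continuous function on $\mathbb R^{m\times m}$ and $m$ is finite at each stage of the limit.
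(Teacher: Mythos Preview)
Your proposal is correct and follows essentially the same route as the paper: it combines Lemma \ref{lm:formula} (specialized to the quadratic activation with $z=1_m$) to get $\mathfrak{S}(f_\sgd)^2=4\|WW^\top\|_F^2$, the identity $\mathfrak{S}(f_\star)^2=4\|B\|_F^2$, and the cited a.s.\ convergence of $W_tW_t^\top$ from \cite{Ghorbani19} together with continuity of the Frobenius norm. There is no meaningful difference in approach.
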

% \end{mdframed}
\vspace{-.5cm}
\paragraph{Tradeoff approximation and robustness.} We see from the above theorem that the robustness measure for the SGD model converges to that of the true model if $m \ge d$, namely $\mathfrak{S}(f_\sgd)^2 \to \sum_{j=1}^d \lambda_j(B)^2 = \mathfrak{S}(f_\star)^2$ if $m \ge \rank(B)$ (for example, if $m \ge d$). Otherwise, if $m \le \rank(B)$, then the limiting value of $\mathfrak{S}(f_\sgd)^2$ can be arbitrarily less than $\mathfrak{S}(f_\star)^2$, i.e., the SGD model will be much more robust (i.e., stable) than the ground truth model. Comparing with Thm. 3 and Prop. 1 of \cite{Ghorbani19}, we can see that this increase in robustness is at the expense of increased test error. Indeed, it was shown in the aforementioned paper that
the normalized test error $\egen(f_\sgd)$ verifies
\begin{eqnarray}
\label{eq:sgdgenerr}
\egen(f_\sgd) \overset{p}{\longrightarrow} 1-\frac{\|B\|_{F,m}^2}{\|B\|_F^2},\text{ in the limit }\eqref{eq:proportionate}
\end{eqnarray}
Combining with our Thm. \ref{thm:sobsgd}, we deduce that
\begin{eqnarray}
\egen(f_\sgd) + \erob(f_\sgd) \overset{p}{\longrightarrow} 1,\text{ in the limit }\eqref{eq:proportionate}.
\label{eq:nnnfl}
\end{eqnarray}
The above formula highlights a tradeoff between test error and robustness. Thus, we have identified a novel tradeoff between approximation (test error) and robustness for the neural network model \eqref{eq:nn} trained via SGD. In the sequel, we shall establish such tradeoffs for other learning regimes.
% \mytodo{[Elvis] Analyze the under-parametrized case $m < d$.}

% \mytodo{Cite order papers which have identified tradeoff phenomena, e.g papers by Tsipras et al.; Kamalika et al; and Amin Karbasi et al.}

\section{Results for the random features (RF) model}
\label{sec:rf}
% \subsection{Preliminaries}
Consider the two-layer model \eqref{eq:nn} with hidden neuron parameters $w_1,\ldots,w_m$ sampled iid from a $d$-dimensional multivariate Gaussian distribution $N(0,\Gamma)$ with covariance matrix $\Gamma$. We denote this model $f_\rf$, which is thus given by
\begin{eqnarray}
\label{eq:rf}
f_\rf(x) = f_{W,z_\rf}(x)= z_\rf^\top \sigma(Wx),
\end{eqnarray}
where $z_\rf \in \mathbb R^m$ solves the following linear regression problem
\begin{eqnarray}
\arg\min_{z \in \mathbb R^m}\mathbb E_{x \sim N(0,I_d)}[(z^\top \sigma(Wx)-f_\star(x))^2].
% = \min_{a \in \mathbb R^m}\mathbb E_{x \sim N(0,I_d)}(f_\star(x)-f_\rf)^2.
\end{eqnarray}
 The covariance matrix $\Gamma$ encompasses the inductive bias of the neurons at initialization to different directions in feature space. Define the alignment $\alpha=\alpha(B,\Gamma)$ of the hidden neurons to the task at hand, namely learning the function $f_\star$, as follows
\begin{eqnarray}
\label{eq:alignment}
\alpha := \frac{\trace(B\Gamma)}{\|B\|_F\|\Gamma\|_F} \le 1.
\end{eqnarray}
% where $B$ is the coefficient matrix in the quadratic ground-truth model $f_\star$ given in \eqref{eq:truemodel}.
As we shall see, the task-alignment $\alpha$ plays a crucial role in the prediction performance (test error) and robustness dynamics of the resulting model, in the learning regimes considered in our work.
% \abcomment{this is the first time we introduce~$\Gamma$, which plays an important role throughout the paper. Maybe comment on its role, and the relationship to changing the covariance of input data.}

\subsection{Assumptions and key quantities}
\label{subsec:keyassume}
As in \cite{Ghorbani19}, we will need the following technical conditions in the current section and subsection sections of this article.
\begin{restatable}{cond}{}
The covariance matrix $\Gamma$ of the hidden neurons satisfies the following: (A) $\trace(\Gamma) = 1$ and $d\cdot\|\Gamma\|_{op} = \mathcal O(1)$.
(B) The empirical eigenvalue distribution of $d \cdot \Gamma$ converges weakly to a probability distribution $\mathcal D$ on $\mathbb R_+$.
\label{cond:traces} 
\end{restatable}
This condition is quite reasonable, and moreover, it allows us to leverage standard tools from random matrix theory (RMT) in our analysis. We will also need the following condition.
\begin{restatable}{cond}{}
The activation function $\sigma$ is weakly continuously-differentiable and satisfies the growth condition $\sigma(t)^2 \le c_0e^{c_1 t^2}$ for some $c_0>0$ and $c_1 < 1$, and for all $t \in \mathbb R$. Moreover, $\sigma$ is not a purely affine function.
\label{cond:growth}
\end{restatable}
The above growth condition is a classical condition usually imposed for the theoretical analysis of neural networks~\citep[see, e.g.,][]{Ghorbani19,Mei2019,montanari2020}, and is satisfied by all popular activation functions used in practice. One of its main purposes is to ensure that all the Hermite coefficients of the activation function exist.
Finally, we will assume the following.

\begin{restatable}{cond}{}
(A) $\lambda_0:=\lambda_0(\sigma) = 0$. (B) $\lambda_2:=\lambda_2(\sigma) \ne 0$.
\label{cond:nonlinear}
\end{restatable}
Part (A) of this condition was introduced by~\cite{Ghorbani19} to simplify the analysis of the test error of the random features model $f_\rf$. Part (B) ensures that $f_\rf$ does not degenerate to the null predictor.

% \mytodo{This condition is strictly not required.}
% The following result is an extension of Theorem \ref{thm:quadrfratio} to more general activation functions. It is one of our main contributions.

% \lorem 

% \begin{restatable}{cond}{}
% The alignment $\alpha$ defined in \eqref{eq:alignment} is lower-bounded in the limit $d \to \infty$.
% \end{restatable}
% \abdelete{\subsection{Key scalars and matrices}}

\begin{restatable}{df}{keyconstants}
Define the following scalars.
\begin{eqnarray}
\begin{split}
    \overline{\lambda} &:= \|\sigma\|^2 - \lambda_1^2,\,
    \kappa := \lambda_2^2\|\Gamma\|_F^2 d/2,\,
    \tau := \lambda_2\trace(B\Gamma)\sqrt{d},\\
    \overline{\lambda'} &:= \|\sigma'\|^2_{L^2(N(0,I_d))} - \lambda_1^2,\,
    \kappa' := \lambda_3^2\|\Gamma\|_F^2d/2.
\end{split}
\label{eq:allcoefs}
\end{eqnarray}
\end{restatable}
These coefficients will turn out to be “sufficient statistics” which completely capture the influence of activation function $\sigma$ on the robustness of the random features model $f_\rf$.  Note that by construction, $\overline{\lambda}$, $\kappa$, $\overline{\lambda'}$, and $\kappa'$ are nonnegative.
It is easily seen that for $x\sim N(0,I_d)$, and ground-truth model $f_\star$ defined in \eqref{eq:truemodel},
% \begin{eqnarray}
$z_\rf = U^{-1} v$,
%\label{eq:fittedarf}
%\end{eqnarray}
% where $U$ the random $m \times m$ matrix and the random $m$-dimensional vector $v$ are defined by % \abcomment{explain why~$U$ is always invertible, does this require assumptions on~$\Gamma$?}
\begin{align}
U_{jk} &:= \mathbb E_{x}[\sigma(x^\top w_j)\sigma(x^\top w_k)]\label{eq:randomU}\,\forall j,k \in [m],\\
v_j &:= \mathbb E_{x }[f_\star(x)\sigma(x^\top w_j)]\label{eq:randomv}\,\forall j \in [m].
\end{align}
%\abdelete{Note that $U$ is invertible thanks to Lm. 2 of the same paper.(mieux vaux rien mettre que pointer vers un lemme obscure dans le main paper?)}
% Note that the output weights vector for the random features model $f_\rf$ is given by
% The test error was computed in \cite{Ghorbani19}. Our objective in this section is to compute the robustness of $f_\rf$.
% \subsection{Main results for RF}

Now, consider the random psd matrices
% $A_0$ and $D_0$ defined by
\begin{eqnarray}
\begin{split}
A_0 &:= \overline{\lambda}I_m+\lambda_1^2 \Theta,\,
D_0 := \overline{\lambda'}I_m + (\kappa'/d + \lambda_1^2)\Theta,
\end{split}
\label{eq:A0D0}
\end{eqnarray}
with $\Theta:=WW^\top \in \mathbb R^{m \times m}$.
By standard RMT arguments, one can show that in the limit \eqref{eq:proportionate}, the random scalars $\trace(A_0^{-1})/d$ and $\trace(A_0^{-2}D_0)/d$ converge a.s to deterministic values. Denote these limiting values by $\psi_1$ and $\psi_2$, i.e.,
\begin{eqnarray}
%\begin{split}
\psi_1 := \lim_{\substack{m,d \to \infty\\d/m \to \rho}}\frac{\trace(A_0^{-1})}{d},\,
\psi_2 := \lim_{\substack{m,d \to \infty\\d/m \to \rho}}\frac{\trace(A_0^{-2}D_0)}{d}.
%\end{split}
\label{eq:psis}
\end{eqnarray}
Also, since $\overline{\lambda}$ and $\overline{\lambda'}$ are strictly positive (because $\sigma$ is not purely affine, by Condition \ref{cond:growth}), so are $\psi_1$ and $\psi_2$.
Moreover, using the so-called \emph{Silverstein fixed-point equation} from random matrix theory \citep{silverstein95,Ledoit2011,Dobriban2018}, it can be shown that $\psi_1$ and $\psi_2$ only depend on (i) the aspect ratio $\rho$, and (ii) the limiting eigenvalue distribution $\mathcal D$ of the rescaled covariance matrix $d\cdot \Gamma$ of the hidden neurons at initialization\footnote{For example, see \cite[Lemma 6]{Ghorbani19} for the case of $\psi_1$.}.
% \mytodo{Recall the exact fixed-point equations here!}
The scalars defined in \eqref{eq:allcoefs}, together with $\psi_{1,2}$ will play a crucial role in our analysis.

\subsection{Test error / prediction performance in RF regime}
We recall that the (normalized) test error $\egen(f_\rf)$ of the random features model $f_\rf$ was completely analyzed in \cite{Ghorbani19}. Indeed, it was established in Theorem 1 of \cite{Ghorbani19} that the following approximation holds
% the normalized test error verifies
\begin{eqnarray}
\begin{split}
\egen(f_\rf)
% &:= \frac{\varepsilon_{\mathrm{gen}}(f_\rf)}{\|f_\star\|_{L^2(N(0,I_d))}^2}\\
&= 1-\frac{\psi_1\tau^2}{\|B\|_F^2(2\kappa\psi_1 + 2)} + o_{d,\mathbb P}(1),\text{ in the limit \eqref{eq:proportionate}}.
\end{split}
\label{eq:rfgenerr}
\end{eqnarray}

Thus, the (normalized) test error only depends on the aspect ratio $\rho$, the limiting spectral distribution $\mathcal D$ of $d\cdot \Gamma$, and the scale parameters $(\overline{\lambda},\kappa,\tau)$ defined in \eqref{eq:allcoefs}. 
% where $\psi$ is the unique positive solution of the Silverstein fixed-point equation.
It was further shown in \cite{Ghorbani19} that, if the task-alignment $\alpha$ of the hidden neurons defined in \eqref{eq:alignment}, admits a limit $\alpha_\infty$ when $d \to \infty$, then w.p 1
\begin{eqnarray}
\label{eq:limitingenerror}
\lim_{\rho \to \infty}\lim_{\substack{m,d \to \infty\\m/d \to \rho}}  \egen(f_\rf) = 1-\alpha_\infty^2.
\end{eqnarray}
Thus, \eqref{eq:rfgenerr} predicts that the (normalized) test error $\egen(f_\rf)$ vanishes if (i) $\Gamma \propto B$ and (ii) the number of neurons per input dimension $m/d$ diverges.

\begin{figure}[t]
    \centering
    \includegraphics[width=.9\linewidth]{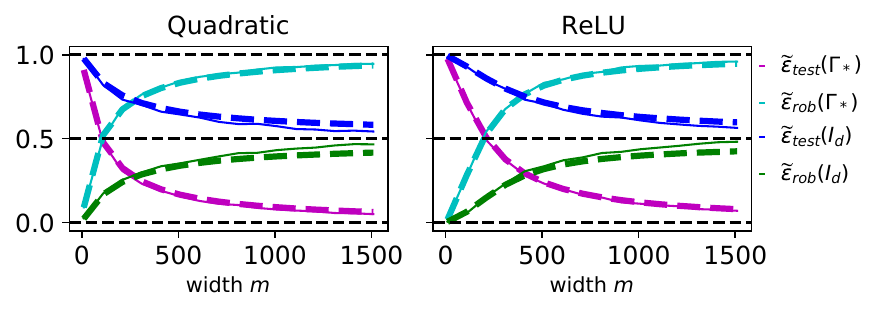}
    % {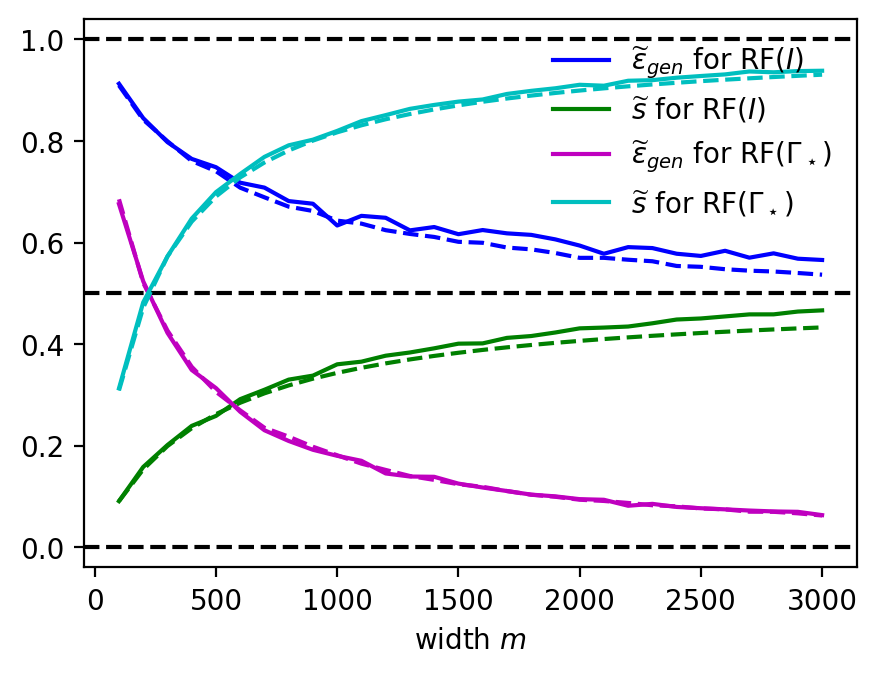}
    \vspace{-.4cm}
    \caption{Empirical validation of Theorem \ref{thm:rfratio} and Corollary \ref{cor:rfratio}. Showing (normalized) test / test error $\egen$ and robustness $\erob$ of random features model \eqref{eq:rf} as a function of the number of the network width, for different choices of the covariance matrix $\Gamma$ of the random weights of the hidden neurons: the optimal choice $\Gamma_\star \propto B$ and the naive choice $I_d$. Here, the input-dimension is $d=450$ and the regularization $\lambda$ is zero. Horizontal broken lines correspond to asymptotes at $\alpha_\infty^2$ at $1-\alpha_\infty^2$, where $\alpha_\infty := \lim_{d \to \infty}\trace(B\Gamma)/(\|B\|_F\|\Gamma\|_F)$ is the level of task-alignment of the covariance matrix $\Gamma$ of hidden neurons, w.r.t learning the ground-truth function $f_\star$ defined in \eqref{eq:truemodel}. Broken curves are theoretical predictions, while solid curves correspond to actual experiments.
    % \mytodo{Include a subfig for ReLU activation too!}
    % \mytodo{Add error bars}. \mytodo{Change $\widetilde{s}$ to $\erob$.}
    }
    \label{fig:rf}
\end{figure}
\subsection{Analysis of robustness in RF regime}
The following result establishes an analytic formula for the robustness in the RF regime.
% \begin{mdframed}[backgroundcolor=cyan!10,rightline=false,leftline=false]
\begin{restatable}{thm}{rfratio}
% Let the covariance matrix of the data be $\Lambda \propto I_d$ and suppose Conditon \ref{cond:traces} is in order.
Consider the random features model $f_\rf$ \eqref{eq:rf}, with covariance matrix $\Gamma$ satisfying Condition \ref{cond:traces} and activation function $\sigma$ satisfying Conditions \ref{cond:growth} and \ref{cond:nonlinear}.
% Let $\lambda \ge 0$ be a ridge-regularization parameter in the problem of estimating the output weights $a$ of $f_\rf$. 

(A) In the limit \eqref{eq:proportionate}, we have the following approximation
\begin{eqnarray}
% \begin{split}
\erob(f_\rf)
% &:= \dfrac{\mathfrak{S}(f_\rf)^2}{\mathfrak{S}(f_\star)^2}\\
=
\frac{\tau^2(2\kappa \psi_1^2 + \psi_2)}{\|B\|^2_F(2\kappa \psi_1+2)^2} + o_{d,\mathbb P}(1).
% \beta^2 + (\alpha-\beta)^2 \cdot \kappa \trace(A_0^{-2}D_0)/d + o_{d,\mathbb P}(1),
%\end{split}
\label{eq:rfratio}
\end{eqnarray}
% where the (random) scalar $\beta$ is defined by
% \begin{eqnarray}
% \beta = \beta(B,\Gamma) := (1-\widetilde{\varepsilon}_{{\rm gen}}(f_\rf))/\alpha,
% \end{eqnarray}
% and $\alpha$ is the "alignment" defined in \eqref{eq:alignment}.
% where $\varepsilon_{\mathrm{gen}}(f_\rf) := \mathbb E_{x \sim N(0,I_d)}[(f_\rf(x)-f_\star(x))^2]$ is the test error of the RF model, and 
(B) If the alignment $\alpha$ admits a limit $\alpha_\infty$ as $d \to \infty$, then w.p $1$ it holds that
\begin{eqnarray}
\lim_{\rho \to \infty}\lim\limits_{\substack{m,d \to \infty\\ m/d \to \rho}}\erob(f_\rf) = \alpha_\infty^2 \le 1.
\end{eqnarray}
In particular, for the optimal choice of $\Gamma$ in terms of test error, namely $\Gamma \propto B$, it holds w.p.~$1$ that
\begin{eqnarray}
\lim_{\rho \to \infty}\lim\limits_{\substack{m,d \to \infty\\ m/d \to \rho}}\erob(f_\rf) = 1.
\end{eqnarray}
\label{thm:rfratio}
\end{restatable}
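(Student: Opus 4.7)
\textbf{Proof plan for Theorem \ref{thm:rfratio}.}

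The starting point is the formula $\mathfrak{S}(f_\rf)^2 = z_\rf^\top C\,z_\rf$ supplied by Lemma \ref{lm:formula}, combined with the closed form $z_\rf = U^{-1} v$, where $U,v$ are the Gram matrix and cross-correlation vector defined in \eqref{eq:randomU}--\eqref{eq:randomv}. Thus the whole task reduces to computing the quadratic form $v^\top U^{-1} C U^{-1} v$ in the asymptotic regime \eqref{eq:proportionate}, and then normalizing by $\mathfrak{S}(f_\star)^2 = 4\|B\|_F^2$ (using the quadratic-activation case of Lemma \ref{lm:formula} applied to $f_\star$).

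The first main block is to produce deterministic linearizations of $U$, $C$ and $v$. For the hidden-layer parameters $w_j \sim N(0,\Gamma)$ with $\trace\Gamma = 1$ and $d\|\Gamma\|_{\mathrm{op}} = \mathcal O(1)$, one has $\|w_j\|^2 = 1 + o_{d,\mathbb P}(1)$, so a Hermite expansion of $\sigma$ (as in \cite{Ghorbani19}) gives the decomposition $U = A_0 + \lambda_2^2 Q + E_U$, where $Q_{jk} := (w_j^\top w_k)^2/2$ captures the second Hermite component and $E_U$ is a negligible remainder. Similarly, integration by parts for Hermite coefficients ($\lambda_k(\sigma') = \sqrt{k+1}\,\lambda_{k+1}(\sigma)$) combined with the weight factor $w_j^\top w_k$ in \eqref{eq:randomCbis} yields $C = D_0 + \lambda_3^2 Q' + E_C$ for a similar quadratic piece $Q'$. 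On the target side, a direct Wick/Stein computation gives $v_j = \sqrt{2}\,\lambda_2\, w_j^\top B w_j + o_{d,\mathbb P}(\cdot)$, so $v$ lies, up to vanishing error, in the column space of the Hermite-quadratic block. The key structural observation (also used in \cite{Ghorbani19}) is that $Q$ is effectively rank-one in the relevant subspace when viewed against $v$, so that after a Sherman--Morrison identity
\[
U^{-1}v = A_0^{-1}v \;-\; \frac{\lambda_2^2\,A_0^{-1}Q\,A_0^{-1}v}{1+\lambda_2^2\,\trace(A_0^{-1}Q)/\ldots},
\]
the quadratic form $v^\top U^{-1} C U^{-1} v$ collapses to an expression involving only $\trace(A_0^{-1})/d$, $\trace(A_0^{-2}D_0)/d$, $\trace(A_0^{-1}Q)/d$ and $v^\top A_0^{-1} Q A_0^{-1} v$.

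The second main block is the random-matrix step. Since $A_0$ and $D_0$ are functions of $\Theta = WW^\top$ with $W$ Gaussian, standard RMT (Silverstein fixed-point equation as in \cite{Ledoit2011,Dobriban2018}, as used already in \cite{Ghorbani19} for $\psi_1$) gives the a.s.\ deterministic limits $\psi_1$ and $\psi_2$ in \eqref{eq:psis}. Analogous concentration arguments show that the other trace-like quantities concentrate around multiples of $\kappa = \lambda_2^2\|\Gamma\|_F^2 d/2$ and $\tau = \lambda_2\trace(B\Gamma)\sqrt d$. Assembling these limits and dividing by $4\|B\|_F^2$ produces the announced asymptotic
\[
\erob(f_\rf) = \frac{\tau^2\,(2\kappa\psi_1^2 + \psi_2)}{\|B\|_F^2\,(2\kappa\psi_1 + 2)^2} + o_{d,\mathbb P}(1),
\]
which is Part (A).

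Part (B) is then a clean $\rho\to\infty$ computation given Part (A). As $\rho\to\infty$, $\Theta = WW^\top$ has a growing kernel ($m-d$ zero eigenvalues), so $\trace(A_0^{-1})/d$ diverges and hence $\psi_1 \to \infty$; the ratio in Part (A) is then dominated by its leading order in $\psi_1$, giving the limit
\[
\frac{\tau^2}{2\kappa\|B\|_F^2} \;=\; \frac{\trace(B\Gamma)^2}{\|B\|_F^2\,\|\Gamma\|_F^2} \;=\; \alpha^2,
\]
after substituting the definitions of $\tau$ and $\kappa$. Passing to the limit $\alpha \to \alpha_\infty$ yields the first claim of (B), and the choice $\Gamma \propto B$ gives $\alpha = 1$, which yields the second claim.

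\textbf{Main obstacle.} The routine part is the $\rho\to\infty$ computation in Part (B); the genuine difficulty is Part (A), specifically showing that the Hermite-quadratic pieces of $U$ and $C$ can be handled by a single Sherman--Morrison correction and that all remainder matrices $E_U$, $E_C$ and the discrepancy between $v$ and its Hermite-quadratic approximation are negligible in the operator norm after sandwiching with $U^{-1}$. This is the same type of delicate estimate carried out in \cite{Ghorbani19} for the generalization error \eqref{eq:rfgenerr}, but must now be redone for the \emph{gradient} kernel $C$ rather than $U$; the extra factor $w_j^\top w_k$ in \eqref{eq:randomCbis} is what makes the leading coefficient of the $D_0$ block involve $\|\sigma'\|^2 = \sum_k(k+1)\lambda_{k+1}^2$ rather than $\|\sigma\|^2$, and controlling the anisotropic error introduced by this weight is the step one has to be careful about.
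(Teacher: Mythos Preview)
Your plan is essentially the paper's: linearize $U$, $v$ and $C$ (the paper writes $C=(WW^\top)\circ U'$ with $U'$ the Gram matrix for $\sigma'$, reuses the Ghorbani-et-al.\ linearization on $U'$, and then applies the El~Karoui approximation $(WW^\top)^{\circ 2}\approx I_m+\|\Gamma\|_F^2\,1_m1_m^\top$), after which two Sherman--Morrison steps peel off the $\mu\mu^\top$ and $(\kappa/d)1_m1_m^\top$ rank-one pieces and reduce everything to $\psi_1,\psi_2$. One bookkeeping slip to fix: the surviving rank-one correction in $C_0$ carries coefficient $2\kappa/d=\lambda_2^2\|\Gamma\|_F^2$, not $\lambda_3^2$ --- it is the $\lambda_1(\sigma')^2\,WW^\top$ term of $U'$ that becomes quadratic after the Hadamard product with $WW^\top$, while the $\lambda_3$-contribution lands inside $D_0$ as the $\kappa'/d$ coefficient of $WW^\top$; keeping this straight is exactly what produces the $2\kappa\psi_1^2$ in the numerator of \eqref{eq:rfratio}.
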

\vspace{-.5cm}
Thus, the robustness only depends on the aspect ratio $\rho$, the limiting spectral distribution $\mathcal D$ of $d\cdot \Gamma$ (via $\psi_1$ and $\psi_2$), and the scale parameters defined in \eqref{eq:allcoefs}. The theorem is proved in the Appendix~\ref{subsec:rf_proof}. 
% \begin{restatable}{rmk}{}

\paragraph{Tradeoff between approximation and robustness.}
We deduce from the above theorem that in the regime \eqref{eq:proportionate}, the random features model $f_\rf$ is more robust (i.e., stable) than the ground-truth model $f_\star$.
%This is understood by the fact that the inductive bias of the former is towards linear models.
Interestingly, we see that this gap in robustness between the two models closes with increasing alignment $\alpha$ between the covariance matrix of the random features $\Gamma$ and the ground-truth matrix $B$.
% This is unlike the test error of $f_\rf$ which increases with increased alignment between $\Gamma$ and $B$, as can be see from \eqref{eq:limitingenerror}.
Comparing with \eqref{eq:limitingenerror}, we obtain the following relationship,
\begin{eqnarray}
\egen(f_\rf) + \erob(f_\rf) \to 1,\text{ in the limit }\eqref{eq:proportionate},
\label{eq:rfnfl}
\end{eqnarray}
which trades-off between the normalized test error  $\widetilde{\varepsilon}_{{\rm test}}(f_\rf)$ (defined in \eqref{eq:rfgenerr}) and the normalized robustness $\erob(f_\rf)$ of $f_\rf$.
Thus, we have identified another novel trade-off between the test error and the robustness in random features models. % in the extremely over-parametrized ($\rho \to \infty$) regime with infinite training data.

% \lorem

% \begin{figure}
%     \centering
%     \includegraphics[width=.7\linewidth]{rf_sob2.png}
%     % \includegraphics[width=.48\linewidth]{rf_err.png}
%     \caption{Empirical validation of results for RF model with quadratic activation function. For this experiment, the input-dimension is $d=200$. Notice how the normalized test error and the normalized nonrobustness as mirror reflections of one another about the horizontal line at 0.5, reminiscent of a generalization / robustness compromise, and in accordance to \eqref{eq:rfnfl}.}
%     \label{fig:my_label}
% \end{figure}

\paragraph{The case of quadratic activations.}
We now specialize Theorem \ref{thm:rfratio} to the case of the quadratic activation function and obtain more transparent formulae.
% \begin{mdframed}[backgroundcolor=cyan!10,rightline=false,leftline=false]
\begin{restatable}{cor}{corrfratio}
Consider the random features model $f_\rf$ with covariance matrix $\Gamma$ satisfying Condition \ref{cond:traces} and quadratic activation function $\sigma(t) := t^2 - 1$. Then, in the limit \eqref{eq:proportionate}, it holds that
% \begin{eqnarray}
$\erob(f_\rf) = \dfrac{\trace(B\Gamma)^2\|\Gamma\|_F^2}{(2/m+\|\Gamma\|_F^2)^2\|B\|_F^2} + o_{d,\mathbb P}(1).$
% \end{eqnarray}
Furthermore, parts (B) and (C) of Theorem \ref{thm:rfratio} hold.
\label{cor:rfratio}
\end{restatable}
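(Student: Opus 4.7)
The strategy is to specialize the general formula of Theorem \ref{thm:rfratio}(A) to the quadratic activation $\sigma(t) = t^2 - 1$, which coincides (up to normalization) with the Hermite polynomial $\mathrm{He}_2$. First I would compute its Hermite coefficients. Since $\sigma$ is of degree $2$ and proportional to $\mathrm{He}_2$, all coefficients $\lambda_k(\sigma)$ vanish except at $k = 2$; in particular $\lambda_0 = 0$, $\lambda_1 = 0$, and $\lambda_k = 0$ for every $k \geq 3$. Conditions \ref{cond:growth} and \ref{cond:nonlinear} are readily verified.

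Next I would evaluate the sufficient statistics of Definition \ref{eq:allcoefs}. Direct computation gives $\overline{\lambda} = \|\sigma\|_{L^2(N(0,1))}^2 = 2$, $\overline{\lambda'} = \|\sigma'\|_{L^2(N(0,1))}^2 = 4$, and $\kappa' = 0$ (since $\lambda_3 = 0$), while $\kappa$ and $\tau$ reduce to explicit multiples of $\|\Gamma\|_F^2 d$ and $\trace(B\Gamma)\sqrt{d}$ respectively. The key simplification afforded by the quadratic activation is that the random matrices in \eqref{eq:A0D0} degenerate to deterministic scalar multiples of the identity: $A_0 = \overline{\lambda}\, I_m$ (because $\lambda_1^2 = 0$ kills the $\Theta$ term) and $D_0 = \overline{\lambda'}\, I_m$ (because additionally $\kappa'/d = 0$). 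Consequently, the traces in \eqref{eq:psis} are computable in closed form for every finite $m, d$: $\trace(A_0^{-1})/d = m/(d\overline{\lambda})$ and $\trace(A_0^{-2} D_0)/d = m\overline{\lambda'}/(d \overline{\lambda}^2)$, with no need for any Silverstein fixed-point equation or Marchenko--Pastur analysis. In the limit \eqref{eq:proportionate}, one obtains $\psi_1 = \rho/2$ and $\psi_2 = \rho$ exactly.

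Finally, I would substitute the explicit $\kappa$, $\tau$, $\psi_1$, $\psi_2$ into the master formula from Theorem \ref{thm:rfratio}(A) and simplify algebraically. A factor of $(2\kappa\psi_1 + 2)$ cancels between the numerator $\tau^2(2\kappa\psi_1^2 + \psi_2)$ and one power of the denominator $(2\kappa\psi_1 + 2)^2$, the $\lambda_2^2$ prefactors drop out, and gathering the remaining terms into the dimensional combination $\|\Gamma\|_F^2 + 2/m$ (where the $2/m$ originates from the $2$ in $2\kappa\psi_1 + 2$ divided through by $m$) yields the stated closed-form expression. Part (B) of Theorem \ref{thm:rfratio} specializes verbatim, since it controls exactly the same $\rho \to \infty$ limit and the alignment $\alpha$ is untouched by the specialization of $\sigma$. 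The only real obstacle is careful bookkeeping of the constants stemming from the Hermite normalization convention; no new conceptual argument beyond Theorem \ref{thm:rfratio} is required.
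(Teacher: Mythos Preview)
Your proposal is correct and follows essentially the same route as the paper: compute the Hermite coefficients of $\sigma(t)=t^2-1$ (so $\lambda_0=\lambda_1=\lambda_3=0$, $\lambda_2=2$), observe that $A_0=\overline{\lambda}\,I_m$ and $D_0=\overline{\lambda'}\,I_m$ become scalar matrices, read off $\psi_1=\rho/2$ and $\psi_2=\rho$ directly, and substitute into the formula of Theorem~\ref{thm:rfratio}(A). Your observation that $2\kappa\psi_1^2+\psi_2=\psi_1(2\kappa\psi_1+2)$ yields a clean cancellation is slightly more explicit than the paper's own algebra, but the overall argument is the same specialization.
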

% \end{mdframed}
Theorem~\ref{thm:rfratio} and Corollary ~\ref{cor:rfratio} are empirically verified in Fig.~\ref{fig:rf}. Notice the perfect match between our theoretical results and experiments.
\section{Neural tangent (NT) regime}
\label{sec:nt}
Consider a two-layer network with output weights $\ainit_j$ fixed to $1$, and hidden weight $w_j \in \mathbb R^d$ drawn from $N(0,\Gamma)$. For the quadratic activation $\sigma(t) := t^2$, the neural tangent (NT) approximation \cite{jacot18,chizat2018lazy} w.r.t.~the first layer parameters is given by
\vspace{-.3cm}
\begin{eqnarray}
f_{W+A}(x) \approx f_\init(x) + \trace(A\nabla_W f_W(x))
= f_\init(x) + 2\sum_{j=1}^m (x^\top (\ainit_ja_j))(x^\top w_j).
\label{eq:ntcalc}
% \vspace{-.3cm}
\end{eqnarray}
where $f_\init$ is the function computed by the neural network at initialization (see Appendix \ref{sec:init} for details), and $A = (a_1,\ldots,a_m) = (\Delta w_1,\ldots, \Delta w_m) \in \mathbb R^{m \times d}$ is the change in $W$. We will see that the initialization term $f_\init$ might have drastic influence on the robustness of the resulting model. % in NT regime.

% \abedit{Although in practice deep neural networks may be far from such linearized regimes~\citep{chizat2018lazy}, there is empirical evidence that in some cases some of the weights may remain close to initialization~\citep[see, e.g.,][]{zhang2019all}. This suggests that understanding generalization and robustness properties in this regime may still provide some insight on what may happen in practice, particularly regarding the impact of initialization.}

\subsection{NT approximation without initialization term}
We temporarily discard the initialization term $f_\init(x)$ from the RHS of \eqref{eq:ntcalc}, and consider the simplified approximation %  $f_\nt$ to be the remaining term, i.e
\vspace{-.4cm}
\begin{eqnarray}
f_\nt(x;A;c) := 2\sum_{j=1}^m (x^\top a_j)(x^\top w_j)-c,
\label{eq:nt}
\end{eqnarray}
where, without loss of generality, we absorb the output weights $z_j$ in the parameters~$a_j$ in the first-order term.
% Later, we will consider the so-called lazy training regime were the $f_\init(x)$ is not discarded.
In~\eqref{eq:nt},
$A \in \mathbb R^{m \times d}$ and $c \in \mathbb R$ are model parameters that are optimized. 
In terms of test error, let $A_\nt$ and $c_\nt$ be optimal in $f_\nt(\cdot;A,c)$, and let $f_\nt = f_\nt(\cdot;A_\nt,c_\nt)$ for short.
% where $w_1,\ldots,w_m \sim \mathbb R^d$ are random iid from $\mathcal N(0,(1/d)I_d)$ (as in the RF model), and $a_1,\ldots,a_m \in \mathbb R^d$ are optimized on training data (the entire data distribution).
In Thm. 2 of \cite{Ghorbani19}, it is shown that the (normalized) test error of the linearized model $f_\nt$ is given by
\begin{eqnarray}
\label{eq:generrornt}
% \begin{split}
\mathbb E_W [\egen(f_\nt)] 
%:= \frac{\varepsilon_{\mathrm{gen}}(f_\nt)}{\|f_\star\|_{L^2}}
&= (1-\rho)_+^2(1-\beta) + (1-\rho)_+\beta + o_d(1).
% \end{split}
\end{eqnarray}
where $\beta = \beta(B) := \trace(B)^2/(d\|B\|_F^2) \in [0,1]$.
Our objective in this section is to compute the robustness of $f_\nt$. Let $\rho := \min(\rho,1)$.
% \subsection{The (non)robustness of $f_\nt$}
% Let $W=(w_1,\ldots,w_m) \in \mathbb R^{m \times d}$ and $A = (a_1,\ldots,a_m) \in \mathbb R^{m \times d}$.

% We start with the following formula.
\begin{restatable}{lm}{sobnt}
 $\mathfrak{S}(f_\nt)^2 = 4\|W^\top A + A^\top W\|_F^2.
$
% \begin{eqnarray}
% \mathfrak{S}(f_\nt)^2 = \frac{1}{2}\sum_{j,k} (w_j^\top w_k)(a_j^\top \Lambda a_k) + 2(w_j^\top a_k)(a_j^\top \Lambda w_k) +(a_j^\top a_k)(w_j^\top \Lambda w_k),
% \label{eq:isotropicNTK}
% \end{eqnarray}
% where as usual, $\Lambda \in \mathbb R^{d \times d}$ is the covariance matrix of the data distribution $\mu$.

% In particular, If $\Lambda = \sigma^2 I_d$, then
% \begin{eqnarray}
% \end{eqnarray}
\label{lm:sobnt}
\end{restatable}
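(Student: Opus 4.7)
The plan is straightforward once one recognizes that $f_\nt$ is essentially a quadratic form in $x$, so its gradient is linear in $x$ and the Gaussian expectation reduces to a Frobenius trace.

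First, I would rewrite the cross-terms $(x^\top a_j)(x^\top w_j) = x^\top (a_j w_j^\top) x$ and sum over $j$ to collapse the definition \eqref{eq:nt} into the compact form
\begin{equation*}
f_\nt(x;A,c) \;=\; 2\,x^\top (A^\top W)\,x \;-\; c.
\end{equation*}
Since any real quadratic form depends only on the symmetric part of its matrix, I would replace $2 A^\top W$ by the symmetric matrix $M := A^\top W + W^\top A \in \mathbb{R}^{d \times d}$, giving $f_\nt(x;A,c) = x^\top M x - c$. The offset $c$ is irrelevant for the gradient.

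Differentiating a symmetric quadratic form yields $\nabla_x f_\nt(x) = 2 M x$, and therefore $\|\nabla_x f_\nt(x)\|^2 = 4\, x^\top M^2 x$. Taking expectation under $x \sim N(0,I_d)$ and using the standard Gaussian quadratic-form identity $\mathbb{E}[x^\top S x] = \trace(S)$, applied with $S = M^2$, gives $\mathbb{E}_x\|\nabla_x f_\nt(x)\|^2 = 4\,\trace(M^2)$. Because $M$ is symmetric, $\trace(M^2) = \trace(M^\top M) = \|M\|_F^2$, which immediately yields
\begin{equation*}
\mathfrak{S}(f_\nt)^2 \;=\; 4\,\|W^\top A + A^\top W\|_F^2,
\end{equation*}
as claimed.

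No step here is a real obstacle: the only thing to be careful about is the symmetrization, which is why $W^\top A$ appears alongside $A^\top W$ in the final expression — omitting it would give an incorrect answer whenever $A^\top W$ is not already symmetric. One could equivalently derive this from Lemma \ref{lm:formula} applied to the neural network representation of $f_\nt$, but the direct quadratic-form computation above is cleaner and avoids invoking the Hermite machinery used for general activations.
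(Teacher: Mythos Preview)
Your proof is correct and follows essentially the same approach as the paper: both rewrite $f_\nt$ as a quadratic form $x^\top M x - c$ with $M = W^\top A + A^\top W$, compute $\nabla_x f_\nt(x) = 2Mx$, and then use $\mathbb{E}_{x\sim N(0,I_d)}[x^\top M^2 x] = \trace(M^2) = \|M\|_F^2$. The only cosmetic difference is that the paper starts from the trace form $f_\nt(x) = 2\trace((W^\top A)xx^\top) - c$ rather than explicitly symmetrizing first, but the computation is otherwise identical.
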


% \subsection{NT in over-parametrized regime ($\rho \ge 1$)}
% Temporarily, let's assume isotropic design matrix $\Lambda = I_d$, so that $\mathfrak{S}(f_\nt)^2 = 4 (\|A^\top W\|_F^2 + \trace((WA^\top)^2)))$, thanks to Lemma \ref{lm:sobnt}.
% which only depends o$m \times m$ matrices $WW^\top$, $AA^\top$, and $WA^\top$.
Combining with Lemma \ref{lm:sobnt} above, we can prove the following result (see appendix).
% observe that for such a choice of $A$, it holds that
% \begin{eqnarray}
% \begin{split}
% \mathfrak{S}(f_\nt)^2 &= 4\|W^\top A + A^\top W\|_F^2\\
% &= 4\|P_1P_1^\top B/2+BP_1P_1^\top/2\|_F^2 \\
% &= 2\|P_1P_1^\top B/2\|_F^2 + 2\|P_1BP_1^\top\|_F^2.
% \end{split}
% \label{eq:ntwizard}
% \end{eqnarray}
% Further, one can show that (see Lemma xyz)
% \begin{eqnarray}
% \begin{split}
%     \mathbb E_W \|P_1P_1^\top B\|_F^2 &= \|B\|_F^2(1+\underline{\rho} + o_d(1)),\\
%     \mathbb E_W \|P_1BP_1^\top\|_F^2 &= \|B\|_F^2(\underline{\rho}^2(1-\beta) + \underline{\rho}\beta + o_d(1)),
% \end{split}
% \end{eqnarray}
% where $\underline{\rho} := \min(\rho,1)$.

% Taking expectations w.r.t $W$ and noting that
% \begin{eqnarray}
% \begin{split}
% \mathbb E_{W}[P_1P_1^\top] &= (\min(m,d)/d)I_d = \min(m/d,1)I_d\\
% &= (\min(\rho,1) + o_d(1))I_d,
% \end{split}
% \end{eqnarray}
% we deduce that
% \begin{eqnarray}
% \begin{split}
% \mathbb E_W[\mathfrak{S}(f_\nt)^2] &= 4\trace(\mathbb E_W[P_1P_1^\top]B^2)\\
% &= 4\|B\|_F^2\cdot(\min(\rho,1) + o_d(1))\\
% &= \mathfrak{S}(f_\star)^2 \cdot(\min(\rho,1) + o_d(1)).
% \end{split}
% \end{eqnarray}
% Thus, we have proved the following theorem which is one of our main results.

\begin{restatable}{thm}{purent}
  \label{thm:purent}
Consider the neural tangent model $f_\nt$ in \eqref{eq:nt}. In the limit \eqref{eq:proportionate}
% % with $\rho \ge 1$ (i.e over-parametrized regime),
it holds that,
\begin{eqnarray}
\label{eq:ntsobsob}
% \begin{split}
\mathbb E_W [\erob(f_\nt)]
% := \frac{\mathbb E_W[\mathfrak{S}(f_\nt)^2]}{\mathfrak{S}(f_\star)^2}
= \frac{\underline{\rho} + \underline{\rho}^2}{2}+\frac{(\underline{\rho}-\underline{\rho}^2)\beta}{2} + o_{d}(1).
% \\ &\le \underline{\rho}+o_d(1) \le 1 + o_d(1),
% \end{split}
\end{eqnarray}
% Thus, in the over-parametrized regime, the neural tangent model $f_\nt$ is as robust as the one trained via SGD $f_\sgd$, which is itself as robust as the underlying true model $f_\star$.
\end{restatable}
Further observe that because $\beta % :=\trace(B)^2/(d\|B\|_F^2)
\le 1$, the RHS of  \eqref{eq:ntsobsob} is further upper-bounded by $\underline{\rho} \le 1$ with equality when $\beta = 1$ (e.g., for $B \propto I_d$). We deduce that in the NT regime, the neural network is at least as robust as the ground-truth model $f_\star$.
% \mytodo{[Elvis] It seams the interesting case is the under-complete $\rho < 1$.}
Comparing with \eqref{eq:generrornt}, we obtain the following tradeoff between test error and robustness, stated only for~$\beta=1$ for simplicity of presentation.
\begin{restatable}{cor}{}
If $\beta = 1$ (i.e., if $B \propto I_d)$, then in the limit \eqref{eq:proportionate}, it holds that
\begin{eqnarray}
\mathbb E_W[\egen(f_\nt) + \erob(f_\nt)] &= 1 + o_d(1).
\end{eqnarray}
\end{restatable}
% Note that the above result is stated only for $\beta=1$ for simplicity of presentation.
\begin{figure}[!]
    \centering
        \hspace{-0.15in}
\subfigure[NN at large random init]{\includegraphics[width=0.26\textwidth]{{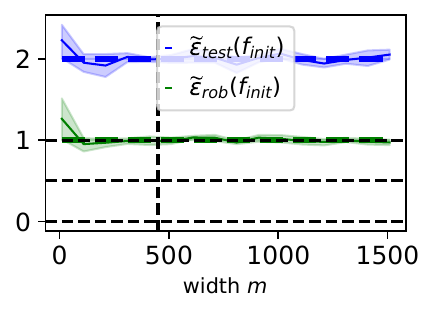}}\label{fig:init}}
        \hspace{-0.1in}
        \subfigure[NT regime]{\includegraphics[width=0.25\textwidth]{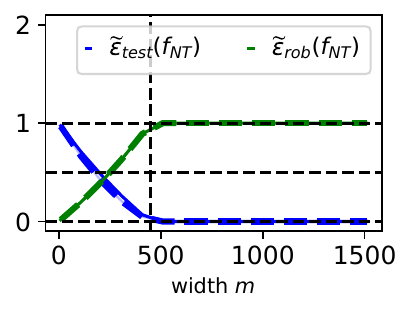}\label{fig:nt}}
        \hspace{-0.1in}
\subfigure[Lazy, small random init]{\includegraphics[width=0.25\textwidth]{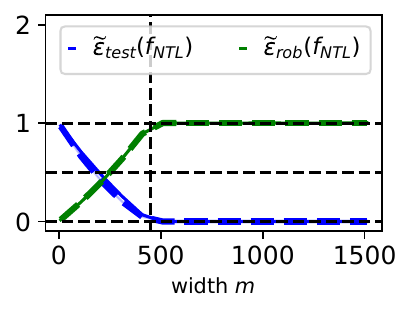}}
        \hspace{-0.1in}
\subfigure[Lazy, large random init]{\includegraphics[width=0.25\textwidth]{{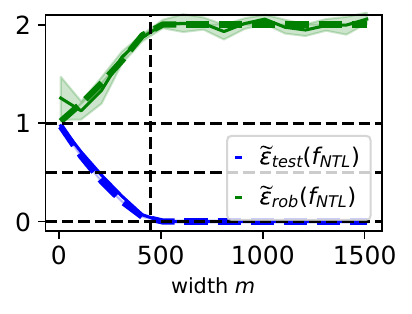}}\label{fig:ntllarge}}
    \caption{Showing curves of (normalized) test error $\egen$ and robustness $\erob$ for a two-layer neural network in different learning regimes of the hidden weights. Here, the input dimension is $d=450$ and the width $m$ sweeps a range of values from $10$ to $1500$. Dashed curves correspond to theoretical predictions, while solid curves correspond to actual values observed in the experiment (5 runs). We use $n=10^6$ training samples as a proxy for infinite data.
    % The test error and robustness of the model is equivalent to that of the null-predictor.
    % Notice the tradeoff between test error and robustness, illustrated by the fact that $\egen(f_\sgd) + \erob(f_\sgd) \approx 1$.
    The covariance matrix of the hidden neurons is fixed at $\Gamma = (1/d)I_d$. For simplicity, simplicity of this experiment, we also take the coefficient matrix in the ground-truth function as $B \propto I_d$. \textbf{(c)} "Small random init" means $B=(1/\sqrt{d})I_d$, so that $B$ is much larger than $\Gamma$.
    \textbf{(d)} "Large random init" means that
    % sthe coefficient matrix is
    $B=(1/d)I_d$, and thus is of the same order as $\Gamma$ (in Frobenius norm).
    In this case, the initialization degrades the robustness, as predicted by Thm. \ref{thm:ntl}. Note that, as predicted by Thm. \ref{thm:ntl}, random initialization has no impact on the test error of the neural tangent approximation. Results for \textbf{(a)} the neural network at initialization (Thm. \ref{thm:untrained} and \ref{thm:untrainedgenerr}) and \textbf{(b)} In the neural tangent regime (Thm. \ref{thm:purent}) are also depicted for reference.
    % Large random init means $B=$
    % \textbf{(b)} At initialization (random hidden and output weights), with $B=I_d$ and $\Gamma=(1/d)I_d$. \textbf{Right.} (Low SNR) $B_d = (1/\sqrt{d})I_d$, $\Gamma = (1/d)I_d$. But the test error and robustness of the model are bad.
    }
    \label{fig:allnt}
\end{figure}
\vspace{-.3cm}
\subsection{NT approximation with initialization term}
We now consider the neural tangent approximation \eqref{eq:ntcalc} without discarding the initialization term $f_\init$ from the RHS of \eqref{eq:ntcalc}. Also, let $\ainit \in \mathbb R^m$ be the output weights, drawn iid from $N(0,1/m)$ and frozen, and let $Q$ be the $m \times m$ diagonal matrix with $\ainit$ as its diagonal. This corresponds to what could be referred to as \emph{lazy training} regime of the hidden layer.
Let $f_\ntl(x;A,c)$ be RHS of \eqref{eq:ntcalc},
\begin{eqnarray}
f_\ntl(x;A,c) &:= f_{W,z,c}(x) + 2\sum_{j=1}^m (x^\top w_j)(x^\top a_j) =f_\init(x) + f_\nt(x;A,c),
\end{eqnarray}
where $f_\init(x) := \sum_{j=1}^m \ainit_j(x^\top w_j)^2 = x^\top W^\top Q Wx$ defines the neural network at initialization.
% , and we absorb the output weights in the parameters~$a_j$ in the first-order term.

% The following theorem is one of our main contributions.
\begin{restatable}{thm}{ntL}
\label{thm:ntl}
Suppose the output weights $\ainit$ at initialization are iid from $N(0,(1/m)I_m)$. Then, in the limit \eqref{eq:proportionate}, the following identities hold
\begin{align}
\mathbb E_{\{W,\ainit\}}[\egen(f_\ntl)] &= \mathbb E_W[\egen(f_\nt)] + o_d(1)\label{eq:ntlgen},\\
\mathbb E_{\{W,\ainit\}} [\erob(f_\ntl)] & = \mathbb E_W[\erob(f_\nt)] +\mathbb E_{\{W,\ainit\}}[\erob(f_\init)] + o_d(1)\label{eq:ntlsob}.
\end{align}
\end{restatable}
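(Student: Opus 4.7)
The plan is to exploit the additive decomposition
\[
  f_\ntl = f_\init + f_\nt
\]
that follows from the convention $A_\ntl := A_\nt$, $c_\ntl := c_\nt$ (``optimal in $f_\nt(\cdot; A, c)$''): the pure-NT summand $f_\nt$ depends only on $W$, while the initialization summand $f_\init$ carries all the dependence on $\ainit$ through the diagonal matrix $Q$ with entries $\ainit$. Two elementary structural facts will drive both identities: (i) the quadratic $f_\init(x) = x^\top W^\top Q W\, x$ lies in the NT function space $\mathcal V := \{f_\nt(\cdot; A, c) : A \in \mathbb R^{m\times d},\, c \in \mathbb R\}$, since the choice $A_0 := \tfrac{1}{2}\, Q W$ yields $A_0^\top W + W^\top A_0 = W^\top Q W$; and (ii) $\mathbb E_{\ainit}[Q] = 0$ because $\ainit_j \sim N(0, 1/m)$.

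For the generalization identity \eqref{eq:ntlgen}, I would expand
\[
  \|f_\ntl - f_\star\|_{L^2}^2 \;=\; \|f_\init\|_{L^2}^2 + \|f_\nt - f_\star\|_{L^2}^2 + 2\langle f_\init,\, f_\nt - f_\star\rangle_{L^2}.
\]
Since $f_\nt$ equals the $L^2$-projection $P_{\mathcal V} f_\star$ of the target onto $\mathcal V$, the residual $f_\nt - f_\star$ is $L^2$-orthogonal to $\mathcal V$, and by fact (i) the cross term vanishes deterministically. Dividing by $\|f_\star\|_{L^2}^2 = 2\|B\|_F^2$ yields
$\egen(f_\ntl) = \egen(f_\nt) + \|f_\init\|_{L^2}^2 / (2\|B\|_F^2)$.
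A direct Gaussian fourth-moment calculation gives $\mathbb E_{\ainit}[\|f_\init\|_{L^2}^2] = (3/m)\sum_j\|w_j\|^4$, which under Condition~\ref{cond:traces} concentrates on a finite constant of order one; in the asymptotic regime where $\|B\|_F^2 \to \infty$ (as is implicit in the Ghorbani setting), this residual is $o_d(1)$ and \eqref{eq:ntlgen} follows.

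For the robustness identity \eqref{eq:ntlsob}, I would similarly expand
\[
  \mathfrak{S}(f_\ntl)^2 \;=\; \mathfrak{S}(f_\init)^2 + \mathfrak{S}(f_\nt)^2 + 2\,\mathbb E_x\bigl\langle \nabla f_\init(x),\, \nabla f_\nt(x)\bigr\rangle.
\]
Using $\nabla f_\init(x) = 2 W^\top Q W\, x$ and $\nabla f_\nt(x) = 2 M_\nt x$ with $M_\nt := A_\nt^\top W + W^\top A_\nt$, the cross term equals $8\,\trace(W^\top Q W \cdot M_\nt)$. Because $M_\nt$ is a function of $W$ alone, fact (ii) gives
$\mathbb E_{\ainit}[\trace(W^\top Q W\, M_\nt)] = \trace(W^\top \mathbb E_{\ainit}[Q]\, W\, M_\nt) = 0$,
so the cross term vanishes in expectation over $\ainit$. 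Dividing by $\mathfrak{S}(f_\star)^2 = 4\|B\|_F^2$ and then taking $\mathbb E_W$ yields \eqref{eq:ntlsob}; the $o_d(1)$ remainder absorbs the concentration statements already established in Theorems~\ref{thm:purent} and~\ref{thm:untrained}.

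The main obstacle is bookkeeping rather than any deep estimate: one must explicitly verify $f_\init \in \mathcal V$, cleanly carry out the two cross-term computations via Gaussian moments, and identify the asymptotic regime in which $\|f_\init\|_{L^2}^2/\|f_\star\|_{L^2}^2$ is genuinely $o_d(1)$. No random matrix machinery is needed beyond elementary fourth-moment estimates of $\|w_j\|^4$ and $\trace((W^\top Q W)^2)$.
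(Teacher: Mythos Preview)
Your argument starts from the convention $A_\ntl := A_\nt$, but that is not how the paper uses $f_\ntl$. Despite the slightly ambiguous sentence ``let $A_\ntl, c_\ntl$ be optimal in $f_\nt(\cdot;A,c)$'', the proof makes clear that $(A_\ntl,c_\ntl)$ are chosen to minimize the \emph{lazy} loss $\|f_\ntl(\cdot;A,c)-f_\star\|_{L^2}^2$; equivalently (as the paper notes), they solve the pure NT problem with the shifted target $\widetilde B := B - W^\top Q W$, so that $W^\top A_\ntl = P_1 P_1^\top \widetilde B/2$ rather than $P_1 P_1^\top B/2$. All of the paper's computations for both \eqref{eq:ntlgen} and \eqref{eq:ntlsob} run through $\widetilde B$ and the SVD projectors $P_1,P_2$ of $W^\top$, not through the additive split $f_\init + f_\nt$ you propose.

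This mismatch breaks your generalization argument. Under your reading you are left with the residual $\|f_\init\|_{L^2}^2/(2\|B\|_F^2)$, which you wave away by asserting that ``$\|B\|_F^2\to\infty$ is implicit in the Ghorbani setting''. It is not: nowhere is this assumed, Theorem~\ref{thm:untrainedgenerr} computes exactly this ratio as the order-one constant $(1+\|\Gamma\|_F^2)/\|B\|_F^2$ for quadratic activation, and the experiments in Section~\ref{sec:nt} take $\|B\|_F$ bounded or even vanishing. The paper instead obtains \eqref{eq:ntlgen} \emph{exactly}, with no asymptotic slack: $\varepsilon_{\rm gen}(f_\ntl)=2\|P_2^\top\widetilde B P_2\|_F^2$, and since $P_2^\top W^\top=0$ this equals $2\|P_2^\top B P_2\|_F^2=\varepsilon_{\rm gen}(f_\nt)$ deterministically. (Ironically, your observation (i) that $f_\init\in\mathcal V$ gives the same conclusion in one line, via $f_\init+\mathcal V=\mathcal V$; but it does so for the correctly optimized $A_\ntl$, not for $A_\ntl=A_\nt$.)

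For \eqref{eq:ntlsob}, the paper again works through $\widetilde B$: it writes $\mathfrak S(f_\ntl)^2=2\|P_1P_1^\top\widetilde B\|_F^2+2\|P_1^\top\widetilde B P_1\|_F^2$, expands $\widetilde B=B-W^\top QW$, simplifies using $P_1P_1^\top W^\top=W^\top$, and kills the terms linear in $Q$ via $\mathbb E_{\ainit}[Q]=0$ (the same independence idea you invoke). The surviving quadratic-in-$Q$ piece $\trace(WW^\top Q\, WW^\top Q)$ is then identified with $\mathfrak S(f_\init)^2/4$. Your cross-term mechanism is morally the right one, but because you pin $A_\ntl$ to the wrong value, the two summands you isolate are not the ones the paper actually manipulates.
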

Thus, on average (over initialization):
(i) $f_\ntl$ and $f_\nt$ have the same test error, i.e., the initialization term $f_\init$ in $f_\ntl$ does not affect its test error.
(ii) On the other hand, $f_\ntl$ is less robust than $f_\nt$; the deficit in robustness, namely the term $(1+\|\Gamma\|^2)/\|B\|_F^2$, corresponds exactly to the contribution of the initialization.
The situation is empirically illustrated in Fig. \ref{fig:allnt}. Notice the perfect match between our theoretical results and experiments.

% \mytodo{Experiments for NT model!}

\section{Concluding remarks}
In this paper, we have studied the adversarial robustness of two-layer neural networks in different  high-dimensional learning regimes, and established a number of new tradeoffs between prediction performance and robustness. Our analysis also shows that random initialization can further degrade the robustness in lazy training regimes: for "large" random initialization, the trained neural network inherits an additional nonrobustness already present at initialization.

Our work can be seen as a first step towards a rigorous analysis of the robustness of trained neural networks, a subject which is still understudied.

\bibliography{literature}

\begin{thebibliography}{39}
\providecommand{\natexlab}[1]{#1}
\providecommand{\url}[1]{\texttt{#1}}
\expandafter\ifx\csname urlstyle\endcsname\relax
  \providecommand{\doi}[1]{doi: #1}\else
  \providecommand{\doi}{doi: \begingroup \urlstyle{rm}\Url}\fi

\bibitem[Ali et~al.(2020)Ali, Dobriban, and
  Tibshirani]{implicitregTibshirani2020}
Ali, A., Dobriban, E., and Tibshirani, R.~J.
\newblock The implicit regularization of stochastic gradient flow for least
  squares.
\newblock In \emph{Proceedings of the 37th International Conference on Machine
  Learning, {ICML}}, volume 119 of \emph{Proceedings of Machine Learning
  Research}, pp.\  233--244. {PMLR}, 2020.

\bibitem[Bartlett et~al.(2021)Bartlett, Bubeck, and
  Cherapanamjeri]{bartlett2021}
Bartlett, P.~L., Bubeck, S., and Cherapanamjeri, Y.
\newblock Adversarial examples in multi-layer random relu networks.
\newblock \emph{NeurIPS}, 2021.

\bibitem[Bhattacharjee et~al.(2021)Bhattacharjee, Jha, and
  Chaudhuri]{bhattacharjee2020sample}
Bhattacharjee, R., Jha, S., and Chaudhuri, K.
\newblock Sample complexity of robust linear classification on separated data.
\newblock In \emph{Proceedings of the 38th International Conference on Machine
  Learning (ICML)}. PMLR, 2021.

\bibitem[Bubeck \& Sellke(2021)Bubeck and Sellke]{bubeck2021universal}
Bubeck, S. and Sellke, M.
\newblock A universal law of robustness via isoperimetry.
\newblock In \emph{Advances in Neural Information Processing Systems}, 2021.

\bibitem[{Bubeck} et~al.(2020b){Bubeck}, {Li}, and {Nagaraj}]{lor}
{Bubeck}, S., {Li}, Y., and {Nagaraj}, D.
\newblock {A law of robustness for two-layers neural networks}.
\newblock \emph{arXiv e-prints}, art. arXiv:2009.14444, September 2020b.

\bibitem[Bubeck et~al.(2021)Bubeck, Cherapanamjeri, Gidel, and des
  Combes]{bubecksinglestep2021}
Bubeck, S., Cherapanamjeri, Y., Gidel, G., and des Combes, R.~T.
\newblock A single gradient step finds adversarial examples on random
  two-layers neural networks.
\newblock In \emph{Advances in Neural Information Processing Systems}, 2021.

\bibitem[Chizat et~al.(2019)Chizat, Oyallon, and Bach]{chizat2018lazy}
Chizat, L., Oyallon, E., and Bach, F.
\newblock On lazy training in differentiable programming.
\newblock 2019.

\bibitem[Daniely \& Shacham(2020)Daniely and Shacham]{mostrelu2020}
Daniely, A. and Shacham, H.
\newblock Most relu networks suffer from \textbackslash ell\^{}2 adversarial
  perturbations.
\newblock In \emph{Advances in Neural Information Processing Systems},
  volume~33, pp.\  6629--6636. Curran Associates, Inc., 2020.

\bibitem[Dobriban \& Wager(2018)Dobriban and Wager]{Dobriban2018}
Dobriban, E. and Wager, S.
\newblock {High-dimensional asymptotics of prediction: Ridge regression and
  classification}.
\newblock \emph{The Annals of Statistics}, 46\penalty0 (1):\penalty0 247 --
  279, 2018.

\bibitem[Dobriban et~al.(2020)Dobriban, Hassani, Hong, and
  Robey]{dobriban2020provable}
Dobriban, E., Hassani, H., Hong, D., and Robey, A.
\newblock Provable tradeoffs in adversarially robust classification.
\newblock \emph{arXiv preprint arXiv:2006.05161}, 2020.

\bibitem[Dohmatob(2019)]{dohmatob19}
Dohmatob, E.
\newblock Generalized no free lunch theorem for adversarial robustness.
\newblock In \emph{Proceedings of the 36th International Conference on Machine
  Learning (ICML)}, volume~97 of \emph{Proceedings of Machine Learning
  Research}. {PMLR}, 2019.

\bibitem[Dohmatob(2021)]{dohmatob2021fundamental}
Dohmatob, E.
\newblock Fundamental tradeoffs between memorization and robustness in random
  features and neural tangent regimes.
\newblock \emph{arXiv preprint arXiv:2106.02630}, 2021.

\bibitem[El~Karoui(2010)]{elkaroui2010}
El~Karoui, N.
\newblock The spectrum of kernel random matrices.
\newblock \emph{Ann. Statist.}, 2010.

\bibitem[Gao et~al.(2019)Gao, Cai, Li, Hsieh, Wang, and Lee]{RuiqiGao2019}
Gao, R., Cai, T., Li, H., Hsieh, C.-J., Wang, L., and Lee, J.~D.
\newblock Convergence of adversarial training in overparametrized neural
  networks.
\newblock In \emph{Advances in Neural Information Processing Systems},
  volume~32. Curran Associates, Inc., 2019.

\bibitem[Ghorbani et~al.(2019)Ghorbani, Mei, Misiakiewicz, and
  Montanari]{Ghorbani19}
Ghorbani, B., Mei, S., Misiakiewicz, T., and Montanari, A.
\newblock Limitations of lazy training of two-layers neural network.
\newblock In \emph{Advances in Neural Information Processing Systems},
  volume~32. Curran Associates, Inc., 2019.

\bibitem[Gilmer et~al.(2018)Gilmer, Metz, Faghri, Schoenholz, Raghu,
  Wattenberg, and Goodfellow]{gilmerspheres18}
Gilmer, J., Metz, L., Faghri, F., Schoenholz, S.~S., Raghu, M., Wattenberg, M.,
  and Goodfellow, I.~J.
\newblock Adversarial spheres.
\newblock \emph{CoRR}, abs/1801.02774, 2018.

\bibitem[Hassani \& Javanmard(2022)Hassani and Javanmard]{hassani2022curse}
Hassani, H. and Javanmard, A.
\newblock The curse of overparametrization in adversarial training: Precise
  analysis of robust generalization for random features regression.
\newblock \emph{arXiv preprint arXiv:2201.05149}, 2022.

\bibitem[Jacot et~al.(2018)Jacot, Gabriel, and Hongler]{jacot18}
Jacot, A., Gabriel, F., and Hongler, C.
\newblock Neural tangent kernel: Convergence and generalization in neural
  networks.
\newblock In \emph{Advances in Neural Information Processing Systems 31}. 2018.

\bibitem[Khim \& Loh(2018)Khim and Loh]{khim2018adversarial}
Khim, J. and Loh, P.-L.
\newblock Adversarial risk bounds via function transformation.
\newblock \emph{arXiv preprint arXiv:1810.09519}, 2018.

\bibitem[Ledoit \& P{\'e}ch{\'e}(2011)Ledoit and P{\'e}ch{\'e}]{Ledoit2011}
Ledoit, O. and P{\'e}ch{\'e}, S.
\newblock Eigenvectors of some large sample covariance matrix ensembles.
\newblock \emph{Probability Theory and Related Fields}, Oct 2011.

\bibitem[Madry et~al.(2018)Madry, Makelov, Schmidt, Tsipras, and
  Vladu]{madry2017towards}
Madry, A., Makelov, A., Schmidt, L., Tsipras, D., and Vladu, A.
\newblock Towards deep learning models resistant to adversarial attacks.
\newblock 2018.

\bibitem[Mahloujifar et~al.(2018)Mahloujifar, Diochnos, and
  Mahmoody]{saeed2018}
Mahloujifar, S., Diochnos, D.~I., and Mahmoody, M.
\newblock The curse of concentration in robust learning: Evasion and poisoning
  attacks from concentration of measure.
\newblock \emph{CoRR}, abs/1809.03063, 2018.

\bibitem[{Mei} \& {Montanari}(2019){Mei} and {Montanari}]{Mei2019}
{Mei}, S. and {Montanari}, A.
\newblock {The generalization error of random features regression: Precise
  asymptotics and double descent curve}.
\newblock \emph{arXiv e-prints}, art. arXiv:1908.05355, August 2019.

\bibitem[Min et~al.(2021{\natexlab{a}})Min, Chen, and Karbasi]{Karbasi2021}
Min, Y., Chen, L., and Karbasi, A.
\newblock The curious case of adversarially robust models: More data can help,
  double descend, or hurt generalization.
\newblock In \emph{Proceedings of the Thirty-Seventh Conference on Uncertainty
  in Artificial Intelligence}. PMLR, 2021{\natexlab{a}}.

\bibitem[Min et~al.(2021{\natexlab{b}})Min, Chen, and Karbasi]{min2021curious}
Min, Y., Chen, L., and Karbasi, A.
\newblock The curious case of adversarially robust models: More data can help,
  double descend, or hurt generalization.
\newblock In \emph{Uncertainty in Artificial Intelligence (UAI)},
  2021{\natexlab{b}}.

\bibitem[Montanari \& Zhong(2020)Montanari and Zhong]{montanari2020}
Montanari, A. and Zhong, Y.
\newblock The interpolation phase transition in neural networks: Memorization
  and generalization under lazy training.
\newblock \emph{CoRR}, abs/2007.12826, 2020.

\bibitem[Moosavi{-}Dezfooli et~al.(2017)Moosavi{-}Dezfooli, Fawzi, Fawzi,
  Frossard, and Soatto]{moosavi17}
Moosavi{-}Dezfooli, S., Fawzi, A., Fawzi, O., Frossard, P., and Soatto, S.
\newblock Analysis of universal adversarial perturbations.
\newblock abs/1705.09554, 2017.

\bibitem[Rahimi \& Recht(2008)Rahimi and Recht]{rf}
Rahimi, A. and Recht, B.
\newblock Uniform approximation of functions with random bases.
\newblock 2008.

\bibitem[Sanyal et~al.(2021)Sanyal, Dokania, Kanade, and
  Torr]{sanyal2021benign}
Sanyal, A., Dokania, P.~K., Kanade, V., and Torr, P.~H.
\newblock How benign is benign overfitting?
\newblock In \emph{International Conference on Learning Representations
  (ICLR)}, 2021.

\bibitem[Schmidt et~al.(2018)Schmidt, Santurkar, Tsipras, Talwar, and
  Madry]{schmidt2018}
Schmidt, L., Santurkar, S., Tsipras, D., Talwar, K., and Madry, A.
\newblock Adversarially robust generalization requires more data.
\newblock \emph{CoRR}, abs/1804.11285, 2018.

\bibitem[Shafahi et~al.(2018)Shafahi, Huang, Studer, Feizi, and
  Goldstein]{goldstein}
Shafahi, A., Huang, W.~R., Studer, C., Feizi, S., and Goldstein, T.
\newblock Are adversarial examples inevitable?
\newblock \emph{CoRR}, abs/1809.02104, 2018.

\bibitem[Silverstein \& Choi(1995)Silverstein and Choi]{silverstein95}
Silverstein, J. and Choi, S.
\newblock Analysis of the limiting spectral distribution of large dimensional
  random matrices.
\newblock \emph{Journal of Multivariate Analysis}, 1995.

\bibitem[Simon-Gabriel et~al.(2019)Simon-Gabriel, Ollivier, Bottou,
  Sch{\"o}lkopf, and Lopez-Paz]{yann2019firstorder}
Simon-Gabriel, C.-J., Ollivier, Y., Bottou, L., Sch{\"o}lkopf, B., and
  Lopez-Paz, D.
\newblock First-order adversarial vulnerability of neural networks and input
  dimension.
\newblock In \emph{Proceedings of the 36th International Conference on Machine
  Learning}, volume~97 of \emph{Proceedings of Machine Learning Research}, pp.\
   5809--5817. PMLR, 09--15 Jun 2019.

\bibitem[Szegedy et~al.(2013)Szegedy, Zaremba, Sutskever, Bruna, Erhan,
  Goodfellow, and Fergus]{szegedy2013intriguing}
Szegedy, C., Zaremba, W., Sutskever, I., Bruna, J., Erhan, D., Goodfellow, I.,
  and Fergus, R.
\newblock Intriguing properties of neural networks.
\newblock \emph{arXiv preprint arXiv:1312.6199}, 2013.

\bibitem[Tsipras et~al.(2019)Tsipras, Santurkar, Engstrom, Turner, and
  Madry]{tsipras18}
Tsipras, D., Santurkar, S., Engstrom, L., Turner, A., and Madry, A.
\newblock Robustness may be at odds with accuracy.
\newblock In \emph{International Conference on Learning Representations
  (ICLR)}, volume abs/1805.12152, 2019.

\bibitem[Vershynin(2012)]{rmt}
Vershynin, R.
\newblock \emph{Introduction to the non-asymptotic analysis of random
  matrices}, pp.\  210–268.
\newblock Cambridge University Press, 2012.

\bibitem[Wu et~al.(2021)Wu, Chen, Cai, He, and Gu]{widerwu2021}
Wu, B., Chen, J., Cai, D., He, X., and Gu, Q.
\newblock Do wider neural networks really help adversarial robustness?
\newblock In Beygelzimer, A., Dauphin, Y., Liang, P., and Vaughan, J.~W.
  (eds.), \emph{Advances in Neural Information Processing Systems}, 2021.

\bibitem[Yang et~al.(2020)Yang, Rashtchian, Zhang, Salakhutdinov, and
  Chaudhuri]{closerlook2020}
Yang, Y.-Y., Rashtchian, C., Zhang, H., Salakhutdinov, R.~R., and Chaudhuri, K.
\newblock A closer look at accuracy vs. robustness.
\newblock In \emph{Advances in Neural Information Processing Systems},
  volume~33, pp.\  8588--8601. Curran Associates, Inc., 2020.

\bibitem[Yin et~al.(2019)Yin, Kannan, and Bartlett]{yin2019rademacher}
Yin, D., Kannan, R., and Bartlett, P.
\newblock Rademacher complexity for adversarially robust generalization.
\newblock In \emph{International conference on machine learning}, 2019.

\end{thebibliography}
\bibliographystyle{icml2022}

\clearpage
% \addtocontents{toc}{\protect\setcounter{tocdepth}{2}}
\appendix

\begin{center}
    % \noindent\rule{\textwidth}{4pt} \vspace{-0.2cm}
    
    \Large \textbf{Appendix} % \\ ~\\[-0.5cm]
    % \large \textbf{On the (Non-)Robustness of Two-Layer Neural Networks in Different Learning Regimes} 
    
    % \noindent\rule{\textwidth}{1.2pt}
\end{center}

% \tableofcontents

\section{Justification of our proposed measure of robustness}
\label{sec:justice}
Let us begin by explaining why our proposed measure of robustness based on Dirichlet energy \eqref{eq:sob2} is actually a measure of robustness.

% \paragraph{The gradient correlation matrix.}
Given a continuously-differentiable function $f : \mathbb R^d \to \mathbb R$, consider the $d \times d$ psd matrix $J(f)$ defined by
\begin{eqnarray}
J(f) := \mathbb E_{x \sim N(0,I_d)}[\nabla_x f(x) \otimes \nabla_x f(x)].
\end{eqnarray}
We will see that the spectrum of this matrix plays a key role in quantifying the robustness of $f$, w.r.t the distribution $P$.
% \paragraph{Robustness to random local fluctuactions.}
The following lemma shows that $\mathfrak{S}(f)^2 = \trace(J(f))$, and measures the (non)robustness of $f$ to random local fluctuations in its input.
\begin{restatable}[Derivative of robustness error]{lm}{}
\label{lm:justice}
We have % the identity.
\begin{eqnarray}
\lim_{\delta \to 0^+}\frac{1}{\delta}\mathbb E_{x \sim N(0,I_d)}[\Delta_f(x;\delta)] = \mathfrak{S}(f) = \trace(J(f))^{1/2},
\end{eqnarray}
where $\Delta_f(x;\delta) := \sup_{\|v\|_2\le \delta}|f(x+v)-f(x)|$.
\end{restatable}
This lemma is a direct corollary to Lemma \ref{lm:justification} proved later below.

% \paragraph{Robustness to universal adversarial perturbations.}
The next lemma shows that $\|J(f)\|_{op}$ measures the (non)robustness of $f$ to universal adversarial perturbations, in the sense of \cite{moosavi17}.
\begin{restatable}[Measure of robustness to universal perturbations]{lm}{}
We have the identity
\begin{eqnarray}
\lim_{\delta \to 0^+}\frac{1}{\delta}\Delta_f(\delta) = \|J(f)\|_{op}^{1/2},
\end{eqnarray}
where $\Delta_f(\delta)^2:=\sup_{\|v\| \le \delta}\mathbb E_{x \sim N(0,I_d)}(f(x+v)-f(x))^2$.
\end{restatable}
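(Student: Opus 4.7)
The plan is to analyze $\Delta_f(\delta)^2$ via a second-order Taylor expansion of $(f(x+v)-f(x))^2$ in the perturbation $v$, then commute the expectation with the supremum in the vanishing-$\delta$ limit.

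First I would perform a Taylor expansion: for weakly differentiable $f$ with enough regularity,
\begin{equation*}
f(x+v) - f(x) = \nabla f(x)^\top v + R(x,v),
\end{equation*}
where $R(x,v) = o(\|v\|)$ pointwise in $x$. Squaring and integrating against $x \sim N(0,I_d)$ gives
\begin{equation*}
\mathbb E_x\bigl[(f(x+v)-f(x))^2\bigr] = v^\top J(f) v + 2\,\mathbb E_x[(\nabla f(x)^\top v)R(x,v)] + \mathbb E_x[R(x,v)^2].
\end{equation*}
Dividing by $\|v\|^2$ and using Cauchy--Schwarz on the cross term, one obtains that the two remainder terms are $o(1)$ uniformly for $v$ in the unit sphere as $\|v\| \to 0$. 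This is the step where a growth/integrability assumption on $f$ (analogous to Condition~\ref{cond:growth}) enters, ensuring that $\mathbb E_x[R(x,v)^2]/\|v\|^2 \to 0$ uniformly in the direction $v/\|v\|$ by a dominated-convergence argument.

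Next I would take the supremum over $\|v\| \le \delta$. Writing $v = \delta u$ with $\|u\| \le 1$, the leading term becomes $\delta^2\, u^\top J(f) u$, and the Rayleigh--Ritz characterization gives
\begin{equation*}
\sup_{\|u\|\le 1} u^\top J(f) u = \|J(f)\|_{\mathrm{op}}.
\end{equation*}
Combined with the uniform-in-direction control of the remainder from the previous step, this yields
\begin{equation*}
\Delta_f(\delta)^2 = \delta^2\bigl(\|J(f)\|_{\mathrm{op}} + o(1)\bigr) \quad \text{as } \delta \to 0^+,
\end{equation*}
and dividing by $\delta^2$ and taking square roots gives the claim.

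The main obstacle is the uniform control of the Taylor remainder in the direction of $v$, since the supremum is taken inside the expectation-squared quantity; a naive pointwise Taylor bound would not suffice. I would address this by invoking the same $\sigma(t)^2 \le c_0 e^{c_1 t^2}$--type integrability used throughout the paper to bound $\mathbb E_x[R(x,v)^2]$ by $\|v\|^2 \cdot \omega(\|v\|)$ with $\omega(\|v\|) \to 0$, uniformly over unit-norm directions, via the continuity of $v \mapsto \mathbb E_x[\|\nabla f(x+\theta v) - \nabla f(x)\|^2]$ in $L^2(N(0,I_d))$ for $\theta \in [0,1]$.
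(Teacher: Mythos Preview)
Your approach is essentially the same as the paper's: Taylor expand $f(x+v)-f(x)\approx v^\top\nabla_x f(x)$, square and average over $x$ to obtain the quadratic form $v^\top J(f)v$, then take the supremum over $\|v\|\le\delta$ via Rayleigh--Ritz. The paper in fact only gives a rough sketch (literally writing ``$\approx$'' and not controlling the remainder), so your added Cauchy--Schwarz and dominated-convergence treatment of the Taylor remainder is more careful than what the paper provides, but the core idea is identical.
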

In particular, the leading eigenvector of $J(f)$ corresponds to (first-order) universal adversarial perturbations of $f$, in the sense of \cite{moosavi17}, which can be efficiently computed using the \emph{Power Method}, for example.

A rough sketch of the proof of the above lemma is as follows. To first-order, we have $f(x+v) - f(x) \approx v^\top \nabla x f(x)$. Thus,
\begin{eqnarray*}
\begin{split}
\Delta_f(\delta)^2 &\approx \sup_{\|v\| \le \delta}\mathbb E_{x \sim N(0,I_d)}[(f(x+v)-f(x))^2]\\
&= \sup_{\|v\| \le \delta}\mathbb E_{x \sim N(0,I_d)}[(v^\top \nabla_x f(x))^2 ]\\
&= \sup_{\|v\| \le \delta} v^\top J(f) v = \delta^2 \|J(f)\|_{op}^2.
\end{split}
\end{eqnarray*}
The first lemma is proved via a similar argument.

\subsection{Why not use  Lipschitz constants to measure robustness ?}
% Note that it always holds that
% \begin{eqnarray}
% \|f\|_{\Lip} \ge \mathfrak{S}(f),
% \end{eqnarray}
Note for any that smooth function $f:\mathbb R \to \mathbb R$, $\mathfrak{S}(f)$ is always a lower-bound for the Lipschitz constant $\|f\|_{\Lip}$ of $f$. Recall that $\|f\|_{\Lip}$ is defined by
\begin{eqnarray}
\|f\|_{\Lip}:= \sup_{x \ne x'}\frac{|f(x)-f(x')|}{\|x-x'\|}.
\end{eqnarray}
One special case where there is equality $\mathfrak{S}(f) = \|f\|_{\Lip}$ is when $f$ is a linear function. However, this is far from true in general: $\|f\|_{\Lip}$ is a worst-case measure, while $\mathfrak{S}(f)$ is an average-case measure for each $q$. 
If $\|f\|_{\Lip}$ is small (i.e., of order $\mathcal O(1)$), then a small perturbation (i.e., of size $\mathcal O(1)$) can only result in mild change in the output of $f$ (i.e., of order $\mathcal O(1)$). However, a large value of $\|f\|_{\Lip}$ is uninformative regarding adversarial examples (for example, one can think of a function which is smooth everywhere except on a set of measure zero). In contrast, a large value for $\mathfrak{S}(f)$ indicates that, on average, it is possible for an adversarial to drastically change the output of $f$ via a small modification of its input.

\paragraph{An illustrative example.}
Consider a quadratic function $f(x):=(1/2)x^\top B x + c$. Note that the ground-truth model $f_\star$ defined in \eqref{eq:truemodel} is of this form.
A direct computation reveals that $\nabla_x f(x) = Bx$ and so $\mathfrak{S}(f)^2 := \mathbb E_{x \sim N(0,I_d)}\|\nabla_x f(x)\|^2 =  \mathbb E_{x \sim N(0,I_d)}\|Bx\|^2 = \|B\|_F^2$. However, the Lipschitz constant of $f$ restricted to the ball of radius $\sqrt{d}$ is\footnote{For fair comparison with our measure of robustness, we restrict the computation of Lipschitz constantt to this ball since $\sqrt{d}$ is the length of a typical random vector from $N(0,I_d)$.},
$$
\|f\|_{\widetilde{\Lip}} = \underset{\|x\| \le \sqrt{d}}{\sup} \|\nabla_x f(x)\| = \underset{\|x\| \le \sqrt{d}}{\sup} \|Bx\| = \sqrt{d}\|B\|_{op},
$$
which can be drastically larger than $\mathfrak{S}(f)=\|B\|_F$. For example, take $B$ to be an ill-conditioned, e.g.,  rank-$1$, matrix.

\subsection{Proofs for Dirichlet energy as a measure of adversarial vulnerability}
\label{sec:justiceproof}
Let $P$ be a probability distribution on $\mathbb R^d$, for example, the gaussian distribution $N(0,I_d)$ assumed in the main article.
Let $\|\cdot\|$ be any norm on $\mathbb R^d$ with dual norm $\|\cdot\|_\star$.
Given a function $f:\mathbb R^d \to \mathbb R$, a tolerance parameter $\delta \ge 0$ (the \emph{attack budget}), and a scalar $q \ge 1$, define $R_{\delta}(f)$ by
\begin{eqnarray}
    R_{q,\delta}(f,g) := \mathbb E_{x \sim P}\left[\Delta_f(x;\delta)^q\right],
\end{eqnarray}

where $\Delta_f(x;\delta) := \sup_{\|x'-x\| \le \delta}|f(x')-f(x)|$ is the maximal variation of $f$ in  a neighborhood of size $\delta$ around $x$.
For $q=2$, we simply write $R_{\delta}(f,g)$ for $R_{\delta,2}(f,g)$.
In particular, $G_{\delta}(f) := \mathbb E[R_{\delta}(f,f_\star)]$ is \emph{adversarial test error} and $G_0(f) := \mathbb E [R_0(f,f_\star)]$ is the ordinary \emph{test error} of $f$, where the expectations are w.r.t all sources of randomness in $f$ and $f_\star$. Of course $G_{\delta}(f)$ is an increasing function of $\delta$.

% \subsection{Approximate measure of adversarial test error}
Define $R_{q,\delta}(f) := R_{q,\delta}(f,f)$ and $R_{\delta}(f) := R_{2,\delta}(f,f)$, which measure the deviation of the outputs of $f$ w.r.t to the outputs of $f$, under adversarial attack. Note $R_{q}(f) \equiv 0$. Also note that in the case where $\|\cdot\|$ is the euclidean $L_2$-norm: if $f$ is a near perfect model (in the classical sense), meaning that its ordinary test error $G_0(f)$ is small, then $R_{\delta}(f)$ is a good approximation for $G_{\delta}(f)$. Finally, (at least for small values of $\alpha$), we can further approximate $R_{\alpha}(f)$ (and therefore $G_{\delta}(f)$, for near perfect $f$) by $\delta^2$ times the Dirichlet energy $\mathfrak{S}(f)^2$. Indeed,
\begin{restatable}{lm}{justification}
\label{lm:justification}
Suppose $f$ is $P$-a.e differentiable and for any $q \in [1,\infty)$, define $\mathfrak{S}_{q}(f)$  by
\begin{eqnarray}
\label{eq:Sq}
    \begin{split}
\mathfrak{S}_{q}(f) &:= \left(\mathbb E_{x \sim P}[\|\nabla_x f(x)\|_\star^q]\right)^{1/q}.
    \end{split}
\end{eqnarray}
(In particular, if $\|\cdot\|$ is the euclidean $L_2$-norm and $q=2$, then $\mathfrak{S}_q(f)^2$ is the Dirichlet energy defined in \eqref{eq:sob2} as our measure of robustness). We have the following

\textbf{(A) General case.} $\mathfrak{S}_{q}(f)$ is the right derivative of the mapping $\delta \mapsto R_{\delta}(f)^{1/q}$ at $\delta = 0$. More precisely, we have the following
\begin{eqnarray}
    \lim_{\delta \to 0^+}\frac{R_{q,\delta}(f)^{1/q}}{\delta} = \mathfrak{S}_{q}(f),
\end{eqnarray}
or equivalently, $R_{q,\delta}(f) = \delta^q\cdot \mathfrak{S}_{P}(f)^q + \text{ Higher order terms in }\delta^q$.

\textbf{(B) Case of Dirichlet energy}
In particular, if $\|\cdot\|$ is the euclidean $L_2$-norm, and we take $q=2$,
\begin{eqnarray}
R_{\delta}(f) = \delta^2\cdot \mathfrak{S}_{P}(f)^2 + \text{ Higher order terms in }\delta^2.
\end{eqnarray}
\end{restatable}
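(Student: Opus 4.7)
The plan is two-fold: first establish the pointwise limit $\Delta_f(x;\delta)/\delta \to \|\nabla_x f(x)\|_\star$ at every point where $f$ is differentiable, then pass to $q$-th powers and interchange the limit with the expectation under $P$.

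For the pointwise step, fix $x$ at which $f$ is differentiable and write $f(x+v)-f(x) = \langle \nabla_x f(x), v\rangle + r_x(v)$ with $r_x(v)/\|v\| \to 0$ as $v \to 0$. Taking the supremum over $\|v\|\le \delta$ and using the definition of the dual norm gives the upper bound
\[
\Delta_f(x;\delta) \le \delta\,\|\nabla_x f(x)\|_\star + \sup_{\|v\|\le \delta}|r_x(v)|.
\]
For the matching lower bound, pick $u^\star \in \mathbb R^d$ with $\|u^\star\|\le 1$ and $\langle \nabla_x f(x), u^\star\rangle = \|\nabla_x f(x)\|_\star$; setting $v = \delta u^\star$ yields
\[
\Delta_f(x;\delta) \ge \delta\,\|\nabla_x f(x)\|_\star - |r_x(\delta u^\star)|.
\]
Since $|r_x(v)| = o(\|v\|) = o(\delta)$, dividing both bounds by $\delta$ and sending $\delta \to 0^+$ gives $\Delta_f(x;\delta)/\delta \to \|\nabla_x f(x)\|_\star$ at every such $x$, and hence $P$-a.e.

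For the integration step, raise to the $q$-th power to obtain $(\Delta_f(x;\delta)/\delta)^q \to \|\nabla_x f(x)\|_\star^q$ $P$-a.e. Invoking dominated convergence,
\[
\frac{R_{q,\delta}(f)}{\delta^q} \;=\; \mathbb E_{x\sim P}\!\left[\left(\tfrac{\Delta_f(x;\delta)}{\delta}\right)^{q}\right] \;\longrightarrow\; \mathbb E_{x\sim P}[\|\nabla_x f(x)\|_\star^q] \;=\; \mathfrak{S}_q(f)^q,
\]
from which part (A) follows after taking a $q$-th root. Part (B) is then immediate: when $\|\cdot\|$ is the Euclidean norm it equals its own dual, so $\mathfrak{S}_2(f)^2$ coincides with the Dirichlet energy $\mathfrak{S}(f)^2$ of \eqref{eq:sob2}, giving $R_\delta(f) = \delta^2\,\mathfrak{S}(f)^2 + o(\delta^2)$.

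The main obstacle is justifying the exchange of limit and expectation, since the lemma's blanket hypothesis is only a.e.\ differentiability. A clean sufficient condition is the existence of a $P$-integrable (in $L^q$) envelope $L(x)$ with $\Delta_f(x;\delta)/\delta \le L(x)$ for all small $\delta$; this is the case, for instance, when $f$ has polynomial growth and $P$ is sub-Gaussian, as holds throughout the body of the paper (cf.\ Condition \ref{cond:growth} and $P = N(0,I_d)$). Alternatively, one can truncate to a compact set on which $f$ is Lipschitz via Rademacher's theorem, apply bounded convergence there, and let the truncation radius tend to infinity using standard Gaussian tail bounds; either route closes the argument.
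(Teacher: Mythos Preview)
Your proof follows essentially the same route as the paper's: both first establish the pointwise limit $\Delta_f(x;\delta)/\delta \to \|\nabla_x f(x)\|_\star$ via the Taylor expansion and dual-norm argument (this is the paper's Lemma~\ref{lm:strongslope}), and then pass the limit through the expectation. The only notable difference is that the paper justifies the interchange by invoking ``Fubini's Theorem'' without further comment, whereas you correctly identify dominated convergence as the relevant tool and explicitly flag the need for an $L^q$ envelope---a hypothesis the paper does not state but which you rightly note is satisfied in the concrete setting of the paper. In that sense your argument is the same as the paper's but slightly more careful about the interchange step.
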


\begin{restatable}{rmk}{}
A heuristic argument was used in \cite{yann2019firstorder} to justify the use of average (dual-)norm of gradient (i.e the average local Lipschitz constant) $\mathbb E_{x \sim N(0,I_d)}[\|\nabla_x f(x)\|_\star]$ (corresponding to $q=1$ in the above) as a proxy for the adversarial generalization.
\end{restatable}

The proof of Lemma \ref{lm:justification} follows directly \emph{Fubini's Theorem} and the following lemma.
\begin{restatable}{lm}{strongslope}
\label{lm:strongslope}
If $f$ is differentiable at $x$, then the function $\delta \mapsto \Delta_f(x;\delta):=\underset{\|x'-x\| \le \delta}{\sup}|f(x')-f(x)|$ is right-differentiable at $0$ with derivative given by $\Delta_f'(x;0) = \|\nabla_x f(x)\|_\star$.
\end{restatable}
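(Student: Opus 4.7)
The plan is a direct squeeze argument, bounding $\Delta_f(x;\delta)/\delta$ above and below by quantities that both converge to $\|\nabla_x f(x)\|_\star$ as $\delta \to 0^+$. The only ingredients are the first-order Taylor expansion of $f$ at $x$ (which is available because $f$ is differentiable there) and the variational characterization of the dual norm.

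First, by differentiability, write $f(x+v)-f(x)=\langle \nabla_x f(x),v\rangle + r(v)$ where the remainder satisfies $r(v)=o(\|v\|)$, i.e.\ for every $\varepsilon>0$ there is a $\delta_0>0$ such that $|r(v)|\le \varepsilon\|v\|$ whenever $\|v\|\le\delta_0$. For the upper bound, I apply the triangle inequality together with the definition of the dual norm, $|\langle \nabla_x f(x),v\rangle|\le \|\nabla_x f(x)\|_\star\|v\|$, to obtain, for all $v$ with $\|v\|\le\delta\le\delta_0$,
\begin{equation*}
|f(x+v)-f(x)|\;\le\;(\|\nabla_x f(x)\|_\star+\varepsilon)\,\delta.
\end{equation*}
Taking the supremum over such $v$ gives $\Delta_f(x;\delta)/\delta\le \|\nabla_x f(x)\|_\star+\varepsilon$.

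For the lower bound, I exploit that in $\mathbb R^d$ the sup defining the dual norm is attained: there exists $u\in\mathbb R^d$ with $\|u\|=1$ and $\langle \nabla_x f(x),u\rangle=\|\nabla_x f(x)\|_\star$ (trivial if $\nabla_x f(x)=0$). Setting $v=\delta u$, which satisfies $\|v\|\le\delta$, the same remainder estimate yields, for $\delta\le\delta_0$,
\begin{equation*}
|f(x+v)-f(x)|\;\ge\;\langle \nabla_x f(x),v\rangle-|r(v)|\;\ge\;\delta\,(\|\nabla_x f(x)\|_\star-\varepsilon),
\end{equation*}
so that $\Delta_f(x;\delta)/\delta\ge\|\nabla_x f(x)\|_\star-\varepsilon$. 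Since $\varepsilon>0$ is arbitrary, combining the two bounds gives $\lim_{\delta\to0^+}\Delta_f(x;\delta)/\delta=\|\nabla_x f(x)\|_\star$, which is exactly right-differentiability of $\delta\mapsto\Delta_f(x;\delta)$ at $0$ with the claimed derivative (noting $\Delta_f(x;0)=0$).

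There is no real obstacle here; the only minor subtlety is to keep the role of the primal and dual norms straight and to handle the attainment of the dual norm's defining supremum (which is automatic in finite dimensions). Once the lemma is in hand, Lemma \ref{lm:justification} follows by taking the $q$-th power, applying Fubini/monotone convergence to interchange the expectation $\mathbb E_{x\sim P}$ with the limit $\delta\to 0^+$, and taking a $q$-th root.
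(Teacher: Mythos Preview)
Your proof is correct and follows essentially the same route as the paper's: both argue via the first-order Taylor expansion and a squeeze, bounding $\Delta_f(x;\delta)/\delta$ above via the triangle inequality and the dual-norm inequality, and below via the reverse triangle inequality with a vector attaining the dual norm. If anything, your version is slightly more careful in making the $\varepsilon$--$\delta_0$ control of the remainder explicit and in noting that attainment of the dual-norm supremum is automatic in finite dimensions; the paper's write-up leaves these implicit.
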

% \justification*

\begin{proof}
As $f$ is differentiable, $f(x') = f(x) + \nabla_x f(x)^\top (x'-x) + o(\|x'-x\|)$ around $x$. Therefore for sufficiently small $\delta$, if $B(x;\delta)$ is ball of radius $\delta$ around $x$, then
\begin{eqnarray}
    \begin{split}
\Delta_f(x;\delta) &= \sup_{x' \in B(x;\delta)}\mid \nabla_x f(x)^\top (x'-x)+o(\|x'-x\|)\mid\\
&\le \sup_{x' \in B(x;\delta)}|\nabla_x f(x)^\top(x'-x)| + \sup_{y \in B(x;\delta)} o(\|x'-x\|)\\
&=\|\nabla_x f(x)\|_\star\delta + \sup_{y \in B(x;\delta)} o(\|x'-x\|) \frac{\Delta_f(x;\delta)}{\delta}\\
&\le\|\nabla_x f(x)\|_\star+ \sup_{x' \in B(x;\delta)} \frac{ o(\|x'-x\|)}{\delta}
\end{split}
\end{eqnarray}
% where $\sup_{x' \in B(x;\delta)}|\nabla_x f(x)(x'-x)|=\|\nabla_x f(x)\|_\star\delta$ follows from Cauchy-Schwarz.
Note that $\sup_{x' \in B(x;\delta)} \dfrac{ o(\|x'-x\|)}{\delta}\rightarrow 0$. This proves $\limsup_{\delta \to 0^+}(1/\delta)\Delta_f(x;\delta) \le \|\nabla_x f(x)\|_\star$. Similarly, one computes 
\begin{eqnarray}
    \begin{split}
\Delta_f(x;\delta) &=\sup\mid \nabla_x f(x)^\top (x'-x)+o(\|x'-x\|)\mid \\ &\ge \sup|\nabla_x f(x)^\top (x'-x)| - \sup o(\|x'-x\|) \\ &=\|\nabla_x f(x)\|\delta -\sup o(\|x'-x\|)
    \end{split}
\end{eqnarray}

Hence $\liminf_{\delta \to 0^+}(1/\delta)\Delta_f(x;\delta) \ge \|\nabla_x f(x)\|_\star$, and we conclude that $\delta \mapsto \Delta_f(x;\delta)$ is differentiable at $\delta=0$, with derivative $\Delta_f'(x;0) = \|\nabla_x f(x)\|_\star$ as claimed.
\end{proof}
\begin{proof}[Proof of Lemma \ref{lm:justification}]
By basic properties of limits, one has
\begin{eqnarray}
\begin{split}
\left(\lim_{\delta \to 0^+}\frac{R_{q,\delta}(f)^{1/q}}{\delta^q}\right)^q &= \lim_{\delta \to 0^+}\frac{R_{q,\delta}(f)}{\delta}\\
&= \lim_{\delta \to 0^+}\frac{\mathbb E_{x \sim P}[|\Delta_f(x;\delta)|^q]}{\delta}\\
&= \mathbb E_{x \sim P}\left[\lim_{\delta \to 0^+}\frac{|\Delta_f(x;\delta)|^q}{\delta}\right]\\
&=\mathbb E_{x \sim P}\left[\left(\lim_{\delta \to 0^+}\frac{|\Delta_f(x;\delta)|}{\delta}\right)^q\right]\\
&= \mathbb E_{x \sim P}[\|\nabla_x f(x)\|_\star^q]\\
&:= \mathfrak{S}_q(f)^q,
\end{split}
\end{eqnarray}
where the 3rd line is thanks to \emph{Fubini's Theorem}, and the 5th line is thanks to lemma \ref{lm:strongslope} (and the fact that $\Delta_f(x;0) \equiv 0$). Noting that $R_{q,0}(f) \equiv 0$ then concludes the proof.
\end{proof}

\section{Neural networks at (random) initialization}
\label{sec:init}
We now consider networks at initialization, wherein the hidden weights matrix $W=(w_1,\ldots,w_m)$ is a random $m \times d$ matrix with iid rows from $N(0,\Gamma)$ as in the random features regime \eqref{eq:rf}, but we freeze the output weight vector $z=z_0 \in \mathbb R^m$ at random initialization, with random iid entries from ~$N(0,1/m)$, following standard initialization procedures. Let ~$f_\init$ denote this random network, i.e.,
\begin{eqnarray}
f_\init(x) := (\ainit)^\top \sigma(Wx) = \sum_{j=1}^m \ainit_j \sigma(x^\top w_j).
\label{eq:finit}
\end{eqnarray}
% where the hidden weights matrix $W=(w_1,\ldots,w_m)$ is a random $m \times d$ matrix with iid rows from $N(0,\Gamma)$ and the output $a$ is a random vector from $N(0,(1/m)I_m)$ independent of $W$.
% \subsection{The result}
% We have the following result.
% \begin{mdframed}[backgroundcolor=cyan!10,rightline=false,leftline=false]
\vspace{-.5cm}
\begin{restatable}{thm}{untrained}
\label{thm:untrained}
Under the Conditions \ref{cond:traces} and \ref{cond:growth}, we have the identity in the limit \eqref{eq:proportionate},

% \begin{eqnarray}
$\erob(f_\init)
% := \frac{\mathfrak{S}(f_\init)^2}{\mathfrak{S}(f_\star)^2}
= \dfrac{\|\sigma'\|^2_{L^2(N(0,1))} + \lambda_3^2\|\Gamma\|_F^2/2 + \lambda_2^2\|\Gamma\|_F^2}{4\|B\|_F^2}+ o_{d,\mathbb P}(1),
$
% \end{eqnarray}
where $\lambda_k$ is the $k$th Hermite coefficient of the activation function $\sigma$. In particular, for the quadratic activation function $\sigma(t) = t^2 -1$, we have
% \begin{eqnarray}
$\erob(f_\init)
% := \frac{\mathfrak{S}(f_\init)^2}{\mathfrak{S}(f_\star)^2}
= \dfrac{1+\|\Gamma\|^2_F}{\|B\|^2_F}+ o_{d,\mathbb P}(1).
$
% \end{eqnarray}
\end{restatable}
% \end{mdframed}
% The result is proved in the Appendix \ref{subsec:untrained_proof}.
Analogously, the test error for the NN at initialization is given by the following result.
\begin{restatable}[]{thm}{untrainedgenerr}
Under the Conditions \ref{cond:traces} and \ref{cond:growth}, we have the following identity in the limit \eqref{eq:proportionate},
%\begin{eqnarray}
$\egen(f_\init) = 1 + \dfrac{\|\sigma\|^2_{L^2(N(0,1))} +  \lambda_2^2\|\Gamma\|_F^2/2}{2\|B\|_F^2} + o_{d,\mathbb P}(1).$
% \end{eqnarray}
In particular, for the quadratic activation $\sigma(t) := t^2-1$, we have the following identity
% \begin{eqnarray}
% \begin{split}
$\egen(f_\init) = 1 + \dfrac{1+\|\Gamma\|_F^2}{\|B\|_F^2} + o_{d,\mathbb P}(1).$
% \end{split}
% \end{eqnarray}
\label{thm:untrainedgenerr}
\end{restatable}

% \begin{figure}[!h]
%     \centering
%     \includegraphics[width=.9\linewidth]{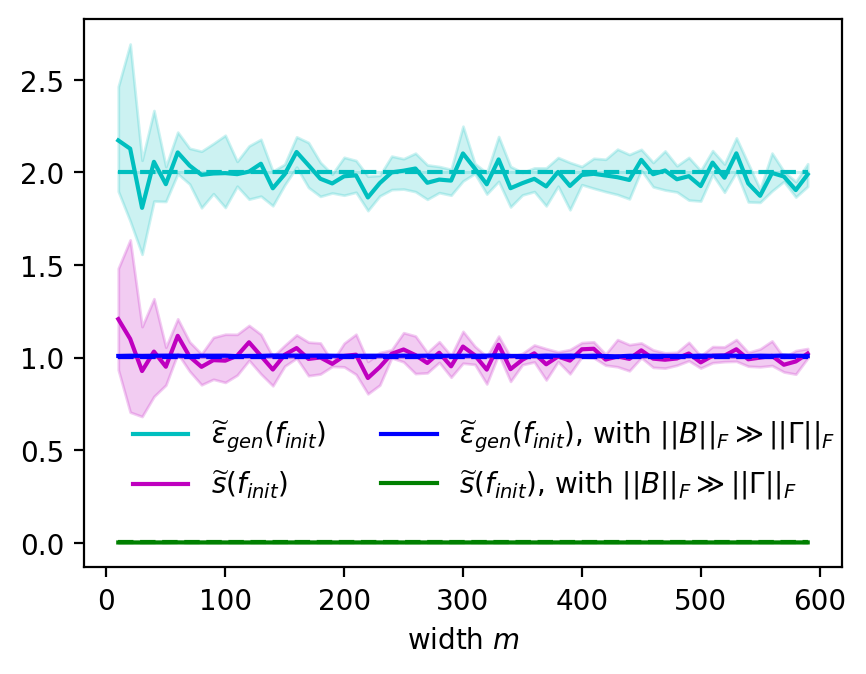}
%     \caption{Normalized test / test error $\egen(f_\init)$ and nonrobustness $\erob(f_\init)$ for neural network trained at initialization. Here, the input-dimension is $d=450$.
%     % and the regularization $\lambda$ is zero.
%     Broken curves correspond to theoretical predictions, while solid curves correspond to actual values observed in experiment}
%     \label{fig:init}
% \end{figure}

% \subsection{Intepretation Theorems \ref{thm:untrained} and \ref{thm:untrainedgenerr}}
% Thm. \ref{thm:untrainedgenerr} reveals that the (normalized) test error $\egen$ is equal to 1 plus the (normalized) nonrobustness $\erob(f_\init)$, i.e
%     \begin{eqnarray}
%     \egen(f_\init) - \erob(f_\init) \to 1.
%     \end{eqnarray}
%     This is in contrast to the generaliation  / robustness tradeoff seen sofar in the other regimes.
%     % \abcomment{only one regime discussed before this section?}.
%     In particular, the relative test error at initialization is no less than~$1$, that is, the model $f_\init$ is at least as inaccurate as the null predictor, while the robustness of the former can be arbitrarily bad (Thm. \ref{thm:untrained}) depending on the covariance matrix of the hidden neurons at initialization. \abcomment{On se perd dans cette phrase.}

% \subsection{The impact of training.}
Combining Thm. \ref{thm:untrainedgenerr} with formula \eqref{eq:sgdgenerr}, we deduce that training a randomly initialized neural network always improves its test error, as one would expect. On the other hand, combining Thm. \ref{thm:sobsgd} and Thm. \ref{thm:untrained}, we deduce that fully training the networks~\eqref{eq:sgd} via SGD:

(1) Degrades robustness if $\|B\|_F^2 \gtrsim \|\Gamma\|_F^2 + 1$. This is because in this case, the parameters of the model align to the signal matrix $B$, which has much larger energy than the parameters at initialization. Indeed, SGD tends to move the covariance structure of the hidden neurons from $\Gamma$ to $B$.
% See Thm. 3 or \cite{Ghorbani19}.

(2) Improves robustness if $\|B\|_F^2 \lesssim \|\Gamma\|_F^2 + 1$.
\section{Miscellaneous}
\subsection{Lazy training of output layer in  RF regime}
We now study the influence of the initialization on the random features regime.
Let $W=(w_1,\ldots,w_m) \in \mathbb R^{m \times d}$ with random rows drawn iid from $N(0,\Gamma)$ as in the RF model \eqref{eq:rf}, and let the output layer be initialized at $z=\ainit \sim N(0,(1/m)I_d)$ and updated via single-pass gradient-flow on the entire data distribution (infinite data). In this so-called random features lazy (RFL) regime, we posit the following approximation neural network \eqref{eq:nn}
\begin{eqnarray}
  f_\rfl(x) := z_{\rfl,\lambda}^\top \sigma(Wx)= f_\init(x) + \delta_\lambda^\top \sigma(W x),
\end{eqnarray}
where $z_{\rfl,\lambda}:=\ainit + \delta_\lambda$ and $\delta_\lambda \in \mathbb R^m$ solves the following ridge-regression problem
\begin{eqnarray}
\begin{split}
\arg\min_{\delta \in \mathbb R^m}\mathbb E_{x \sim N(0,I_d)}[(\delta^\top \sigma(Wx)+f_\init(x)-f_\star(x))^2]+\lambda\|\delta\|^2.
\end{split}
\end{eqnarray}
% \begin{eqnarray}
% \dot \delta(t) = -\varepsilon_{{\rm gen}}(f_{W,\delta(t)}),\,\delta(0) = 0,
% \end{eqnarray}
% and as usual, $\varepsilon_{{\rm gen}}(f)
% := \|f-f_\star\|_{L^2(N(0,I_d))}^2
%:= \mathbb E_{x \sim N(0,I_d)}|f(x)-f_\star(x)|^2$ denotes the test error of a model $f$.
The use of the ridge parameter here can be thought of as a proxy for early-stopping at iteration $t \propto 1/\lambda$ \cite{implicitregTibshirani2020}; $\lambda=0$ corresponds to training the output layer to optimality.
% In terms of second-order statistics\footnote{This is sufficient for analyzing test error and robustness.}, $a(t):=\ainit + \delta(t)$ induces the same neural network as $z_\rfl,\lambda}$ where $\lambda \propto 1/t$. See the recent paper \cite{implicitregTibshirani2020} on implicit regularization of early-stopped least-squares gradient-flow. Thus, $z_{\rfl,\lambda}=\deltz_\lambda + z_0$, where $\delta \in \mathbb R^m$ solves the ridge-regression problem
% $$
% \arg\min_{\delta \in \mathbb R^m}\varepsilon_{{\rm gen}}(f_\rfl) + \lambda\|\delta\|^2
% $$

% \begin{restatable}{rmk}{}
% It may be argued that strictly speaking, the analytic form \eqref{eq:alazy} is not what "lazy training" is doing, since the former is a finite-epoch model while \eqref{eq:alazy} with $\lambda=0$ corresponds to the asymptotic / finite-data limit. We counter-argue that as regards second-order statistics, ridge regularization in general-linear regression is equivalent least-squares gradient-flow, early-stopped at time $t=1/\lambda$, independently of and the distribution of the (here, nonlinear) features. For example, see the recent work by Tibshirani and co-workers. This argument is sufficient, since second order statistics are sufficient for expressing test errors and nonrobustness profiles.
% \end{restatable}

% \subsection{Comparing with random features regime}
% This question is answered by the following result, which will be commented shortly.
\begin{restatable}{thm}{rflthm}
\label{thm:rflthm}
We have the following identities
\begin{align}
    \mathbb E_{\ainit}[\egen(f_{\rfl,\lambda})] &= \egen(f_\rf) + \frac{\trace(P_\lambda^2 U)/m}{2\|B\|_F^2} + o_{d,\mathbb P}(1)\\
    \mathbb E_{\ainit}[\erob(f_{\rfl,\lambda})] &= \erob(f_\rf) + \frac{\trace(P_\lambda^2 C)/m}{4\|B\|_F^2} + o_{d,\mathbb P}(1),
\end{align}
where $U=U(W)$ and $C=C(W)$ are the random matrices defined in \eqref{eq:randomU} and \eqref{eq:randomCbis} respectively.
\end{restatable}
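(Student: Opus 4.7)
The plan is to explicitly solve the ridge-regression for $\delta_\lambda$, write $z_{\rfl,\lambda}$ as a deterministic part plus a fluctuation linear in $\ainit$, and then compute expectations over the isotropic Gaussian initialization $\ainit \sim N(0,(1/m)I_m)$ using only the second moment $\mathbb E[\ainit \ainit^\top] = (1/m) I_m$.

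First I would derive the closed form of the updated output weights. Setting the gradient of the ridge objective to zero yields $\delta_\lambda = U_\lambda^{-1}(v - U \ainit)$, with $U_\lambda := U + \lambda I_m$, $U$ and $v$ as in \eqref{eq:randomU}--\eqref{eq:randomv}. Therefore
\begin{equation*}
z_{\rfl,\lambda} \;=\; \ainit + U_\lambda^{-1}(v - U \ainit) \;=\; U_\lambda^{-1} v \;+\; (I_m - U_\lambda^{-1} U)\ainit \;=\; z_{\rf,\lambda} + P_\lambda \ainit,
\end{equation*}
where $z_{\rf,\lambda} := U_\lambda^{-1} v$ is the ridge-regularized RF solution and $P_\lambda := \lambda U_\lambda^{-1}$ is the symmetric psd shrinkage matrix implicit in the statement. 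In the limit \eqref{eq:proportionate}, $\egen(f_{\rf,\lambda}) = \egen(f_\rf) + o_{d,\mathbb P}(1)$ (the $o_{d,\mathbb P}(1)$ accounts for the mild abuse of notation in the statement; this reduction follows the same random-matrix arguments used to obtain \eqref{eq:rfgenerr}).

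Next I would compute $\mathbb E_{\ainit}[\egen(f_{\rfl,\lambda})]$. Because $f_{\rfl,\lambda}(x) - f_\star(x) = \bigl(z_{\rf,\lambda}^\top \sigma(Wx) - f_\star(x)\bigr) + (P_\lambda \ainit)^\top \sigma(Wx)$, expanding the square and integrating in $x$ gives three terms. The cross term is linear in $\ainit$ and hence vanishes after averaging since $\mathbb E[\ainit] = 0$. The purely quadratic-in-$\ainit$ term produces
\begin{equation*}
\mathbb E_{\ainit}\bigl[\ainit^\top P_\lambda\, \mathbb E_x[\sigma(Wx)\sigma(Wx)^\top]\, P_\lambda \ainit \bigr] \;=\; \trace\!\bigl(P_\lambda U P_\lambda\,\tfrac{1}{m} I_m\bigr) \;=\; \tfrac{1}{m}\trace(P_\lambda^2 U),
\end{equation*}
using $\mathbb E_x[\sigma(Wx)\sigma(Wx)^\top] = U$ and symmetry of $P_\lambda$. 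Dividing by $\|f_\star\|^2 = 2\|B\|_F^2$ (from $\mathbb E[(x^\top B x - \trace B)^2] = 2\trace(B^2)$ via Isserlis/Wick) yields the first identity.

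The robustness identity follows by the same recipe applied to Lemma \ref{lm:formula}: since $\mathfrak{S}(f_{W,z,s})^2 = z^\top C z$ with $C = C(W)$ independent of $\ainit$,
\begin{equation*}
\mathbb E_{\ainit}[\mathfrak{S}(f_{\rfl,\lambda})^2] \;=\; \mathfrak{S}(f_{\rf,\lambda})^2 \;+\; 2\, \mathbb E[\ainit]^\top P_\lambda C z_{\rf,\lambda} \;+\; \mathbb E_{\ainit}[\ainit^\top P_\lambda C P_\lambda \ainit],
\end{equation*}
the middle term vanishes, and the last equals $(1/m)\trace(P_\lambda^2 C)$. Dividing by $\mathfrak{S}(f_\star)^2 = 4\|B\|_F^2$ (since $\nabla f_\star(x) = 2Bx$ and $\mathbb E\|2Bx\|^2 = 4\trace(B^2)$) produces the second identity. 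Replacing $\mathfrak{S}(f_{\rf,\lambda})^2$ by $\mathfrak{S}(f_\rf)^2$ introduces only the $o_{d,\mathbb P}(1)$ slack.

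There is no serious obstacle here: the calculation is essentially linear algebra exploiting the isotropy of $\ainit$ and symmetry of $P_\lambda$. The only subtle points are (i) recognizing that $P_\lambda = \lambda U_\lambda^{-1}$ from the decomposition $z_{\rfl,\lambda} = z_{\rf,\lambda} + P_\lambda \ainit$, and (ii) justifying replacing the ridge quantities $\egen(f_{\rf,\lambda})$, $\erob(f_{\rf,\lambda})$ by their $\lambda$-free counterparts up to $o_{d,\mathbb P}(1)$, which uses the same concentration and Silverstein-type arguments already invoked for Theorem \ref{thm:rfratio}.
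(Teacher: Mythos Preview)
Your argument is essentially identical to the paper's: derive $z_{\rfl,\lambda}=z_{\rf,\lambda}+P_\lambda \ainit$ with $P_\lambda=I_m-U_\lambda^{-1}U$ (your identification $P_\lambda=\lambda U_\lambda^{-1}$ is the same matrix), expand the quadratic forms for $\varepsilon_{\rm gen}$ and $\mathfrak{S}^2$, kill the cross terms via $\mathbb E[\ainit]=0$, and evaluate the quadratic-in-$\ainit$ terms using $\mathbb E[\ainit\ainit^\top]=(1/m)I_m$.

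One caveat: your claim that $\egen(f_{\rf,\lambda})=\egen(f_\rf)+o_{d,\mathbb P}(1)$ and $\erob(f_{\rf,\lambda})=\erob(f_\rf)+o_{d,\mathbb P}(1)$ for fixed $\lambda>0$ is not correct in general (regularization shifts $\overline{\lambda}$ to $\overline{\lambda}+\lambda$ in $A_0$, which changes $\psi_1,\psi_2$ by an $\Theta(1)$ amount). The paper's own proof in fact stops at $\varepsilon_{\rm gen}(f_{\rf,\lambda})+\trace(P_\lambda^2 U)/m$ and $\mathfrak{S}(f_{\rf,\lambda})^2+\trace(P_\lambda^2 C)/m$, without passing to the unregularized $f_\rf$; the appearance of $f_\rf$ rather than $f_{\rf,\lambda}$ in the statement is a notational slip in the paper, not something you need to (or can) justify. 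Drop that step and your proof matches the paper's exactly.
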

Because $P_\lambda^2$, $U$, and $C$ are psd matrices, the residual terms $\trace(P_\lambda^2 U)/m$ and $\trace(P_\lambda^2 C)/m$ in the above formulae are nonnegative. We deduce that random initialization of the output weights hurts both test error and robustness, as long as the RFL regime is valid.

% \subsection{Detailed analysis of limiting cases}
\textbf{Infinitely regularized case $\lambda \to \infty$.}
Note that $P_\lambda$ converges in spectral norm a.s
%(over the random hidden weights matrix $W$)
to the identity matrix $I_m$ in the limit $\lambda \to \infty$. Thus, in this limit, $z_{\rf,\lambda}$ converges almost-surely to the all-zero $m$-dimensional vector and so, thanks to \eqref{eq:alazy}, the output weights $z_{\rfl,\lambda}$ of $f_{\rfl,\lambda}$ converge to the value at initialization $\ainit$. Therefore, $f_{\rfl,\lambda}$ and all its derivatives converge a.s point-wise its state $f_\init$ at initialization \eqref{eq:finit}.
We deduce that in the $\lambda \to \infty$ limit, the neural network in the lazy regime is equivalent to an untrained model $f_\init$, in terms of test error and robustness.
This does not come as much of a surprise, since $\lambda \to \infty$, corresponds to early-stopping at $t = 0$, i.e., no optimization.

\textbf{Unregularized case $\lambda \to 0^+$.}
By an analogous argument as above, $P_\lambda$ converges a.s. to the all-zero $m \times m$ matrix in the limit $\lambda \to 0^+$, and so thanks to \eqref{eq:alazy}, we have the almost-sure convergence $\|z_{\rfl,\lambda} - z_{\rf,\lambda}\| \to 0$. We deduce that in this limit, the unregularized lazy training regime is exactly equivalent to the unregularized vanilla RF regime.
Thus, the random features lazy (RFL) regime corresponding to the approximation $f_\rfl$ is an interpolation between the random features regime (corresponding to $f_\rf$) and the untrained regime (corresponding to $f_\init$).

Although this is not useful in our infinite data regime, we remark that a non-zero amount of regularization is often crucial for good statistical performance with finite samples. In this, case, $P_\lambda$ is non-zero, and we expect both the test error and robustness to become worse in this lazy RF approximation, compared to vanilla RF.

\subsection{Effect of regularization in RF regime}
Suppose the estimation of the output weights of the RF model is regularized, i.e., for a fixed $\lambda \ge 0$, consider instead the model $f_{\rf,\lambda}(x):= z_{\rf,\lambda}^\top \sigma(Wx)$, where $z_{\rf,\lambda}$ is chosen to solve the following ridge-regularized problem
\begin{eqnarray}
\min_{z \in \mathbb R^m}\|f_{W,z}-f_\star\|_{L^2(N(0,I_d))}^2 +\lambda\|z\|^2.
\end{eqnarray}
A simple computation gives the explicit form
\begin{eqnarray}
z_{\rf,\lambda} = U_\lambda^{-1}v,
\end{eqnarray}
where $U_\lambda:=U+\lambda I_m$, $U=U(W)$ is the random matrix defined in \eqref{eq:randomU}, and $v \in \mathbb R^m$ is random vector defined in \eqref{eq:randomv}.
An inspection of the proof of Theorem \ref{thm:rfratio} (see Appendix~\ref{subsec:rf_proof}) reveals that the situation in the presence of ridge regularization is equivalent to the unregularized case in which we replace $\overline{\lambda}$ by $\overline{\lambda}+\lambda$ in the definition of the matrix $A_0$ which appears in \eqref{eq:psis}. This has the effect of decreasing $\psi_1$ and $\psi_2$, and thanks to \eqref{eq:rfratio}, decreasing the robustness % $\mathfrak{S}(f_{\rf,\lambda})^2$
of the random features model. That is, %\abcomment{Can we just write the formula instead of the previous sentences?}
$\erob(f_{\rf,\lambda})$ is a decreasing function of the amount of regularization of $\lambda$, and in fact, $\underset{\lambda \to \infty}{\lim}\,\erob(f_{\rf,\lambda}) = 0$.

% In the case of standard initialization where $\Gamma = (1/d)I_d$, Theorem \ref{thm:untrained} predicts that $\mathfrak{S}(f_\init)^2 = \Theta(1)$. Consequently and by virtue of Lm. \ref{lm:justice}, on average a perturbation of $L_2$-norm $\mathcal O(1)$ is enough to change the output of $f_\init$ by a constant amount (and therefore fool a binary classifier). Such results have also been established in \cite{mostrelu2020,yann2019firstorder}.
\section{Technical proofs}
\label{sec:technical}
Before proving the main results of the manuscript, we first state and prove some auxiliary results which will be instrumental.

\subsection{Proof of Lemma \ref{lm:formula}: generic formula for (non)robustness of neural network}
Recall the definitions of the approximation error and robustness metrics from Section \ref{subsec:metrics}.
The following lemma was used to express the measure of (non)robustness $\mathfrak{S}(f_{w,z})^2$ of a two-layer neural network $f_{w,z}$ as a quadratic form in the output weights, with coefficient matrix which depends on the distribution of the hidden weights.
\formula*
\begin{proof}
One direcly computes $\nabla_x f_{W,z}(x) =  \sum_{j=1}^m  z_j\sigma'(x^\top w_j)w_j$, and so the Laplacian of $f_{W,z}$ at $x$ is given by
\begin{eqnarray}
\|\nabla_x f_{W,z}(x)\|^2 = \sum_{j,k=1}^m z_jz_k (w_j^\top w_k)\sigma'(x^\top w_j)\sigma'(x^\top w_k).
\label{eq:sobnn}
\end{eqnarray}
Thus, $\mathfrak{S}(f_{W,z})^2$ evaluates to
\begin{eqnarray*}
    \begin{split}
        \mathfrak S(f_{W,z})^2 &:= \mathbb E_{x \sim N(0,I_d)}\|\nabla_x f_{W,z}(x)\|^2 = \sum_{j,k=1}^m z_jz_k (w_j^\top w_k) \mathbb E_x[\sigma'(x^\top w_j)\sigma'(x^\top w_k)] =z^\top C(W) z,
        % \\
       % &= \frac{1}{d}\sum_{j,k=1}^m(w_j^\top w_k)^2  = \frac{1}{d}\|WW^\top\|_F^2.
    \end{split}
\end{eqnarray*}
where the $m \times m$ psd matrix $C(W)$ is as defined in Lemma \ref{lm:formula}. In particular, for the activation function $\sigma(t) := t^2 + s$, one computes
\begin{eqnarray*}
\begin{split}
c_{j,k} &:= (w_j^\top w_k)\mathbb E_{x \sim N(0,I_d)}[\sigma'(x^\top w_j)\sigma'(x^\top w_k)]\\
&= 4(w_j^\top w_k)\mathbb E_{x \sim N(0,I_d)}[(x^\top w_j)(x^\top w_k)] = 4(w_j^\top w_k)^2,
\end{split}
\end{eqnarray*}
where the last step is due to the fact that
\begin{eqnarray*}
\begin{split}
\mathbb E_{x \sim N(0,I_d)}[(x^\top w_j)(x^\top w_k)] &= \mathbb E_x[x^\top w_jw_k^\top x] = \trace(\cov(x)w_jw_k^\top) =
% \trace(\Gamma w_j w_k^\top) =
w_j^\top w_k,
\end{split}
\end{eqnarray*}

by a standard result on the mean of a quadratic form.
\end{proof}

% We will often compare the (non)robustness of a student model, as a ratio of that of the ground truth model.
\begin{restatable}[Robustness of ground-truth model]{cor}{}
It holds that $\mathfrak{S}(f_\star)^2 = 4\|B\|_F^2$.
% In particular, if $\Lambda=I_d$, then $\mathfrak{S}(f_\star)^2 = 4\|B\|_F^2$.
\label{cor:robfstar}
\end{restatable}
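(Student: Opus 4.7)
The plan is to prove the identity $\mathfrak{S}(f_\star)^2 = 4\|B\|_F^2$ by a short, direct computation of the gradient and its second moment. Since $f_\star(x) = x^\top B x + b_0$ with $B$ psd (hence symmetric), one immediately has $\nabla_x f_\star(x) = 2Bx$, so $\|\nabla_x f_\star(x)\|^2 = 4\, x^\top B^\top B x = 4\, x^\top B^2 x$.

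Taking the expectation with respect to $x \sim N(0, I_d)$ and using the standard quadratic-form identity $\mathbb{E}[x^\top M x] = \operatorname{tr}(M)$ for $M = B^2$, I get $\mathbb{E}_{x \sim N(0, I_d)}[\|\nabla_x f_\star(x)\|^2] = 4 \operatorname{tr}(B^2) = 4 \sum_{k=1}^d \lambda_k(B)^2 = 4\|B\|_F^2$, where I use that $B$ psd implies $\sigma_k(B) = \lambda_k(B)$ so $\operatorname{tr}(B^2) = \|B\|_F^2$. By the definition of $\mathfrak{S}$ in \eqref{eq:sob2}, this gives exactly $\mathfrak{S}(f_\star)^2 = 4\|B\|_F^2$.

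Alternatively, and as a consistency check, one can derive this from Lemma \ref{lm:formula} itself: diagonalize $B = \sum_{j=1}^d \lambda_j(B)\, u_j u_j^\top$ with orthonormal $u_j$ and nonnegative eigenvalues (using psd-ness), set $w_j := \sqrt{\lambda_j(B)}\, u_j$ and $z_j := 1$, so that $f_\star$ matches the neural-net template \eqref{eq:nn} with quadratic activation $\sigma(t) = t^2$ and offset $s = -b_0$. The formula $c_{j,k} = 4 (w_j^\top w_k)^2$ from Lemma \ref{lm:formula} then reduces to $c_{j,k} = 4\lambda_j(B)^2 \delta_{jk}$ by orthonormality, and summing $z^\top C z$ reproduces $4\|B\|_F^2$.

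There is no real obstacle here: the only nuance is remembering that $B$ being psd allows the identification $\sigma_k(B) = \lambda_k(B)$ used implicitly in writing $\|B\|_F^2 = \operatorname{tr}(B^2)$, and to acknowledge that the constant offset $b_0$ has no effect on the gradient and hence no effect on $\mathfrak{S}(f_\star)$.
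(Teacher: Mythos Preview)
Your proposal is correct. The direct computation you give first --- $\nabla_x f_\star(x)=2Bx$, hence $\mathfrak{S}(f_\star)^2=4\,\mathbb E[x^\top B^2 x]=4\operatorname{tr}(B^2)=4\|B\|_F^2$ --- is more elementary than the paper's route, which instead invokes Lemma~\ref{lm:formula} by viewing $f_\star$ as an instance of the neural-net template \eqref{eq:nn} with quadratic activation and all-ones output weights. Your ``consistency check'' via the eigen-decomposition of $B$ is essentially the paper's approach (same reliance on Lemma~\ref{lm:formula}), just with a concrete choice of hidden weights $w_j=\sqrt{\lambda_j(B)}\,u_j$ that makes the computation of $c_{j,k}=4(w_j^\top w_k)^2$ transparent. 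The direct argument buys simplicity and self-containment; the Lemma~\ref{lm:formula} route buys consistency with the general machinery used elsewhere in the paper.
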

\begin{proof}
For the first part follows directly from Lemma \ref{lm:formula} with activation function $\sigma(t) := t^2+b_0/d$ and fixed output weight vector $a=1_m:=(1,\ldots,1)$.
% For the second part, further set $\Lambda=I_d$ to get
% $\mathfrak{S}(f_\star)^2 = 4\sum_{j,k}b_{j,k}^2 = 4\|B\|_F^2$ as claimed.
\end{proof}

\subsection{Approximation of random matrices}
This section establishes some technical results for "linearizing" a number of complicated random matrices which occur in our analysis. We will make heavy use of random matrix theory (RMT) techniques developed in \cite{silverstein95,elkaroui2010,Ledoit2011,Dobriban2018}

We begin by recalling the following definition for future reference.
\keyconstants*

Let $U$ be the random $m \times m$ psd matrix defined in \eqref{eq:randomU} and let $v \in \mathbb R^m$ be the random vector defined in \eqref{eq:randomv}. Recall that $\lambda_k = \lambda_k(\sigma)$ is the $k$ Hermite coefficient of the activation function $\sigma$. Also recall the definition of the scalars $\overline{\lambda}$, $\kappa$, $\tau$, $\overline{\lambda'}$, and $\kappa'$ from \eqref{eq:allcoefs}. The following result was established in \cite{Ghorbani19}.
\begin{restatable}[Lemma 2 of \cite{Ghorbani19}]{prop}{}
If $\lambda_0=0$ and Conditions \ref{cond:traces}, \ref{cond:growth} are in place, then in the limit \eqref{eq:proportionate}, it holds that
\begin{align}
\|U-U_0\|_{op} &= o_{d,\mathbb P}(1),\label{eq:Uapprox}\\
\|v-(\tau/\sqrt{d})1_m\| &= o_{d,\mathbb P}(1),
\end{align}

where the random $m \times m$ psd matrix $U_0$ is defined by
\begin{eqnarray}
U_0:= \overline{\lambda} I_m + \lambda_1^2 WW^\top + (\kappa/d) 1_m1_m^\top + \mu\mu^\top,
\label{eq:randomU0}
\end{eqnarray}

and $\mu =(\mu_1,\ldots,\mu_m) \in \mathbb R^m$ with $\mu_i := \lambda_2\cdot (\|w_i\|^2-1)/2$.
% , $\overline{\lambda} := \mathbb E_G[\sigma(G)^2]-\lambda_1^2$
% , and $\kappa := d\cdot \lambda_2^2 \|\Gamma\|_F^2/2$, and $\tau:=\lambda_2\trace(B\Gamma)d$.
\label{prop:fittedarf}
\end{restatable}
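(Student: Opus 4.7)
The proof is a Hermite-expansion plus random-matrix-concentration argument, following the framework of El Karoui (2010) and its refinement in \cite{Mei2019}. Write $\sigma(t) = \sum_{p \ge 0} \lambda_p \mathrm{He}_p(t)$ in the orthonormal Hermite basis. For indices $j \ne k$ set $G_j := x^\top w_j/\|w_j\|$ and $\rho_{jk} := w_j^\top w_k/(\|w_j\|\|w_k\|)$; then $(G_j,G_k)$ is jointly standard Gaussian with correlation $\rho_{jk}$, and the Hermite orthogonality $\mathbb{E}[\mathrm{He}_p(G_j)\mathrm{He}_q(G_k)] = \delta_{pq}\rho_{jk}^p$ gives, after Taylor-expanding the $\|w_j\|$-dependence of the activation around unit norm (which is valid since $\mathrm{tr}(\Gamma)=1$ and $\|\Gamma\|_{op}=\mathcal{O}(1/d)$ force $\|w_j\|^2 = 1 + \mathcal{O}_\mathbb{P}(1/\sqrt{d})$), an expansion $U_{jk} = \sum_{p \ge 0}\lambda_p^2\,(w_j^\top w_k)^p + (\text{norm-corrections})$.

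Next I would truncate the Hermite series at $p = 2$ and collect the pieces. The $p = 0$ term drops because of Condition \ref{cond:nonlinear}(A). The $p = 1$ piece gives exactly $\lambda_1^2\,(WW^\top)_{jk}$, which is the second summand of $U_0$. For $p = 2$, the off-diagonal entries $\lambda_2^2 (w_j^\top w_k)^2$ concentrate around $\lambda_2^2 \|\Gamma\|_F^2 = 2\kappa/d$, producing the $(\kappa/d)\mathbf{1}_m\mathbf{1}_m^\top$ block of $U_0$; the on-diagonal deviations $\lambda_2^2(\|w_j\|^4-\|\Gamma\|_F^2)$, together with the norm-corrections absorbed into the $p = 0,1$ terms, assemble (to leading order) into the rank-one $\mu\mu^\top$ correction with $\mu_j = \lambda_2(\|w_j\|^2-1)/2$. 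The constant $\overline{\lambda}I_m$ summand arises from $\mathbb{E}[\sigma(G)^2] - \lambda_1^2 = \|\sigma\|^2 - \lambda_1^2$, the on-diagonal contribution not yet captured by the other pieces.

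The main obstacle, and the only part requiring serious work, is controlling the operator norm of the Hermite tail: a matrix whose entries are all of size $\mathcal{O}(1/\sqrt{d})$ can still have spectral norm of order $\sqrt{m}$, so naive Frobenius bounds are insufficient. The standard resolution is to invoke the kernel-matrix theorem of \cite{elkaroui2010} (as re-used by \citet{Ghorbani19,Mei2019}): for a smooth function $K$ with $K(0) = K'(0) = K''(0) = 0$, the random matrix $[K(\rho_{jk})]_{jk}$ has operator norm $o_{d,\mathbb{P}}(1)$ in the proportional regime $m/d \to \rho$. Applying this to the smoothed tail $K_{\mathrm{tail}}(t) := \sum_{p \ge 3}\lambda_p^2 t^p$, which is smooth and vanishes to order three at the origin thanks to Condition \ref{cond:growth}, yields $\|U - U_0\|_{op} = o_{d,\mathbb{P}}(1)$.

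The vector approximation is easier. Using Stein's lemma (or Isserlis' theorem) applied to the quadratic target $f_\star(x) = x^\top B x - \mathrm{tr}(B)$, one obtains $v_j = \lambda_2\,w_j^\top B w_j + $ (terms that are $o_{\mathbb{P}}(1/\sqrt{d})$ per coordinate, coming from higher Hermite interactions). The Hanson--Wright inequality then gives $w_j^\top B w_j = \mathrm{tr}(B\Gamma) + \mathcal{O}_\mathbb{P}(\|B\|_F\|\Gamma\|_{op})$, so $|v_j - \tau/\sqrt{d}|^2 = o_\mathbb{P}(1/d)$, and summing the $m \asymp d$ coordinates yields $\|v - (\tau/\sqrt{d})\mathbf{1}_m\| = o_{d,\mathbb{P}}(1)$ as claimed.
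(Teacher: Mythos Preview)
The paper does not prove this proposition at all; it is quoted verbatim as Lemma~2 of \cite{Ghorbani19} and used as a black box. Your sketch is, in outline, exactly the Ghorbani et al.\ argument (Hermite expansion of the kernel entries in the correlation $\rho_{jk}$, then an El Karoui--type operator-norm bound on the cubic-and-higher tail), so in that sense there is nothing to compare.

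One bookkeeping point worth correcting, since it is the only place your plan is off: the rank-one piece $\mu\mu^\top$ is \emph{not} assembled from ``$p=2$ on-diagonal deviations $\lambda_2^2(\|w_j\|^4-\|\Gamma\|_F^2)$''; that quantity is a scalar diagonal (of order $\lambda_2^2$) and belongs in the $\overline\lambda\,I_m$ block. The matrix $\mu\mu^\top$ is the $p=0$ contribution of the \emph{rescaled} activation $\sigma_j(t):=\sigma(\|w_j\|t)$: even though $\lambda_0(\sigma)=0$, one has $\lambda_0(\sigma_j)=\mathbb E_G[\sigma(\|w_j\|G)]\neq 0$ whenever $\|w_j\|\neq 1$, and a first-order expansion in $\|w_j\|-1$ yields $\lambda_0(\sigma_j)\approx \lambda_2(\|w_j\|^2-1)/2=\mu_j$, so that the $p=0$ term $\lambda_0(\sigma_j)\lambda_0(\sigma_k)$ is exactly $(\mu\mu^\top)_{jk}$. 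The paper itself makes this origin explicit in the paragraph immediately following the proposition (the extension to $\lambda_0\neq 0$, Lemma~\ref{lm:pertU}), where the matrix $T_0$ with entries $\lambda_0(\sigma_i)\lambda_0(\sigma_j)$ is shown to satisfy $\|T_0-\mu\mu^\top\|_{op}=o_{d,\mathbb P}(1)$. Your phrase ``norm-corrections absorbed into the $p=0$ term'' is pointing at the right mechanism, but the sentence as written misattributes the source.
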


A careful inspection of the proof of the estimate \eqref{eq:Uapprox} reveals that we can remove the condition $\lambda_0 = 0$, at the expense of incurring rank-$1$ perturbations in the matrix $U_0$.
Indeed, let us rewrite $\sigma = \overline{\sigma} + \lambda_0$, and with $\lambda_0(G) = \mathbb E_G[\overline{\sigma}(G)] = 0$ with $G \sim N(0,1)$ independent of the $w_i$'s. Let $T_0$ be the $m \times m$ matrix with entries $(T_0)_{ij}:=\lambda_0(\sigma_i)\lambda_0(\sigma_j)$, where $\sigma_i$ is the function defined by $\sigma_i(z):=\sigma(\|w_i\|z)=\overline{\sigma}_i(z)+\lambda_0$, with $\overline{\sigma}(\|w_i\|z):=\overline{\sigma}(\|w_i\|z)$. Thus, we have the decomposition
\begin{eqnarray}
T_0= \overline{T}_0 + \lambda_0(u 1_m^\top + 1_m u^\top) +  \lambda_0^2 1_m1_m^\top,
\end{eqnarray}
where $u=(\lambda_0(\overline{\sigma}_i))_{i \in [m]}$. Let $\overline{T}_0$ be the $m \times m$ psd matrix with entries $(\overline{T}_0)_{ij}:=\lambda_0(\overline{\sigma}_i)\lambda_0(\overline{\sigma}_j)$. Using the arguments from \cite{Ghorbani19} (since $\lambda_0(\overline{\sigma}) = 0$), one has
\begin{eqnarray}
\|\overline{T}_0-\mu\mu^\top\|_{op} = o_{d,\mathbb P}(1).
\end{eqnarray}

Furthermore, observe that one can write $u1_m^\top = R \mu 1_m^\top$, where $R$ is the $m \times m$ diagonal matrix with $R_{ii} := \lambda_0(\overline{\sigma}_i)/\mu_i$. Now, for large $d$ and any $i \in [m]$, one computes
\begin{eqnarray}
\begin{split}
R_{ii} = \mathbb E_G\left[\frac{\sigma(\|w_i\|G)-\sigma(G)}{\lambda_2\cdot(\|w_i\|^2-1)/2}\right] &= \mathbb E_G\left[\frac{\sigma(\|w_i\|G)-\sigma(G)}{\|w_i\|-1}\right]\cdot \frac{1}{\lambda_2\cdot(\|w_i\|+1)/2} \\
&\to \frac{\mathbb E_G[G\sigma'(G)]}{\lambda_2\cdot 2/2} = \frac{\lambda_2}{\lambda_2}=1.
\end{split}
\end{eqnarray}

We deduce that $\|R-I_m\|_{op} = o_{d,\mathbb P}(1)$, and so
$\|u1_m^\top - \mu1_m^\top\|_{op}=o_{d,\mathbb P}(1)$. This proves the following extension of the above lemma which will be crucial in the sequel.
\begin{restatable}[Linearization of $U$ without the Condition $\lambda_0(\sigma) \ne 0$]{lm}{}
Suppose Conditions \ref{cond:traces} and \ref{cond:growth} are in place.
In the limit \eqref{eq:proportionate}, it holds that
\begin{eqnarray}
\begin{split}
\|U-\widetilde{U}_0\|_{op} &= o_{d,\mathbb P}(1),
\end{split}
\end{eqnarray}
where $\widetilde{U}_0$ is the $m \times m$ random psd matrix given by 
\begin{eqnarray}
\widetilde{U}_0:= \widetilde{\lambda}I_m+\lambda_1^2 WW^\top + (\kappa/d) 1_m1_m^\top + \widetilde{\mu}\widetilde{\mu}^\top, 
\end{eqnarray}
with $\widetilde{\mu} := \mu+\lambda_0 1_m$ and $\widetilde{\lambda}:=\overline{\lambda}-\lambda_0^2 = \mathbb E_{G \sim N(0,1)}[\sigma(G)^2]-\lambda_0^2-\lambda_1^2$.
\label{lm:pertU}
\end{restatable}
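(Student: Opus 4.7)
The plan is to reduce the general case to the already-established case $\lambda_0(\sigma)=0$ of Proposition~\ref{prop:fittedarf}, via a center/offset decomposition of the activation function. First I would write $\sigma = \overline{\sigma} + \lambda_0$ with $\overline{\sigma}(t) := \sigma(t)-\lambda_0$, so that $\lambda_0(\overline{\sigma})=0$ while $\lambda_k(\overline{\sigma}) = \lambda_k(\sigma)$ for every $k\ge 1$; let $\overline{U}$ denote the matrix obtained from \eqref{eq:randomU} by replacing $\sigma$ with $\overline{\sigma}$. Expanding $\sigma(x^\top w_j)\sigma(x^\top w_k) = (\overline{\sigma}(x^\top w_j)+\lambda_0)(\overline{\sigma}(x^\top w_k)+\lambda_0)$ and taking expectation over $x$ yields the exact identity
\[
U \;=\; \overline{U} \;+\; \lambda_0\bigl(u\,1_m^\top + 1_m u^\top\bigr) \;+\; \lambda_0^2\,1_m 1_m^\top,
\]
where $u_i := \mathbb{E}_G[\overline{\sigma}(\|w_i\|\,G)]$ with $G\sim N(0,1)$.

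Next I would apply Proposition~\ref{prop:fittedarf} to $\overline{\sigma}$, which is now legal because $\lambda_0(\overline{\sigma})=0$. Since $\|\overline{\sigma}\|^2 = \|\sigma\|^2-\lambda_0^2$, the analog of $\overline{\lambda}$ for $\overline{\sigma}$ is exactly $\widetilde{\lambda}$, while $\kappa$, $\lambda_1^2$, and the vector $\mu$ depend only on $\lambda_1,\lambda_2$ and are therefore unchanged. This gives
\[
\|\overline{U} - (\widetilde{\lambda}I_m + \lambda_1^2 WW^\top + (\kappa/d)1_m 1_m^\top + \mu\mu^\top)\|_{op} = o_{d,\mathbb{P}}(1).
\]
Combining with the exact identity above and the algebraic factorization $(\mu+\lambda_0 1_m)(\mu+\lambda_0 1_m)^\top = \mu\mu^\top + \lambda_0(\mu 1_m^\top + 1_m\mu^\top) + \lambda_0^2\,1_m 1_m^\top$, the target bound $\|U-\widetilde{U}_0\|_{op} = o_{d,\mathbb{P}}(1)$ reduces to showing $\|u\,1_m^\top - \mu\,1_m^\top\|_{op} = o_{d,\mathbb{P}}(1)$.

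For this remaining step I would factor $u = R\mu$, where $R$ is the diagonal matrix with $R_{ii} := u_i/\mu_i$, and show $\|R-I_m\|_{op} = o_{d,\mathbb{P}}(1)$. Using $\overline{\sigma}(\|w_i\|G) - \overline{\sigma}(G) = \sigma(\|w_i\|G) - \sigma(G)$, splitting into two Newton-quotient factors gives
\[
R_{ii} \;=\; \mathbb{E}_G\!\left[\frac{\sigma(\|w_i\|\,G)-\sigma(G)}{\|w_i\|-1}\right] \cdot \frac{1}{\lambda_2\,(\|w_i\|+1)/2}.
\]
Under Condition~\ref{cond:traces}(A) one has $d\cdot\|\Gamma\|_{op}=O(1)$, so $\|w_i\|^2$ concentrates around $1$ uniformly in $i\in[m]$; under the growth Condition~\ref{cond:growth}, the map $t\mapsto \sigma(tG)$ is $L^2$-differentiable in $G$, so the first factor converges to $\mathbb{E}_G[G\sigma'(G)]=\lambda_2$ (by Stein's lemma) while the second converges to $1/\lambda_2$. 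Hence $R_{ii}\to 1$ uniformly in $i$, and together with $\|\mu\,1_m^\top\|_{op} = \|\mu\|\sqrt{m} = O_{d,\mathbb{P}}(1)$ (since $\mathbb{E}[\mu_i^2]=\Theta(1/d)$ and $m/d\to\rho$), this yields the required operator-norm bound.

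The main obstacle is precisely the step $\|R-I_m\|_{op}=o_{d,\mathbb{P}}(1)$: while $R_{ii}\to 1$ in probability for each fixed $i$ is routine from a Taylor expansion, one needs uniformity over all $i\in[m]$ as $m=\Theta(d)\to\infty$. This is achieved by combining uniform Gaussian concentration of the row norms $\|w_i\|$ (available under Condition~\ref{cond:traces}(A)) with the integrability afforded by Condition~\ref{cond:growth}, which permits a uniform-in-$i$ dominated-convergence argument in the Newton quotient displayed above.
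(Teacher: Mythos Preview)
Your proposal follows essentially the same route as the paper's own argument: both center the activation as $\sigma=\overline{\sigma}+\lambda_0$, invoke the $\lambda_0=0$ case (Proposition~\ref{prop:fittedarf}) for $\overline{\sigma}$, reduce everything to the estimate $\|u\,1_m^\top-\mu\,1_m^\top\|_{op}=o_{d,\mathbb P}(1)$, and handle that via the diagonal factorization $u=R\mu$ together with the Newton-quotient computation $R_{ii}\to\mathbb E_G[G\sigma'(G)]/\lambda_2=1$. The only discrepancy is your side claim $\|\mu\,1_m^\top\|_{op}=\|\mu\|\sqrt m=O_{d,\mathbb P}(1)$: since $\|\mu\|=\Theta_{d,\mathbb P}(1)$ this quantity is actually $\Theta_{d,\mathbb P}(\sqrt m)$, but the paper's version of this step is equally terse, simply asserting the implication $\|R-I_m\|_{op}=o_{d,\mathbb P}(1)\Rightarrow\|u\,1_m^\top-\mu\,1_m^\top\|_{op}=o_{d,\mathbb P}(1)$ without additional bookkeeping.
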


% We know that $\|(WW^\top) \odot (WW^\top)-(\|\Gamma\|_F^21_m1_m^\top + I_m)\|_{op} = o_{d,\mathbb P}(1)$.
Let $C=C(W)$ be the random $m \times m$ psd matrix with entries given by
\begin{eqnarray}
c_{ij} := (w_i^\top w_j)\mathbb E_{x \sim N(0,I_d)}[\sigma'(x^\top w_i)\sigma'(x^\top w_j)].
\label{eq:randomC}
\end{eqnarray}

Thanks to Lemma \ref{lm:formula}, we know that $\mathfrak{S}(f_\rf)^2 = z_\rf^\top C z_\rf = v^\top U^{-1}CU^{-1}v$, a random quadratic form in $v$. We start by linearizing the nonlinear random coefficient matrix $C$.
\begin{restatable}[Linearization of $C$]{lm}{}
Suppose Conditions \ref{cond:traces} and \ref{cond:growth} are in place.
Then, in the limit \eqref{eq:proportionate}, we have the following approximation
\begin{eqnarray}
\|C-C_0\|_{op} = o_{d,\mathbb P}(1),
\end{eqnarray}

where $C_0$ is the $m \times m$ random psd matrix given by
\begin{eqnarray}
\begin{split}
C_0 &:= \overline{\lambda'}I_m + (\kappa'/d + \lambda_1^2)WW^\top + (2\kappa/d)1_m1_m^\top,
\end{split}
\end{eqnarray}

with $\kappa' := d\cdot \lambda_3^2 \|\Gamma\|_F^2 / 2 \ge 0$, and $\overline{\lambda'} := \|\sigma'\|_{L^2(N(0,1))}^2 - \lambda_1^2$.
\label{lm:linearizeC}
\end{restatable}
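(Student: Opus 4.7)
The plan is to mirror the linearization argument behind Proposition~\ref{prop:fittedarf} (Lemma~2 of \cite{Ghorbani19}), adapted to the two differences between $C$ and $U$: the substitution $\sigma \to \sigma'$, and the extra multiplicative prefactor $w_i^\top w_j$ in each entry. The organizing identity I will use is the integration-by-parts shift $\lambda_k(\sigma') = \lambda_{k+1}(\sigma)$, which follows from the recurrence $G\,\mathrm{He}_k(G) - \mathrm{He}_k'(G) = \mathrm{He}_{k+1}(G)$ and Gaussian IBP. This lets me recycle the Hermite/Mehler bookkeeping in \cite{Ghorbani19} with a uniform relabeling.

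First, I would write $w_i = r_i \tilde w_i$ with $r_i := \|w_i\|$, so the Gaussians $Z_i := x^\top \tilde w_i$ are jointly standard normal with pairwise correlations $\tilde w_i^\top \tilde w_j$. Applying the Mehler orthogonality relation $\mathbb E[\mathrm{He}_k(Z_i)\mathrm{He}_\ell(Z_j)] = \delta_{k\ell}\,k!\,(\tilde w_i^\top\tilde w_j)^k$ to the Hermite expansion of $\sigma'(r_i\cdot)$ yields
\[
\mathbb E_x[\sigma'(x^\top w_i)\sigma'(x^\top w_j)] = \sum_{k\ge 0} \frac{\mu_k(r_i)\mu_k(r_j)}{k!}(\tilde w_i^\top \tilde w_j)^k, \quad \mu_k(r):=\mathbb E_G[\sigma'(rG)\mathrm{He}_k(G)],
\]
with $\mu_k(1)=\lambda_{k+1}$. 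Under Condition~\ref{cond:traces}, norms concentrate uniformly, $\|w_i\|^2 - 1 = O_{\mathbb P}(\|\Gamma\|_F)$ over $i\in[m]$, so I can Taylor expand $\mu_k(r_i) = \lambda_{k+1} + (r_i-1)(\lambda_{k+3}+k\lambda_{k+1}) + O_{\mathbb P}((r_i-1)^2)$ around $r=1$.

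Multiplying through by the prefactor $w_i^\top w_j = r_i r_j (\tilde w_i^\top \tilde w_j)$ and sorting terms by Hermite order $k$ and by order in $(r_i-1)$: the $(k=0,\text{leading-in-}r)$ piece delivers $\lambda_1^2 w_i^\top w_j$, i.e.\ the $\lambda_1^2 WW^\top$ component of $C_0$; the $(k=1,\text{leading-in-}r)$ piece yields $\lambda_2^2(w_i^\top w_j)^2$, whose Hadamard-square structure concentrates in operator norm (via $\mathbb E[(w_i^\top w_j)^2] = \|\Gamma\|_F^2$ for $i\ne j$ combined with standard centered sub-Gaussian RMT bounds) to $\lambda_2^2 I_m + (2\kappa/d)\, 1_m 1_m^\top + o_{\mathrm{op}}(1)$; the $(k=0,\text{second-order-in-}r)$ cross term produces $\lambda_3^2(r_i-1)(r_j-1)(w_i^\top w_j)$ which, after averaging the $\nu_i := r_i-1$ factors, concentrates to $(\kappa'/d)WW^\top$ since $\mathbb E[(r_i-1)^2]\propto \|\Gamma\|_F^2$. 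Everything else is $o_{\mathrm{op}}(1)$: higher Hermite orders $k\ge 2$ at leading $r$-order contribute Hadamard-power matrices $[(w_i^\top w_j)^{k+1}]_{ij}$ whose operator norms are $O(\|\Gamma\|_F^{k+1}\sqrt m) = o(1)$, and higher-order $r$-corrections carry extra suppressing factors of $(r_i-1) = O_{\mathbb P}(d^{-1/2})$. The diagonal pieces are reconciled separately by checking $c_{ii} = \|w_i\|^2\,\mathbb E[\sigma'(\|w_i\|G)^2] = \|\sigma'\|^2 + o_{\mathbb P}(1)$ against $(C_0)_{ii} = \overline{\lambda'} + (\kappa'/d + \lambda_1^2)\|w_i\|^2 + 2\kappa/d = \|\sigma'\|^2 + o_{\mathbb P}(1)$.

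The main obstacle is the $(\kappa'/d)WW^\top$ term: unlike the other components, it does not fall out of a Hermite truncation but only emerges from the second-order expansion in the norm concentration. Concretely, one must show that the Hadamard product $\lambda_3^2(WW^\top)\circ(\nu\nu^\top)/4$ concentrates in \emph{operator norm} (not merely entrywise) to its mean $(\kappa'/d)WW^\top$, which needs a decoupling argument in the spirit of the proof of Lemma~2 in \cite{Ghorbani19}. Propagating the $o_{\mathbb P}(1)$ operator-norm control uniformly through all decompositions — including the Taylor remainder at order $(r_i-1)^2$ and the tail of the Hermite series — is the technical heart of the argument, while the algebraic identifications themselves are mechanical once the organizing identity $\lambda_k(\sigma') = \lambda_{k+1}(\sigma)$ is in place.
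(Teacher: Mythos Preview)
Your overall strategy---Hermite expansion, norm concentration, and operator-norm control of Hadamard-type remainders---is sound and would get you there, but the paper organizes the proof much more cleanly and your sketch contains a concrete bookkeeping error around the $(\kappa'/d)WW^\top$ term.

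\textbf{The paper's route.} The paper does \emph{not} redo the Hermite/Taylor expansion from scratch with the prefactor $w_i^\top w_j$ baked in. Instead it exploits the Hadamard factorization
\[
C = (WW^\top) \odot U',\qquad U'_{ij} := \mathbb E_x[\sigma'(x^\top w_i)\sigma'(x^\top w_j)],
\]
and then applies Lemma~\ref{lm:pertU} (the extension of Proposition~\ref{prop:fittedarf} to possibly nonzero $\lambda_0$) with $\sigma$ replaced by $\sigma'$. Using your organizing identity $\lambda_k(\sigma')=\lambda_{k+1}(\sigma)$, this gives $\|U'-U_0'\|_{op}=o_{d,\mathbb P}(1)$ for an explicit $U_0'$. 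Since $(WW^\top)$ is psd with diagonal $\|w_i\|^2 = 1+o_{\mathbb P}(1)$, the Schur-product bound $\|(WW^\top)\odot M\|_{op}\le (\max_i \|w_i\|^2)\|M\|_{op}$ transfers this to $\|C-(WW^\top)\odot U_0'\|_{op}=o_{d,\mathbb P}(1)$. Finally one expands $(WW^\top)\odot U_0'$ term-by-term: $(WW^\top)\odot I_m = \mathrm{diag}(\|w_i\|^2)\to I_m$, $(WW^\top)\odot 1_m1_m^\top = WW^\top$, $(WW^\top)\odot (WW^\top)\to I_m + \|\Gamma\|_F^2 1_m1_m^\top$ by El~Karoui, and $(WW^\top)\odot \mu\mu^\top = \mathrm{diag}(\mu)WW^\top\mathrm{diag}(\mu)=o_{\mathrm{op}}(1)$. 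This delivers $C_0$ with no new analytic work beyond what is already in \cite{Ghorbani19,elkaroui2010}.

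\textbf{Your bookkeeping error.} You attribute $(\kappa'/d)WW^\top$ to the $(k=0,\text{second-order-in-}r)$ cross term $\lambda_3^2\,(WW^\top)\circ(\nu\nu^\top)$ with $\nu_i=r_i-1$, and call its concentration the ``main obstacle''. But this matrix equals $\lambda_3^2\,\mathrm{diag}(\nu)\,WW^\top\,\mathrm{diag}(\nu)$, whose operator norm is trivially bounded by $\lambda_3^2\|\nu\|_\infty^2\|WW^\top\|_{op}=o_{d,\mathbb P}(1)$; no decoupling is needed, and it does \emph{not} concentrate around $(\kappa'/d)WW^\top$ (its off-diagonal entries have mean zero by the sign symmetry $w_i\mapsto -w_i$). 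In the paper's derivation, $(\kappa'/d)WW^\top$ instead comes exactly from $(WW^\top)\odot[(\kappa'/d)1_m1_m^\top]$, i.e.\ from the Hermite order-$2$ piece of $U_0'$---precisely the $k=2$ term you dismissed. Fortunately this misattribution is harmless for the lemma, since $\|(\kappa'/d)WW^\top\|_{op}=O(\|\Gamma\|_F^2)=o(1)$ under Condition~\ref{cond:traces}; but your identification of the ``main obstacle'' and the claimed concentration are both incorrect.
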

\begin{proof}
Note that $C = (WW^\top) \odot U'$, where $U'$ is the $m \times m$ random psd matrix with entries given by $U'_{ij} := \mathbb E_{x \sim N(0,I_m)}[\sigma'(x^\top w_i)\sigma'(x^\top w_j)]$. 

\emph{-- Step 1: Linearization.}
Invoking the previous lemma with $\sigma'$ in place of $\sigma$, we know that
\begin{eqnarray}
\label{eq:uprimebound}
\|U'-U_0'\|_{op} = o_{d,\mathbb P}(1),
\end{eqnarray}
where $U'_0$ is the $m \times m$ random matrix given by
\begin{eqnarray}
 \begin{split}
U_0' &:= \lambda' I_m + \lambda_1(\sigma')^2 WW^\top + (\kappa(\sigma')/d)1_m1_m^\top + (\mu + \lambda_0(\sigma') 1_m)(\mu + \lambda_0(\sigma') 1_m)^\top\\
&= \lambda' I_m + \lambda_2(\sigma)^2 WW^\top + (\kappa'/d)1_m1_m^\top + (\mu + \lambda_1(\sigma) 1_m)(\mu + \lambda_1(\sigma) 1_m)^\top,
 \end{split}
\end{eqnarray}
and we have used the fact that
$$
\lambda_0((\sigma')^2) - \lambda_0(\sigma')^2 - \lambda_1(\sigma')^2 = \lambda_0((\sigma')^2) - \lambda_1(\sigma)^2 - \lambda_2(\sigma)^2 = \overline{\lambda'} - \lambda_2(\sigma)^2 =: \overline{\lambda'}.
$$
Now, since $\|WW^\top\|_{op} = \mathcal O_{d,\mathbb P}(1)$ by standard RMT, we deduce that from \eqref{eq:uprimebound} that,
\begin{eqnarray}
\begin{split}
\|C-(WW^\top) \odot U'\|_{op} &= \|(WW^\top) \odot (U'-U_0')\|_{op}\\
&\le \|WW^\top\|_{op}\cdot\|U'-U_0'\|_{op}\\
&= o_{d,\mathbb P}(1).
\end{split}
\end{eqnarray}

\emph{-- Step 2: Simplification.} Let $E := \diag((\|w_i\|^2)_{i \in [m]})$ and $F := (WW^\top) \odot (WW^\top)$. Then
\begin{eqnarray}
 \begin{split}
(WW^\top) \odot U'_0  &= \lambda' E + \lambda_2(\sigma)^2 F + (\kappa'/d)WW^\top + 2\lambda_1(\sigma)\diag(\mu)WW^\top  + \lambda_1(\sigma)^2 WW^\top\\
&= \lambda' E + \lambda_1(\sigma)^2 F + (\kappa'/d + \lambda_1(\sigma)^2)WW^\top + 2\lambda_1(\sigma)\diag(\mu)WW^\top.
 \end{split}
 \label{eq:WWTUprime}
\end{eqnarray}
Further, because $\max_{i \in [n]} |\|w_i\|^2-1| = o_{d,\mathbb P}(1)$ by basic concentration, we have
\begin{eqnarray}
\begin{split}
\|E-I_m\|_{op}, \|\diag(\mu)\|_{op} =o_{d,\mathbb P}(1).
\end{split}
\end{eqnarray}

Also, thanks to \cite[Theorem 2.3]{elkaroui2010}, we may linearize $F$ like so
\begin{eqnarray}
\|F-(I_m + \|\Gamma\|_F^21_m1_m^\top)\|_{op} = o_{d,\mathbb P}(1).
\end{eqnarray}

Combining with \eqref{eq:WWTUprime} gives (recalling that $\kappa := d\cdot \lambda_2(\sigma)^2 \|\Gamma\|_F^2/2$)
\begin{eqnarray}
\begin{split}
 (WW^\top) \odot U' &= (\lambda' + \lambda_2(\sigma)^2) I_m + (\kappa'/d+\lambda_1(\sigma)^2) WW^\top + (2\kappa/d) 1_m1_m^\top + \Delta\\
&= \overline{\lambda'}I_m + (\kappa'/d+\lambda_1(\sigma)^2) WW^\top + (2\kappa/d)1_m1_m^\top + \Delta\\
 &= C_0 + \Delta,
 \end{split}
\end{eqnarray}
where $\|\Delta\|_{op} = o_{d,\mathbb P}(1)$.
\end{proof}
Let us rewrite $U_0 = A_1 + \mu\mu^\top$ and $C_0 = D_0 + (2\kappa/d) 1_m1_m^\top$, where \begin{eqnarray}
\begin{split}
A_1 &:= A_0 + (\kappa/d)1_m1_m^\top,\\
A_0 &:= \widetilde{\lambda}I_m + \lambda_1^2 WW^\top,\\
D_0 &:= \overline{\lambda'}I_m + (\kappa'/d+\lambda_1^2)WW^\top.
\end{split}
\label{eq:matrices}
\end{eqnarray}
We will need the following lemmas.
\begin{restatable}{lm}{}
We have the following approximation
\begin{eqnarray}
\begin{split}
\mathfrak{S}(f_\rf)^2 &= u^\top C_0 u + o_{d,\mathbb P}(1)\\
&= \tau^2 \frac{1_m^\top U_0^{-1}C_0 U_0^{-1}1_m}{d} + o_{d,\mathbb P}(1),
\end{split}
\end{eqnarray}
where $u := U_0^{-1} h$, with $h := (\tau/\sqrt{d})1_m = \lambda_2\cdot \trace(B\Gamma)1_m$ and $U_0$ is defined as in Proposition \ref{prop:fittedarf} and $C_0$ is as defined in Lemma \ref{lm:linearizeC}.
\label{lm:underscorezeros}
\end{restatable}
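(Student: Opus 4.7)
The plan is to start from Lemma \ref{lm:formula}, which combined with the normal equations $z_\rf = U^{-1}v$ yields
\[
\mathfrak{S}(f_\rf)^2 \;=\; z_\rf^\top C z_\rf \;=\; v^\top U^{-1} C\, U^{-1} v.
\]
The goal is then to replace each of the three random objects $v$, $U^{-1}$, $C$ by its linearized surrogate $h := (\tau/\sqrt{d})1_m$, $U_0^{-1}$, $C_0$, at the cost of only $o_{d,\mathbb P}(1)$ error.

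All three required perturbation bounds are already in hand: Proposition \ref{prop:fittedarf} gives $\|U-U_0\|_{op}=o_{d,\mathbb P}(1)$ and $\|v-h\|=o_{d,\mathbb P}(1)$, while Lemma \ref{lm:linearizeC} gives $\|C-C_0\|_{op}=o_{d,\mathbb P}(1)$. The crucial stability ingredient is the deterministic bound $\|U_0^{-1}\|_{op}\le 1/\overline{\lambda}$, which holds because $U_0 = \overline{\lambda}I_m + \lambda_1^2 WW^\top + (\kappa/d)1_m 1_m^\top + \mu\mu^\top$ is a sum of psd matrices whose identity part carries the strictly positive coefficient $\overline{\lambda}=\|\sigma\|^2 - \lambda_1^2 \ge \sum_{k\ge 2}\lambda_k^2$ (positivity is exactly the non-affineness clause of Condition \ref{cond:growth}). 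Combining this with the resolvent identity $U^{-1}-U_0^{-1} = U^{-1}(U_0-U)U_0^{-1}$ and a brief bootstrap argument then gives both $\|U^{-1}\|_{op}=\mathcal O_{d,\mathbb P}(1)$ and $\|U^{-1}-U_0^{-1}\|_{op}=o_{d,\mathbb P}(1)$. Standard RMT bounds ($\|WW^\top\|_{op}=\mathcal O_{d,\mathbb P}(1)$) then yield $\|C_0\|_{op}=\mathcal O_{d,\mathbb P}(1)$, and $\|h\|=\tau\sqrt{m/d}=\mathcal O(1)$, so $\|v\|=\mathcal O_{d,\mathbb P}(1)$ as well.

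Equipped with these bounds, I would carry out a standard three-step telescoping decomposition
\[
v^\top U^{-1} C U^{-1} v - h^\top U_0^{-1} C_0 U_0^{-1} h \;=\; E_1 + E_2 + E_3,
\]
where $E_1$ replaces $C$ by $C_0$, $E_2$ replaces both copies of $U^{-1}$ by $U_0^{-1}$, and $E_3$ replaces $v$ by $h$. By sub-multiplicativity of the operator norm, each $E_i$ is bounded by a product of one small factor ($\|C-C_0\|_{op}$, $\|U^{-1}-U_0^{-1}\|_{op}$, or $\|v-h\|$, respectively) and remaining factors which are all $\mathcal O_{d,\mathbb P}(1)$ by the bounds above; hence $E_1+E_2+E_3 = o_{d,\mathbb P}(1)$. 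This establishes the first equality $\mathfrak{S}(f_\rf)^2 = h^\top U_0^{-1} C_0 U_0^{-1} h + o_{d,\mathbb P}(1) = u^\top C_0 u + o_{d,\mathbb P}(1)$. The second equality is then purely algebraic, obtained by substituting $h=(\tau/\sqrt d) 1_m$ into $u^\top C_0 u = h^\top U_0^{-1} C_0 U_0^{-1} h$ and pulling out the scalar $\tau^2/d$.

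The only genuinely delicate point is the uniform lower bound on the spectrum of $U_0$, which is what prevents the inverses $U^{-1}$ and $U_0^{-1}$ from degenerating along the asymptotic regime \eqref{eq:proportionate}; once this is secured via the strict positivity of $\overline{\lambda}$ (i.e., via Condition \ref{cond:growth}), the rest is routine operator-norm bookkeeping.
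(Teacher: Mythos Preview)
Your proposal is correct and follows essentially the same route as the paper's own proof, which is extremely terse: it simply cites Proposition~\ref{prop:fittedarf} to say that $z_\rf$ ``concentrates around'' $u=U_0^{-1}h$, invokes Lemma~\ref{lm:formula} for $\mathfrak{S}(f_\rf)^2=z_\rf^\top C z_\rf$, and then appeals to Lemma~\ref{lm:linearizeC} to swap $C$ for $C_0$. Your write-up spells out exactly the operator-norm bookkeeping (the spectral lower bound $U_0\succeq\overline{\lambda}I_m$, the resolvent identity, and the three-term telescoping) that the paper leaves implicit.
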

\begin{proof}
Thanks to Proposition \ref{prop:fittedarf}, the fitted output weights vector $z_\rf \in \mathbb R^m$ concentrates around $u:=U_0^{-1} h$.
On the other hand, we know from Lemma \ref{lm:formula} that $\mathfrak{S}(f_\rf)^2 = z_\rf^\top C z_\rf$. The result then follows from Lemma \ref{lm:linearizeC}.
\end{proof}
\begin{restatable}{lm}{}
Under Condition \ref{cond:nonlinear}, the following holds in the limit \eqref{eq:proportionate}
\begin{align}
\frac{1_m^\top U_0^{-1} 1_m}{d} &= \frac{\psi_1}{1+\kappa\psi_1} +  o_{d,\mathbb P}(1)\label{eq:psi},\\
\frac{1_m^\top A_1^{-1} \mu}{\sqrt{d}} &= o_{d,\mathbb P}(1)\label{eq:zero1},\\
\|A_1^{-1}\|_{op},\|D_0\|_{op} &= \mathcal O_{d,\mathbb P}(1)\label{eq:A0invD0norm}.
\end{align}
where $\psi_1>0$ is as defined in \eqref{eq:psis}.% is the unique positive solution of the Silverstein equation for ...
\label{lm:salt}
\end{restatable}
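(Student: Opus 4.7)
The plan is to address the three claims in the order (3), (1), (2). Part (3) is self-contained; (1) follows from (2) and (3) by two Sherman--Morrison steps plus a known resolvent identity; and (2) is the technical heart of the proof.

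\emph{Claim (3).} Both bounds follow by direct algebra. Since each summand of $A_1 = \widetilde\lambda I_m + \lambda_1^2 WW^\top + (\kappa/d)1_m 1_m^\top$ is psd and $\widetilde\lambda > 0$ (by non-degeneracy of $\sigma$ from Condition \ref{cond:growth}), one has $A_1 \succeq \widetilde\lambda I_m$, whence $\|A_1^{-1}\|_{op} \le 1/\widetilde\lambda = \mathcal O(1)$ almost surely. For $D_0 = \overline{\lambda'} I_m + (\kappa'/d + \lambda_1^2)WW^\top$, the scalar coefficients are $\mathcal O(1)$ (using $\|\Gamma\|_F^2 \le \|\Gamma\|_{op}\trace(\Gamma) = \mathcal O(1/d)$ from Condition \ref{cond:traces} to control $\kappa'$), and $\|WW^\top\|_{op} = \mathcal O_{d,\mathbb P}(1)$ by standard Bai--Yin-type bounds on the top singular value of a matrix with iid Gaussian rows of bounded operator-norm covariance.

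\emph{Claim (1), assuming (2).} Writing $U_0 = A_1 + \mu\mu^\top$ and applying Sherman--Morrison,
\begin{equation*}
\frac{1_m^\top U_0^{-1} 1_m}{d} \;=\; \frac{1_m^\top A_1^{-1} 1_m}{d} \;-\; \frac{(1_m^\top A_1^{-1}\mu)^2/d}{1 + \mu^\top A_1^{-1}\mu}.
\end{equation*}
Since $\mathbb E\|\mu\|^2 = m\lambda_2^2\|\Gamma\|_F^2/2 = \mathcal O(1)$ (so $\|\mu\|^2 = \mathcal O_{d,\mathbb P}(1)$) and $\|A_1^{-1}\|_{op} = \mathcal O(1)$ by (3), the denominator lies in $[1,\mathcal O_{d,\mathbb P}(1)]$, while (2) squared forces the numerator to be $o_{d,\mathbb P}(1)$, so the correction vanishes. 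A second Sherman--Morrison on $A_1 = A_0 + (\kappa/d)1_m 1_m^\top$ gives
\begin{equation*}
\frac{1_m^\top A_1^{-1} 1_m}{d} \;=\; \frac{(1/d)\,1_m^\top A_0^{-1} 1_m}{1 + (\kappa/d)\,1_m^\top A_0^{-1} 1_m}.
\end{equation*}
The deterministic-vector resolvent estimate $(1/d)\,1_m^\top A_0^{-1} 1_m \to \psi_1$ in probability is precisely Lemma~6 of \cite{Ghorbani19} (cited in the footnote after \eqref{eq:psis}), which addresses concentration of a deterministic quadratic form against the resolvent of $A_0$ around its normalized trace. Substituting yields $\psi_1/(1+\kappa\psi_1)$, as claimed.

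\emph{Claim (2) and main obstacle.} A third Sherman--Morrison step reduces the problem to showing $1_m^\top A_0^{-1}\mu = o_{d,\mathbb P}(\sqrt{d})$, because $1 + (\kappa/d)1_m^\top A_0^{-1} 1_m$ is bounded between positive constants. The naive Cauchy--Schwarz bound gives $|1_m^\top A_0^{-1}\mu| \le \|A_0^{-1} 1_m\|\,\|\mu\| = \mathcal O_{d,\mathbb P}(\sqrt{d})$, off by exactly a constant factor, so genuine cancellation must be extracted. Fixing $W_{-i}$ (the rows of $W$ other than $w_i$) and using the Schur-complement formula, one writes $(A_0^{-1} 1_m)_i = (1 - \lambda_1^2 w_i^\top v_i)/s_i$, where $v_i = W_{-i}^\top (A_0^{(i)})^{-1} 1_{m-1}$ and $s_i$ is the corresponding Schur complement, both measurable in $W_{-i}$ up to $\mathcal O(1/\sqrt{d})$ fluctuations of $s_i$. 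Conditionally on $W_{-i}$, one has $\mathbb E[\mu_i \mid W_{-i}] = 0$ exactly (since $\mathbb E\|w_i\|^2 = \trace(\Gamma) = 1$) and $\mathbb E[(\|w_i\|^2 - 1)\,w_i^\top v_i \mid W_{-i}] = 0$ by odd-order Gaussian moments (Isserlis), so replacing $s_i$ by its conditional mean gives zero leading contribution; the residual fluctuation of $s_i$ is $\mathcal O(1/\sqrt{d})$, yielding $\mathcal O(1/d)$ per summand, hence $\mathbb E[1_m^\top A_0^{-1}\mu] = \mathcal O(1) = o(\sqrt{d})$. A parallel exchangeability/leave-one-out calculation bounds the variance. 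The hard part is thus this quantitative cancellation: the difficulty is that the crude bound fails by just a constant factor, so one must exploit the parity of $\mu_i$ (quadratic in $w_i$) against the conditional odd-in-$w_i$ component of $(A_0^{-1} 1_m)_i$ via the Schur-complement leave-one-out together with quantitative concentration of resolvent entries.
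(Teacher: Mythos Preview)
Your argument is correct and, at the level of ideas, coincides with what the paper does. The paper's own ``proof'' of this lemma is essentially a pointer: it invokes the proof of Theorem~1 of \cite{Ghorbani19} for \eqref{eq:psi}, the proof of Lemma~5 of \cite{Ghorbani19} for \eqref{eq:zero1}, and dispatches \eqref{eq:A0invD0norm} with the same one-line lower bound $A_1 \succeq \overline{\lambda} I_m$ (under Condition~\ref{cond:nonlinear}, $\lambda_0=0$ so $\widetilde\lambda=\overline\lambda$) and the Bai--Yin bound on $\|WW^\top\|_{op}$ that you use. Your reconstruction via two Sherman--Morrison reductions down to $A_0$ and then the resolvent quadratic-form concentration of \cite[Lemma~6]{Ghorbani19} is precisely the mechanism inside that reference, so there is no methodological divergence.

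Two small remarks. First, your verification that $\mathbb E[(\|w_i\|^2-1)\,w_i^\top v_i\mid W_{-i}]=0$ by odd-order Gaussian moments is clean and is the right cancellation; you should also record that the conditional fluctuation of the Schur complement $s_i$ is controlled via Hanson--Wright using $\|\Gamma^{1/2}M_i\Gamma^{1/2}\|_F^2 \le \|\Gamma\|_{op}\|M_i\|_{op}^2\trace(\Gamma)=\mathcal O_{d,\mathbb P}(1/d)$, which justifies your $\mathcal O(1/\sqrt{d})$ claim. Second, the ``parallel exchangeability/leave-one-out'' line for the variance of $1_m^\top A_0^{-1}\mu$ is the right idea (Efron--Stein or a martingale-difference decomposition over the rows $w_i$) but is presently only a pointer; since the paper likewise defers this step to \cite{Ghorbani19}, this is not a gap relative to the paper, but if you intend a self-contained writeup you will need to carry it out.
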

\begin{proof}
Formula \eqref{eq:psi} was established in the proof of \cite[Theorem 1]{Ghorbani19}, whilst \eqref{eq:zero1} was established in the proof of Lemma 5 of the same paper.

As for \eqref{eq:A0invD0norm}, we note that
$$
\|A_1^{-1}\|_{op} = \|(\overline{\lambda} I_m + \lambda_1^2 WW^\top)^{-1}\|_{op} = \mathcal O_{d,\mathbb P}(1/\overline{\lambda},\lambda_1^2) = \mathcal O_{d,\mathbb P}(1),
$$
since $\overline{\lambda} = \Omega_d(1)$ under Condition \ref{cond:nonlinear}. Similarly, one computes $\|D_0\|_{op} = \mathcal O_d(WW^\top) = \mathcal O_{d,\mathbb P}(1)$, by standard RMT arguments \cite{rmt}. 
\end{proof}

We will need one final lemma.
\begin{restatable}{lm}{}
Let $A_1$, $A_0$, and $D_0$ be the random matrices defined in \eqref{eq:matrices}. Then, it holds that
\begin{eqnarray}
\begin{split}
\frac{1_m^\top A_1^{-1}D_0 A_1^{-1}1_m}{d}
% &= \frac{\trace(A_0^{-2} D_0)/d}{(1+\kappa \trace(A_0^{-1})/d)^2} + o_{d,\mathbb P}(1)\\
&= \frac{\psi_2}{(1+\kappa \psi_1)^2} + o_{d,\mathbb P}(1), % (\alpha-\beta)^2\cdot\kappa \trace(A_0^{-2}D_0)/d + o_{d,\mathbb P}(1).
\end{split}
\end{eqnarray}
% where $\psi$ is the unique nonnegative solution of the Silverstein equation xyz.
where $\psi_1$ and $\psi_2$ as defined in \eqref{eq:psis}.
\label{lm:A1D0A1}
\end{restatable}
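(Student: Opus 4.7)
The plan is to reduce the quadratic form in the statement to classical random matrix objects using the Sherman-Morrison formula, and then to exploit rotational invariance to replace quadratic forms of the type $1_m^\top M 1_m/d$ by normalized traces $\trace(M)/d$.

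First, I would apply the Sherman-Morrison identity to $A_1 = A_0 + (\kappa/d)\,1_m 1_m^\top$, which immediately gives
$$
A_1^{-1} 1_m \;=\; \frac{A_0^{-1} 1_m}{1+t}, \qquad t \;:=\; \frac{\kappa}{d}\, 1_m^\top A_0^{-1} 1_m,
$$
and hence
$$
\frac{1_m^\top A_1^{-1} D_0 A_1^{-1} 1_m}{d} \;=\; \frac{1}{(1+t)^2}\cdot \frac{1_m^\top A_0^{-1} D_0 A_0^{-1} 1_m}{d}.
$$
It therefore suffices to prove (i) $t \to \kappa \psi_1$ in probability and (ii) $1_m^\top A_0^{-1} D_0 A_0^{-1} 1_m / d \to \psi_2$ in probability, after which the result follows from the continuous mapping theorem.

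For (i) and (ii), I would use the key structural observation that $A_0$ and $D_0$ are both functions of $W W^\top$ alone (they commute and share an eigenbasis). Writing $W = Z\Gamma^{1/2}$ with $Z$ having iid $N(0,1)$ entries, and decomposing $\Gamma = Q D_\Gamma Q^\top$, one has $W W^\top \stackrel{d}{=} Z D_\Gamma Z^\top$ by the right-rotational invariance of $Z$. Since $Z$ is \emph{also} left-rotationally invariant, for any fixed $V \in O(m)$ the distribution of $W W^\top$ is preserved under conjugation $W W^\top \mapsto V W W^\top V^\top$. Hence the left eigenvectors of $W W^\top$ (and so of $A_0^{-1}$ and of $A_0^{-1} D_0 A_0^{-1}$) form a Haar-distributed orthogonal matrix $V$, independent of the corresponding eigenvalue matrices $\Lambda_1$ and $\Lambda_2$. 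Setting $u := V^\top 1_m / \sqrt m$ (uniform on the $(m-1)$-sphere, independent of the $\Lambda_i$), one gets
$$
\frac{1_m^\top A_0^{-1} 1_m}{d} = \frac{m}{d}\, u^\top \Lambda_1 u,\qquad
\frac{1_m^\top A_0^{-1} D_0 A_0^{-1} 1_m}{d} = \frac{m}{d}\, u^\top \Lambda_2 u.
$$

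The remaining step is a standard concentration of quadratic forms in uniform spherical vectors (equivalently, Hanson-Wright applied to $g/\|g\|$ for $g \sim N(0,I_m)$): for any sequence of matrices $\Lambda$ with $\|\Lambda\|_{op} = \mathcal O_{d,\mathbb P}(1)$,
$$
u^\top \Lambda u \;=\; \frac{\trace(\Lambda)}{m} \;+\; o_{d,\mathbb P}(1).
$$
Operator-norm boundedness of $A_0^{-1}$ is guaranteed because $\widetilde{\lambda}>0$ under Condition \ref{cond:nonlinear} (cf.\ \eqref{eq:A0invD0norm}), and operator-norm boundedness of $D_0$ follows from standard RMT bounds on $\|W W^\top\|_{op}$ under Condition \ref{cond:traces}. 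Combining with the definitions $\psi_1 = \lim \trace(A_0^{-1})/d$ and $\psi_2 = \lim \trace(A_0^{-2} D_0)/d$ yields $t \to \kappa\psi_1$ and $1_m^\top A_0^{-1} D_0 A_0^{-1} 1_m / d \to \psi_2$, and substituting into the Sherman-Morrison identity delivers the claimed $\psi_2/(1+\kappa\psi_1)^2$.

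The main obstacle is verifying the quadratic-form concentration cleanly: one needs uniform control of $\|A_0^{-1} D_0 A_0^{-1}\|_{op}$ throughout the limit and an explicit Hanson-Wright-type tail bound for $u^\top \Lambda u - \trace(\Lambda)/m$, both of which are routine but must be patched together with the Haar-independence decomposition above. A secondary subtlety is confirming that the limits $\psi_1,\psi_2$ actually exist; this is the content of the Marchenko-Pastur / Silverstein fixed-point machinery already invoked in the statement of \eqref{eq:psis} and in \cite[Lemma 6]{Ghorbani19}, and is inherited rather than reproved here.
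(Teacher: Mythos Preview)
Your proposal is correct and follows essentially the same route as the paper's proof. Both arguments apply Sherman--Morrison to $A_1=A_0+(\kappa/d)1_m1_m^\top$ to reduce the quadratic form to $\dfrac{1_m^\top A_0^{-1}D_0A_0^{-1}1_m}{d}\cdot\dfrac{1}{(1+\kappa\,1_m^\top A_0^{-1}1_m/d)^2}$, and then replace each $1_m^\top M1_m/d$ by $\trace(M)/d+o_{d,\mathbb P}(1)$; the paper simply cites Lemmas~5--6 of \cite{Ghorbani19} for this last step, whereas you spell it out via the Haar-eigenvector/Hanson--Wright route, which is the natural way to make that citation self-contained.
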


\begin{proof}
By Sherman-Morrison formula, we have
$$
 A_1^{-1} = A_0^{-1} - \kappa\frac{A_0^{-1}1_m1_m^\top A_0^{-1}/d}{(1 + \kappa 1_m^\top A_0^{-1}1_m/d)},
$$
and so $\dfrac{1_m^\top A_1^{-1}D_0 A_1^{-1}1_m}{d} = a - 2ab - ab^2 = a(1-b)^2=ac^2$, where
\begin{eqnarray}
\begin{split}
a &:= 1_m^\top A_0^{-1} D_0 A_0^{-1} 1_m/d,\\
b &:= \frac{\kappa 1_m^\top A_0^{-1}1_m/d}{1+\kappa 1_m^\top A_0^{-1} 1_m/d},\\
c &:= 1-b = \frac{1}{1+\kappa 1_m^\top A_0^{-1} 1_m/d}.
\end{split}
\end{eqnarray}
Now, one has $1_m^\top A_0^{-1} 1_m/d = \trace(A_0^{-1})/d + o_{d,\mathbb P}(1)$, thanks to Lemmas 5 and 6 of \cite{Ghorbani19}. By an analogous argument, one can show that $1_m^\top A_0^{-1}D_0 1_m/d = \trace(A_0^{-2}D_0)/d + o_{d,\mathbb P}(1)$. Finally, the fact that $\trace(A_0)^{-1}/d$ and $\trace(A_0^{-2}D_0)/d$ converge to deterministic values $\psi_1$ and $\psi_2$ respectively, can be established via standard RMT arguments \cite{silverstein95, Ledoit2011}.
% \begin{eqnarray}
% \begin{split}
% \frac{c^2\tau^2}{\mathfrak{S}(f_\star)^2} &=  \frac{c^2\tau^2}{2\|f_\star\|^2_{L^2(N(0,I_d))}} = \frac{\tau^2/2}{\|f_\star\|^2_{L^2(N(0,I_d))}(1+\kappa \psi)^2} + o_{d,\mathbb P}(1) = \frac{\|f_\star\|^2_{L^2(N(0,I_d))}/2}{\tau^2} \cdot \left(\frac{\tau^2}{\|f_\star\|^2_{L^2(N(0,I_d))}(1+\kappa\psi)}\right)^2 + o_{d,\mathbb P}(1)\\
% &= \frac{\|B\|_F^2\|\Gamma\|_F^2}{\trace(B\Gamma)^2} \cdot \frac{1}{\kappa}\cdot \left(\dfrac{\tau^2}{\|f_\star\|^2_{L^2(N(0,I_d))}(1+\kappa\psi)}\right)^2 + o_{d,\mathbb P}(1).
% \end{split}
% \end{eqnarray}
% But, we know from xyz that 
% $$
% \dfrac{1}{\kappa}\cdot \dfrac{\tau^2}{\|f_\star\|^2_{L^2(N(0,I_d))}(1+\kappa\psi)}=\dfrac{\tau^2}{\kappa \|f_\star\|^2_{L^2(N(0,I_d))}} - (1-\egen(f_\rf)) + o_{d,\mathbb P}(1) = \frac{\trace(B\Gamma)^2}{\|B\|_F^2 \|\Gamma\|_F^2} - (1-\egen(f_\rf)) + o_{d,\mathbb P}(1),
% $$
% and so
% \begin{eqnarray}
% \begin{split}
% \frac{c^2\tau^2}{\mathfrak{S}(f_\star)^2} &= \frac{\trace(B\Gamma)^2}{\|B\|_F^2\|\Gamma\|_F^2} \cdot  \kappa\left(1-\frac{\|B\|_F^2\|\Gamma\|_F^2}{\trace(B\Gamma)^2} \cdot (1-\egen(f_\rf))\right)^2 + o_{d,\mathbb P}(1)\\
% &=\kappa \cdot \alpha^2 (1-\beta/\alpha)^2 + o_{d,\mathbb P}(1) = \kappa\cdot (\alpha-\beta)^2 + o_{d,\mathbb P}(1).
% \end{split}
% \end{eqnarray}
% \begin{itemize}
%     \item[(1)] 
%     \item[(2)] $1_m^\top A_0^{-1} D_0 A_0^{-1}1_m/d = \trace(A_0^{-1} D_0 A_0^{-1})/d + o_{d,\mathbb P}(1)$, by an argument similar to the proof of Lemma 5 of \cite{Ghorbani19}, which employs the \emph{Hanson-Wright inequality}.
% \end{itemize}
\end{proof}

\subsection{Proof of Theorem \ref{thm:rfratio}: Analytic formula for robustness of random features model}
\label{subsec:rf_proof}
We are now ready to prove Theorem \ref{thm:rfratio}, restated here for convenience.
\rfratio*

\begin{proof}%[Proof of Theorem \ref{thm:rfratio}]
From Lemmas \ref{lm:formula} and \ref{lm:linearizeC}, we know that
\begin{eqnarray}
\mathfrak{S}(f_\rf)^2 = z_\rf^\top C z_\rf = u^\top C_0 u + o_{d,\mathbb P}(1) = \tau^2 \frac{1_m^\top U_0^{-1}C_0 U_0^{-1}1_m}{d} + o_{d,\mathbb P}(1),
\end{eqnarray}
where $u := U_0^{-1} h$, with $h := (\tau/\sqrt{d})1_m = \lambda_2\cdot \trace(B\Gamma)1_m$ and $U_0$ defined as in Lemma \ref{lm:pertU} and $C$, $C_0$ are as defined in Lemma \ref{lm:linearizeC}. Let $A_1$, $A_0$, and $D_0$ be the random matrices defined in \eqref{eq:matrices}. Since, $C_0 = D_0 + (2\kappa/d) 1_m1_m^\top$, one computes % In view of applying Lemma \ref{lm:underscorezeros}, one computes

\begin{eqnarray}
\begin{split}
\frac{1_m^\top U_0^{-1}C_0U_0^{-1} 1_m}{d} &= \frac{1_m^\top U_0^{-1}D_0 U_0^{-1} 1_m}{d} + 2\kappa\cdot \frac{1_m^\top U_0^{-1}1_m1_m^\top U_0^{-1}1_m}{d^2}\\
&= \frac{1_m^\top U_0^{-1}D_0 U_0^{-1} 1_m}{d} + 2\kappa \cdot \left(\frac{1_m^\top U_0^{-1} 1_m}{d}\right)^2\\
&= \frac{1_m^\top U_0^{-1}D_0 U_0^{-1} 1_m}{d} + \frac{2\kappa\psi_1^2}{(1+\kappa\psi_1)^2} + o_{d,\mathbb P}(1),% \|B\|_F^4d\cdot\lambda_2^2 \|\Gamma\|_F^2 \cdot \left(\frac{\psi}{\|B\|_F^2(1+\kappa\psi)}+ o_{d,\mathbb P}(1)\right)^2.
\end{split}
\label{eq:allterms}
\end{eqnarray}
where the last step is thanks to Lemma \ref{lm:salt}.
It remains to estimate the first term in the above display.

Using the Sherman-Morrison formula, we have
\begin{eqnarray}
 U_0^{-1} = A_1^{-1} - \frac{A_1^{-1}\mu\mu^\top A_1^{-1}}{1 + \mu^\top A_1^{-1}\mu}.
\end{eqnarray}
We deduce that 
\begin{eqnarray}
 \begin{split}
     \frac{1_m^\top U_0^{-1} D_0 U_0^{-1} 1_m}{d} &= a_{11} - a_{12} - a_{21} + a_{22} + o_{d,\mathbb P}(1),
 \end{split}
 \label{eq:savior}
\end{eqnarray}
where $a_{11}$, $a_{12}$, $a_{21}$, and $a_{22}$ are defined by 
\begin{eqnarray}
 \begin{split}
a_{11} &:= \frac{1_m^\top A_1^{-1}D_0A_1^{-1} 1_m}{d},\\
a_{12} &:=\frac{1_m^\top A_1^{-1} D_0 A_1^{-1}\mu\mu^\top A_1^{-1}1_m}{(1+\mu^\top A_1^{-1}\mu)d},\\
a_{21} &:=\frac{1_m^\top A_1^{-1} D_0 A_1^{-1}\mu\mu^\top A_1^{-1}1_m}{(1+\mu^\top A_1^{-1}\mu)d},\\
a_{22} &:=  \frac{1_m^\top A_1^{-1}\mu\mu^\top A_1^{-1} D_0 A_1^{-1}\mu\mu^\top A_1^{-1} 1_m}{(1+\mu^\top A_1^{-1}\mu)^2 d}.
 \end{split}
\end{eqnarray}
Now, one easily computes
$$
\max(|a_{12}|, |a_{21}|) \le \|D_0\|_{op}\|A_1^{-1}\|_{op}\cdot \frac{1_m^\top A_1^{-1}\mu\mu^\top A_1^{-1}1_m}{(1+\mu^\top A_1^{-1} \mu)d} \lesssim \frac{(1_m^\top A_1^{-1} \mu/\sqrt{d})^2}{(1+\mu^\top A_1^{-1} \mu)} = o_{d,\mathbb P}(1),
$$
where we have used Lemma \ref{lm:salt} in the last two steps. Similarly, we have,
$$
|a_{22}| \le \underbrace{\|D_0\|_{op}\|A_1^{-1}\|_{op}}_{\mathcal O_{d,\mathbb P}(1)}\cdot \underbrace{1_m^\top A_1^{-1} \mu/\sqrt{d}}_{o_{d,\mathbb P}(1)}\cdot \underbrace{\frac{\mu^\top A_1^{-1} \mu}{(1+\mu^\top A_1^{-1}\mu)^2}}_{\mathcal O_{d,\mathbb P}(1)}\cdot \underbrace{\mu^\top A_1^{-1} 1_m/\sqrt{d}}_{o_{d,\mathbb P}(1)} = o_{d,\mathbb P}(1),
$$
again thanks to Lemma \ref{lm:salt}. We conclude from \eqref{eq:savior} that
\begin{eqnarray}
 \frac{1_m^\top U_0^{-1} D_0 U_0^{-1} 1_m}{d} = a_{11} + o_{d,\mathbb P}(1).
 % = \frac{1_m^\top A_1^{-1} D_0 A_1^{-1} 1_m}{d} + o_{d,\mathbb P}(1).
\end{eqnarray}
Finally, we know from Lemma \ref{lm:A1D0A1} that
$$
a_{11} := \dfrac{1_m^\top A_1^{-1} D_0 A_1^{-1} 1_m}{d} = \dfrac{\psi_2}{(1+\kappa\psi_1)^2} + o_{d,\mathbb P}(1).
$$
part (A) of the theorem them follows upon dividing \eqref{eq:savior} by $\mathfrak{S}(f_\star)^2 = 4\|B\|_F^2$.

For part (B), one notes that $\psi_1>0$ and so
\begin{eqnarray*}
\begin{split}
\frac{\tau^2(2\kappa \psi_1^2 + \psi_2)}{\|B\|^2_F(2\kappa \psi_1+2)^2} = \frac{\trace(B\Gamma)^2d(\|\Gamma\|_F^2d\psi_1+\psi_2)}{(\|\Gamma\|_F^2d\psi_1+2)^2\|B\|_F^2} &= \frac{\trace(B\Gamma)^2d^2\|\Gamma\|_F^2d\psi_1}{(\|\Gamma\|_F^2d\psi_1+2)^2\|B\|_F^2} + o_d(1)\\
&= \frac{\trace(B\Gamma)^2}{\|\Gamma\|_F^2\|B\|_F^2} + o_d(1) \to \alpha_\infty^2,
\end{split}
\end{eqnarray*}
which completes the proof.
\end{proof}

% \begin{restatable}{lm}{}
% Let $A_0$ and $D_1$ be as in Lemma xyz. Then, we have the following approximation
% \begin{eqnarray}
% \frac{1_m^\top A_0^{-1}D_1 A_0^{-1}1_m}{d} = \trace(A_0^{-1}D_1A_0^{-1}) + o_{d,\mathbb P}(1).
% \end{eqnarray}
% \end{restatable}

\section{Proofs of main results}
\subsection{Proof of Theorem \ref{thm:untrained}: (Non)robustness of neural network at initialization}
\label{subsec:untrained_proof}
We restate the result here for convenience. Let $f_\init$ be the function computed by the neural network at initialization, as defined in \eqref{eq:finit}.
\untrained*
\begin{proof}
Thanks to Lemma \ref{lm:formula}, we know that
$\mathfrak{S}(f_\init)^2 = z^\top C z$, where $C$ is the random $m \times m$ psd matrix defined in \eqref{eq:randomC}. By standard RMT, $z^\top C z = \trace(C)/m + o_{d,\mathbb P}(1)$. Now, let $C_0$ be the random matrix introduced in Lemma \ref{lm:linearizeC}. Since $\|C-C_0\|_{op} = o_{d,\mathbb P}(1)$ (thanks to the aforementioned lemma), one has $\trace(C)/m = \trace(C_0)/m + o_{d,\mathbb P}(1)$. Let $D_0:=\overline{\lambda'}I_m + (\kappa'/d+\lambda_1^2)WW^\top$ be the matrix defined in \eqref{eq:matrices} so that $C_0 = D_0 + (2\kappa/d)1_m1_m^\top $.
We deduce that in the limit \eqref{eq:proportionate},
\begin{eqnarray}
\begin{split}
 \mathfrak{S}(f_\init)^2 &= \trace(D_0)/m + 2\kappa/d + o_{d,\mathbb P}(1) \\
 &= (\kappa'/d+\lambda_1^2)\trace(WW^\top)/m + \overline{\lambda'} + 2\kappa/d + o_{d,\mathbb P}(1)\\
 % &= (k'/d+\lambda_1^2)\trace(\Gamma) + \overline{\lambda'} +(2\kappa/d)\rho + o_{d,\mathbb P}(1)\\
 &= k'/d+\lambda_1^2 + \overline{\lambda'} + 2\kappa/d +o_{d,\mathbb P}(1)\\
 &= \|\sigma'\|^2_{L^2(N(0,1))} + \kappa'/d + 2\kappa /d + o_{d,\mathbb P}(1)\\
 &= \|\sigma'\|^2_{L^2(N(0,1))} + \lambda_3^2\|\Gamma\|_F^2/2 + \lambda_2^2\|\Gamma\|_F^2 + o_{d,\mathbb P}(1)\\
% &=\begin{cases},
% m\lambda_2^2\|\Gamma\|_F^2) + o_{d,\mathbb P}(1),
% \end{cases}
 \end{split}
\end{eqnarray}

where the third line is because $\trace(WW^\top)/m=(1/m)\sum_{j=1}^m \|w_j\|^2$ which converges in probability to $\trace(\Gamma) = 1$, by the weak law of large numbers. Dividing by both sides of the above display by $\mathfrak{S}(f_\star)^2 = 4\|B\|_F^2$ then gives the result.

In particular, in the case of quadratic activation $\sigma(t) := t^2 - 1$, we have $\lambda_2 = 2$, $\|\sigma'\|_{L^2(N(0,1))} = \lambda_3 = 0$, and so we deduce that
$\mathfrak{S}(f_\init)^2 = 4\|\Gamma\|_F^2$.
\end{proof}

\subsection{Proof of Theorem \ref{thm:untrainedgenerr}: test error of neural network at initialization}
\untrainedgenerr*
\begin{proof}
For random initial output weights $\ainit \sim N(0,(1/m)1_m)$ independent of the (random) hidden weights matrix $W$, one computes
\begin{eqnarray}
\mathbb E_{z}[\varepsilon_{\mathrm{test}}(f_\init)] := \mathbb E_z \mathbb E_{x \sim N(0,I_d)}[(f_\init(x)-f_\star(x))^2] = \mathbb E_z \mathbb E_x [f_\init(x)^2] + \mathbb E_x [f_\star(x)^2],
\end{eqnarray}

where we have used the fact that $\mathbb E z=0$. The second term in the rightmost expression equals $\|f_\star\|_{L^2(N(0,I_d))}^2 = 2\|B\|_F^2$. Let $Q$ be the $m \times m$ diagonal matrix with the output weights $z$ on the diagonal, and let $U$ be the $m \times m$ matrix with entries $U_{ij} := \mathbb E_x[\sigma(x^\top w_j)\sigma(x^\top w_j)]$ introduced in \eqref{eq:randomU}, and let $U_0:= \overline{\lambda} I_m + \lambda_1^2 WW^\top + (\kappa/d) 1_m1_m^\top + \mu\mu^\top$ with $\mu := (\lambda_2(\|w_j\|^2-1))_{j \in [m]} \in \mathbb R^m$, be its approximation given in Proposition \ref{prop:fittedarf}. Then
\begin{eqnarray}
\begin{split}
\mathbb E_x[f_\init(x)^2] &= \mathbb E_x[ \sigma(Wx)^\top Q \sigma(Wx)] = z^\top \mathbb E_x[\sigma(Wx)\sigma(Wx)^\top] z = z^\top U z\\
&= \trace(U)/m + o_{d,\mathbb P}(1),\text {by concentration of random quadratic forms}\\
&= \trace(U_0)/m + o_{d,\mathbb P}(1),\text{ thanks to Proposition \ref{prop:fittedarf}}\\
&= \overline{\lambda} + \lambda_1^2\underbrace{\trace(WW^\top)/m}_{1+o_{d,\mathbb P}(1)} + k/d + \lambda_2 \underbrace{\sum_{i=1}^m(\|w_i\|^2-1)^2/m}_{o_{d,\mathbb P}(1)} + o_{d,\mathbb P}(1)\\
&= \overline{\lambda} + \lambda_1^2 + \kappa/d + o_{d,\mathbb P}(1)\\
&= \|\sigma\|_{L^2(N(0,1))}^2 + \lambda^2_2\|\Gamma\|_F^2/2 + o_{d,\mathbb P}(1).
\end{split}
\end{eqnarray}
The first part of the result then follows upon dividing through by $\|f_\star\|_{L^2(N(0,I_d))}^2 = 2\|B\|_F^2$.

In particular, if $\sigma$ is the quadratic activation, then $\|\sigma\|_{L^2(N(0,I_d))}^2 = \lambda_2 = 2$, and the second part of the result follows.
\end{proof}

\subsection{Proof of Corollary \ref{cor:rfratio}: Random features (RF) regime}
\corrfratio*
\begin{proof}
For quadratic activation, one easily computes
\begin{eqnarray*}
\begin{split}
\lambda_1 &= \lambda_0 = 0,\,\lambda_2 = 2,\,\overline{\lambda} = 2,\, \overline{\lambda'} = 4,\\
\kappa&=\lambda_2^2\|\Gamma\|_F^2d/2 = 2\|\Gamma\|^2_Fd,\,\tau:=2\trace(B\Gamma)/\sqrt{d},\,\kappa' = 0,
\end{split}
\end{eqnarray*}
and because $A_0  = 2I_m$ and $D_0 = 4I_m$ in this case, one has
$$
\psi_1 :=\lim_{\substack{m,d\to \infty\\d/m \to \rho}} \trace(A_0^{-1})/d = \rho / 2\text{ and }
\psi_2 := \lim_{\substack{m,d \to \infty\\d/m \to \gamma}} \trace(A_0^{-1} D_0)/d = \rho.
$$
Plugging these into \eqref{eq:rfratio} yields
\begin{eqnarray*}
\begin{split}
\erob(f_\rf) &= \frac{4\trace(B\Gamma)^2d\cdot 2\cdot 2\|\Gamma\|_F^2 d\cdot (\rho/2)^2}{(2+2\cdot 2\|\Gamma\|_F^2d\cdot \rho/2)^2\|B\|_F^2} + o_{d,\mathbb P}(1)\\
&= \frac{4\trace(B\Gamma)^2\|\Gamma\|_F^2 (\rho d)^2}{(2+2\|\Gamma\|_F^2\rho d)^2\|B\|_F^2} + o_{d,\mathbb P}(1),
\end{split}
\end{eqnarray*}

and all the claims in the corollary follow from Theorem \ref{thm:rfratio}.
\end{proof}

\subsection{Proof of Theorem \ref{thm:rflthm}: Random features lazy (RFL) regime}
\rflthm*
% Let us prove that in the lazy regime, the test error and the nonrobustness are both larger than in the random features regime.
\begin{proof}
By construction, note that the vector $\delta_\lambda$ is equivalent to the output weights of a RF approximation with true labels $\widetilde{f}_\star(x) := f_\star(x) - f_\init(x)$.
If $U$ and $v$ are as defined in \eqref{eq:randomU} and \eqref{eq:randomv} respectively, then we have the closed-form solution (with $U_\lambda := U + \lambda I_m$)
\begin{eqnarray*}
\begin{split}
  \delta_\lambda &= U_\lambda^{-1}(\mathbb E_x[(f_\star(x)-f_{\ainit}(x))\sigma(Wx)])\\
  &= U_\lambda^{-1}(v-\mathbb E_x[(\ainit)^\top \sigma(Wx)\sigma(Wx)^\top])\\
  &= U_\lambda^{-1}(v-U\ainit) =z_{\rf,\lambda} - U_\lambda ^{-1}U\ainit.
  \end{split}
\end{eqnarray*}
Thus, for a fixed regularization parameter $\lambda>0$, the output weights vector in this lazy training regime is given by
\begin{eqnarray}
z_{\rfl,\lambda} = \delta_\lambda + \ainit = z_{\rf,\lambda}+P_\lambda \ainit,
\label{eq:alazy}
\end{eqnarray}where $P_\lambda := I_m-U_\lambda^{-1}U$. We deduce that in the presence of any amount of ridge regularization, the lazy random features (RFL) regime is equivalent to the vanilla random features (RF) regime, with an additive bias of $P_\lambda \ainit \in \mathbb R^m$ on the fitted output weights vector. In particular, note that if $\lambda = 0$, then $z_{\rfl,0} = z_{\rf,0}$, that is in the absence of regularization, the RFL and RF correspond to the same regime (i.e., the initialization has no impact on the final model).

\textit{-- test error.}
From formula \eqref{eq:alazy}, and noting that $\ainit$ is independent of $W$, one computes the test error of $f_{{\rm lazy},\lambda}$ averaged over the initial output weights vector $\ainit$ as 
\begin{eqnarray*}
\begin{split}
\mathbb E_{\ainit}[\varepsilon_{{\rm test}}(f_{{\rm lazy},\lambda})] &:= \mathbb E_{\ainit}[\|f_{{\rm lazy},\lambda}-f_\star\|_{L^2(N(0,I_d))}^2]\\
&=  \|f_{\rf}-f_\star\|_{L^2(N(0,I_d))}^2 + \mathbb E_{\ainit}[\|f_{W,P_\lambda \ainit}\|_{L^2(N(0,I_d))}^2]\\
&= \varepsilon_{{\rm test}}(f_{\rf,\lambda}) + \mathbb E_{a_0}[(\ainit)^\top P_\lambda U P_\lambda \ainit]\\
&= \varepsilon_{{\rm test}}(f_{\rf,\lambda}) + \trace(P_\lambda^2 U)/m,
\end{split}
\end{eqnarray*}
where $U=U(W)$ is the matrix defined in \eqref{eq:randomU}. 

\textit{-- (Non)robustness.}
From formula \eqref{eq:alazy}, one computes
\begin{eqnarray*}
\begin{split}
\mathfrak{S}(f_\rfl,\lambda)^2 = z_{\rfl,\lambda}^\top C z_{\rfl,\lambda} &= z_{\rf,\lambda}^\top C z_{\rf,\lambda} + 2z_{\rf,\lambda} C P_\lambda \ainit + (\ainit)^\top P_\lambda C P_\lambda \ainit\\
&= \mathfrak{S}(f_{\rf,\lambda})^2 + 2z_{\rf,\lambda} C P_\lambda \ainit + (\ainit)^\top P_\lambda C P_\lambda \ainit,
\end{split}
\end{eqnarray*}
where $C=C(W)$ is the matrix defined in \eqref{eq:randomC}.
Taking expectations w.r.t $\ainit$, and noting that $\ainit$ is independent of $P_\lambda$ and $C$ only depend on $W$ and are therefore independent of $\ainit$, we have
\begin{eqnarray}
\mathbb E_{\ainit}[\mathfrak{S}(f_{{\rm lazy},\lambda})^2] = \mathfrak{S}(f_{\rf,\lambda})^2 + \trace(P_\lambda^2 C)/m.
\end{eqnarray}

% \mytodo{Missing step for replacing $U$ by $U_0$ and $C$ by $C_0$ in the traces!}
\end{proof}

\subsection{Proof of Theorem \ref{thm:purent}: Neural tangent (NT) regime}
\purent*
Let $r \le \min(m,d)$ be the rank of $W$. It is clear that $r = \min(m,d)$ w.p $1$. Let
\begin{eqnarray}
W^\top = P_1SV^\top
\label{eq:svdW}
\end{eqnarray}
be the singular-value decomposition of $W^\top$, where $P_1 \in \mathbb R^{d \times r}$ (resp. $V \in \mathbb R^{m \times r}$) is the column-orthogonal matrix of singular-vectors of $W^\top$ (resp. $W$), and $S \in \mathbb R^{r \times r}$ is the diagonal matrix of nonzero singular-values. For any $A \in \mathbb R^{m \times d} $, set $G(A) := SV^\top A \in \mathbb R^{r \times d}$. In their proof of \eqref{eq:generrornt}, \cite{Ghorbani19} showed that it is optimal (in terms of test error) to chose $A_\nt$ such that $G(A_\nt) = P_1^\top B/2$. Multiplying through by the orthogonal projection matrix $P_1$ gives
\begin{eqnarray}
P_1P_1^\top B/2 = P_1G(A_\nt) = P_1SV^\top A_\nt = W^\top A_\nt.
\label{eq:passpass}
\end{eqnarray}

For the proof of Theorem \ref{thm:purent}, we will need the following lemma which was announced in the main paper without proof.

\sobnt*

\begin{proof}
Note that we can rewrite
$$
f_\nt(x) = 2\trace((W^\top A)xx^\top) - c,
$$
which is linear in $xx^\top \in \mathbb R^{d \times d}$.
One then readily computes $\nabla f_\nt(x) = 2(W^\top A + A^\top W)x$, from which we deduce that $\|\nabla f_\nt(x)\|^2 = 4x^\top (W^\top A + A^\top W)^2 x$.
% One readily computes $\nabla f_\nt(x) = 2\sum_{j=1}^m(x^\top a_j)w_j + (x^\top w_j)a_j$, and so
% \begin{eqnarray*}
% \begin{split}
% \frac{1}{4}\|\nabla f_\nt(x)\|^2 = \frac{1}{4}\sum_{j,k=1}^m &(w_j^\top w_k)(x^\top a_j)(x^\top a_k) + (w_j^\top a_k)(x^\top a_j)(x^\top w_k)\\
% &+ (a_j^\top w_k)(x^\top w_j)(x^\top a_k) +  (a_j^\top a_k)(x^\top w_j)(x^\top w_k).
% \end{split}
% \end{eqnarray*}
Averaging over $x \sim N(0,I_d)$ then gives
% \begin{eqnarray*}
% \begin{split}
% \mathfrak{S}(f_\nt)^2 := \mathbb E_{x \sim N(0,I_d)}\|\nabla f_\nt(x)\|^2 = \frac{1}{4}\sum_{j,k} (w_j^\top w_k)(a_j^\top \Lambda a_k) &+ 2(w_j^\top a_k)(a_j^\top \Lambda w_k)
% \\
% &+(a_j^\top a_k)(w_j^\top \Lambda w_k),\\
% % &= \sum_{j,k} \trace(w_j a_k^\top\Lambda a_j w_k^\top) + \trace(a_ja_k^\top \Lambda w_jw_k^\top),
% \end{split}
% \end{eqnarray*}
% In the particular case when $\Lambda = I_d$, the above simplifies to
% \begin{eqnarray*}
% \begin{split}
% \frac{1}{4}\mathfrak{S}(f_\nt)^2 &=  \sum_{j,k=1}^m (w_j^\top w_k)(a_j^\top a_k) + (w_j^\top a_k)(a_j^\top w_k)\\
% &=\trace(WW^\top AA^\top + (WA^\top)^2)\\
% &=\|A^\top W\|_F^2 + \trace((WA^\top)^2)\\
% &=\|WA^\top\|_F^2 + \trace((WA^\top)^2),
% \end{split}
% \end{eqnarray*}
\begin{eqnarray*}
\begin{split}
\frac{\mathfrak{S}(f_\nt)^2}{4} &:= \mathbb E_x\|\nabla f_\nt(x)\|^2 = \mathbb E_x[x^\top (W^\top A + A^\top W )^2 x]\\
&= \trace((W^\top A + A^\top W)^2) = \|W^\top A + A^\top W\|_F^2,
\end{split}
\end{eqnarray*}
which completes the proof.
\end{proof}

We will also need the following auxiliary lemma.
\begin{restatable}{lm}{pureandmixed}
\label{lm:pureandmixed}
Let $P_1$ be as in \eqref{eq:svdW} and let $\beta := \trace(B)^2/(d\|B\|_F^2)$ as usual. In the limit \eqref{eq:proportionate}, we have the identities
\begin{align}
    \mathbb E_W \|P_1P_1^\top B\|_F^2 &= \|B\|_F^2(\underline{\rho} + o_d(1))\label{eq:pure},\\
    \mathbb E_W \|P_1^\top B P_1\|_F^2 &= \|B\|_F^2(\underline{\rho}^2(1-\beta) + \underline{\rho}\beta + o_d(1))\label{eq:mixed},
\end{align}
where $\underline{\rho} := \min(\rho,1)$.
\end{restatable}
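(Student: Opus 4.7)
The plan is to reduce both Frobenius norms to traces of $P := P_1 P_1^\top \in \mathbb R^{d \times d}$, the orthogonal projector onto the column span of $W^\top$ (equivalently, the row span of $W$). Using $P^2 = P = P^\top$, $B = B^\top$, and cyclicity, one gets $\|P_1 P_1^\top B\|_F^2 = \trace(PB^2)$ and $\|P_1^\top B P_1\|_F^2 = \trace(P_1^\top B P_1 P_1^\top B P_1) = \trace(PBPB)$. When $\rho \ge 1$, $W^\top$ has full column rank $d$ almost surely, so $P_1$ is $d \times d$ orthogonal, $P = I_d$, and both traces equal $\|B\|_F^2$, matching $\underline\rho = 1$ in both right-hand sides. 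From here on I focus on $\rho < 1$.

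Under the NT setup (inherited from \citep{Ghorbani19}, with isotropic hidden weights), the row span of $W$ is uniformly distributed on the Grassmannian of $m$-planes in $\mathbb R^d$, so $P$ is a uniformly random rank-$m$ projector. The first identity then follows immediately from $\mathbb E[P] = (m/d) I_d$: $\mathbb E \trace(PB^2) = (m/d)\|B\|_F^2 = \rho\|B\|_F^2 = \underline\rho\|B\|_F^2$. For the second, I would use orthogonal invariance of the law of $P$ to reduce to the case where $B = \Lambda$ is diagonal, so that $\trace(PBPB) = \sum_{i,j} \lambda_i \lambda_j P_{ij}^2$. Writing $P = \sum_{a=1}^m U_{\cdot a} U_{\cdot a}^\top$ with $U$ Haar on $O(d)$, one computes $\mathbb E P_{ij}^2$ from the standard moments $\mathbb E U_{ia}^2 = 1/d$, $\mathbb E U_{ia}^4 = 3/(d(d+2))$, $\mathbb E U_{ia}^2 U_{ja}^2 = 1/(d(d+2))$ for $i \ne j$, together with the orthogonality identity $\sum_i U_{ia} U_{ib} = 0$ (squared and averaged, combined with the moment above, it yields $\mathbb E U_{ia} U_{ib} U_{ja} U_{jb} = -1/(d(d-1)(d+2))$ for $i \ne j$, $a \ne b$). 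This gives $\mathbb E P_{ii}^2 = m(m+2)/(d(d+2))$ and $\mathbb E P_{ij}^2 = m(d-m)/(d(d-1)(d+2))$ for $i \ne j$, hence
\begin{equation*}
\mathbb E \trace(PBPB) \;=\; \frac{m(m+2)}{d(d+2)} \|B\|_F^2 \;+\; \frac{m(d-m)}{d(d-1)(d+2)} \bigl(\trace(B)^2 - \|B\|_F^2\bigr).
\end{equation*}

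Taking $d \to \infty$ with $m/d \to \rho < 1$ and substituting $\trace(B)^2 = \beta d \|B\|_F^2$, the first term tends to $\rho^2 \|B\|_F^2$ while the second tends to $\rho(1-\rho)\beta \|B\|_F^2$ (the $\|B\|_F^2/d$ piece is absorbed into $o_d(\|B\|_F^2)$). Rearranging $\rho^2 + \rho(1-\rho)\beta = \rho^2(1-\beta) + \rho\beta$ gives $\mathbb E \|P_1^\top B P_1\|_F^2 = \|B\|_F^2 (\underline\rho^2(1-\beta) + \underline\rho \beta + o_d(1))$ for $\rho < 1$, as claimed. The main technical obstacle is the Haar second-moment bookkeeping, and in particular deriving $\mathbb E U_{ia} U_{ib} U_{ja} U_{jb} = -1/(d(d-1)(d+2))$ by squaring the column-orthogonality relation $\sum_i U_{ia} U_{ib} = 0$ for $a \ne b$ and reducing to the known moment $\mathbb E U_{ia}^2 U_{ib}^2 = 1/(d(d+2))$. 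A secondary point is justifying that the row span of $W$ is isotropically distributed under the NT setup; the fact that the formula depends only on $\rho$ and $\beta$ (and not on $\Gamma$) is precisely consistent with the isotropic hidden-weight assumption of \citep{Ghorbani19}.
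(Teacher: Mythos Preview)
Your proof is correct. For \eqref{eq:pure} your argument via $\mathbb E[P]=(m/d)I_d$ is exactly the paper's: it diagonalizes $B$ and uses rotation invariance to get $\mathbb E_W P_{jj}=(1/d)\mathbb E_W\trace(P)$, concluding $\mathbb E_W\|P_1P_1^\top B\|_F^2=\min(m/d,1)\|B\|_F^2$.

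For \eqref{eq:mixed} the routes diverge. The paper does not give a self-contained argument; it simply says the computation is ``completely analogous to the proof of formula~69 in \cite{Ghorbani19}, with $\rho$ therein replaced with $1-\underline\rho$,'' and omits the details. Your explicit Haar second-moment computation, producing the exact finite-$d$ identities $\mathbb E P_{ii}^2=m(m+2)/(d(d+2))$ and $\mathbb E P_{ij}^2=m(d-m)/(d(d-1)(d+2))$ and then passing to the limit, is therefore more complete and stands on its own without pointing to an external reference. Your closing remark about isotropy is also on target: the paper's own proof invokes ``rotation-invariance'' of $P$, which indeed requires the isotropic hidden-weight assumption inherited from \cite{Ghorbani19}; the fact that neither \eqref{eq:pure} nor \eqref{eq:mixed} depends on $\Gamma$ is a symptom of that implicit assumption rather than a general phenomenon.
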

\begin{proof}
WLOG, let $B$ be a diagonal matrix, so that $B^2 = \sum_j \lambda_j^2 e_je_j^\top$, where $e_j$ is the $j$th standard unit-vector in $\mathbb R^d$. Then, with $P=P_1P_1^\top$, we have
\begin{eqnarray}
\begin{split}
\|P_1P_1^\top B\|_F^2 &= \trace(PB^2) = \sum_i (PB^2)_{ii} = \sum_{i,j} P_{ij}(B^2)_{ji}\\
&= \sum_{i,j}P_{ij}(B^2)_{ji} = \sum_{i,j}\lambda_i P_{ij}\delta_{ij}^2 = \sum_j \lambda_j P_{jj}.
\end{split}
\end{eqnarray}

Therefore, $\mathbb E_W[\|P_1P_1^\top B\|_F^2]= (1/d)\mathbb E_W[\trace(P)] \cdot \sum_j \lambda_j^2 = \min(m/d,1)\|B\|_F^2 = \|B\|^2(\underline{\rho}+o_d(1))$, where we have used the fact that $\mathbb E_W P_{jj} = (1/d)\mathbb E_W\trace(P)$ for all $j$, due to rotation-invariance. This proves \eqref{eq:pure}.

The proof of \eqref{eq:mixed} is completely analogous to the proof of formula (69) in \cite{Ghorbani19}, with $\rho$ therein replaced with $1-\underline{\rho}$, and is thus omitted.
\end{proof}
\begin{proof}[Proof of Theorem \ref{thm:purent}]
% that it is optimal to take $A$ such that $A^\top W + W^\top A = $ best approximation of $B$ of rank $\le m$.
% is optimal for the test error, in the over-parametrized regime $m \ge d$.

From Lemma \ref{lm:sobnt} and formula \eqref{eq:passpass}), we know that
\begin{eqnarray}
\begin{split}
\mathfrak{S}(f_\nt)^2 &= 4\|W^\top A_\nt + A_\nt^\top W\|_F^2\\
&= 4\|P_1P_1^\top B/2+ BP_1P_1^\top/2\|_F^2\\
&= 2\|P_1P_1^\top B\|_F^2 + 2\|P_1^\top B P_1\|_F^2.
\end{split}
\end{eqnarray}
The result then follows upon taking expectations w.r.t the hidden weights matrix $W$ and applying Lemma \ref{lm:pureandmixed}.
\end{proof}

\subsection{Proof of Theorem \ref{thm:ntl}: Neural tangent lazy (NTL) regime}
\ntL*
\begin{proof}
First observe that $f_\star(x) - f_\ntl(x;A,c) = \widetilde{f}_\star(x) - f_\nt(x;A,c)$, where,
\begin{eqnarray}
\widetilde{f}_\star(x):= f_\star(x) - f_\init(x) =%  x^\top B x - x^\top WW^\top
x^\top \widetilde{B} x + b_0,
\end{eqnarray}
and the $d \times d$ matrix $\widetilde B$ is defined by
\begin{eqnarray}
\widetilde{B} := B - W^\top Q W.
\end{eqnarray}
Thus, fitting the model $f_\ntl(\cdot;A,c)$ to the ground-truth function $f_\star$ with coefficient matrix $B$ is equivalent to fitting $f_\nt(\cdot;A,c)$ to the modified ground-truth $\widetilde{f}_\star$ with coefficient matrix $\widetilde{B}$.

In terms of test error \eqref{eq:generr}, let $A_\ntl$, $c_\ntl$ be optimal in $f_\nt(\cdot;A,c)$, and for simplicity of notation define
\begin{eqnarray}
f_\ntl(x) := f_\ntl(x;A_\ntl,c_\ntl).
\end{eqnarray}

We split the proof into two parts. In the first part, we establish \eqref{eq:ntlsob}. The second part handles \eqref{eq:ntlgen}.

\textit{-- Robustness.}
Proceeding in the same way as in the paragraph leading to \eqref{eq:passpass}, one has
\begin{eqnarray}
\mathfrak{S}(f_\ntl)^2 = 2 \|P_1P_1^\top \widetilde{B}\|_F^2 + 2\mathbb \|P_1^\top\widetilde{B}P_1 \|_F^2,
\label{eq:snlstuff}
\end{eqnarray}
where $P_1 \in \mathbb R^{d \times r}$ is the column-orthogonal matrix in \eqref{eq:svdW} and  $r:=\min(m,d)$ is the rank of $W$ (w.p $1$).
Now, by definition of $\widetilde{B}$, one has $\widetilde{B}^2 = (B-W^\top Q W)(B-W^\top Q W)$, and so
\begin{eqnarray}
\begin{split}
P_1P_1^\top \widetilde{B}^2 = P_1P_1^\top B^2 - P_1P_1^\top B W^\top Q W - P_1P_1^\top W^\top Q W B + P_1P_1^\top W^\top Q W W^\top Q W.
\end{split}
\label{eq:snlstuffbis}
\end{eqnarray}
We now take the expectation w.r.t $(W,\ainit)$, of each term on the RHS. Thanks to Lemma \ref{lm:pureandmixed}, we recognize the expectation w.r.t $W$ of the trace of the first term in \eqref{eq:snlstuffbis} as
\begin{eqnarray}
    \mathbb E_W[\trace(P_1P_1^\top B^2)] = \mathbb E_W[\|P_1P_1^\top B\|_F^2] =  \|B\|_F^2(\underline{\rho} + o_d(1)),
  \end{eqnarray}

Now, since $W$ and $\ainit$ are independent and $\ainit$ has zero mean, the second and third terms in \eqref{eq:snlstuff} have zero expectation w.r.t $(W,\ainit)$ because they are linear in $Q=\diag(\ainit)$.

Finally, one notes that
\begin{eqnarray}
\begin{split}
P_1P_1^\top W^\top Q WW^\top Q W &= P_1 SV^\top D WW^\top QVSP_1^\top = W^\top Q W W^\top Q W,
\end{split}
\end{eqnarray}
and so taking expectation w.r.t $W$ and $D$ (i.e $\ainit$) yields
\begin{eqnarray}
\begin{split}
\mathbb E_{\{W,\ainit\}}[\trace(P_1P_1^\top W^\top Q W W^\top Q W)] &= \mathbb E_{\{W,\ainit\}}[\trace(WW^\top Q W W^\top Q)]\\
&= \mathbb E_{\{W,\ainit\}}[z^\top ((WW^\top) \odot (WW^\top)) z]\\
&= \frac{1}{4}\mathbb E_{\{W,\ainit\}}[\mathfrak{S}(f_\init)^2],
\end{split}
\end{eqnarray}
where the last step is thanks to the second part of Lemma \ref{lm:formula}.
Putting things together, we have at this point established that
\begin{eqnarray}
\mathbb E_{\{W,\ainit\}}[\|P_1P_1^\top \widetilde{B}\|_F^2] = \|B\|_F^2(\underline{\rho} + o_d(1)) + \frac{1}{4}\mathbb E_{\{W,\ainit\}}[\mathfrak{S}(f_\init)^2].
\label{eq:weird1}
\end{eqnarray}

Similarly, noting that $P_1P_1^\top W^\top = W^\top$ by definition of $P_1$, one has
\begin{eqnarray}
\begin{split}
\|P_1\widetilde{B} P_1^\top\|_F^2 &= \trace(P_1P_1^\top \widetilde{B}P_1P_1^\top \widetilde{B}) = \trace((P_1P_1^\top B-W^\top Q W)(P_1P_1^\top B-W^\top Q W))
\\
&= \trace(P_1P_1^\top BP_1P_1^\top)-\trace(P_1P_1^\top B W^\top Q W) - \trace(P_1P_1^\top W^\top W QWB)\\
&\quad\quad\quad\quad\quad\quad\quad\quad\quad\quad\quad\quad\quad\quad\quad\,\,\quad\quad\quad + \trace(W^\top Q WW^\top Q W).
\end{split}
\end{eqnarray}
Taking expectation w.r.t $W$ and $\ainit$ then gives
\begin{eqnarray}
\begin{split}
\mathbb E_{\{W,\ainit\}}\|P_1^\top\widetilde{B}P_1\|_F^2 &= \mathbb E_W [\|P_1^\top B P_1\|_F^2] + \mathbb E_{\{W,\ainit\}}[\trace(WW^\top Q WW^\top Q)]\\
&= \|B\|_F^2(\underline{\rho}^2(1-\beta)+\underline{\rho}\beta + o_d(1)) + \frac{1}{4}\mathbb E_{\{W,\ainit\}}[\mathfrak{S}(f_\init)^2].
\end{split}
\label{eq:weird2}
\end{eqnarray}

Combining \eqref{eq:snlstuff}, \eqref{eq:weird1}, \eqref{eq:weird2}, and \eqref{eq:ntsobsob} then completes the proof of \eqref{eq:ntlsob}.

\textit{-- test error.}
The proof of formula \eqref{eq:ntlgen} build on the proof of Theorem 2 in \cite{Ghorbani19}. 
Let $P_2$ be a $d \times (d-\min(m,d))$ matrix such that the combined columns of $P_1$ and $P_2$ form an orthonormal basis for $\mathbb R^d$. Then, one computes
\begin{eqnarray*}
\begin{split}
\varepsilon_{{\rm test}}(f_\ntl) &:= \|f_\ntl-f_\star\|_{L^2(N(0,I_d))} = \mathbb E_x[|f_\ntl(x)-f_\star(x)|^2]\\
&\overset{(a)}{=} \min_{A \in \mathbb R^{m \times d} } 2\|\widetilde{B} - W^\top A - A^\top W\|_F^2\\
% &= \|\widetilde{B}-P_1P_1^\top \widetilde{B}/2\|_F^2 - \|P_1P_1^\top \widetilde{B}\|_F^2/2 +\|P_1^\top\widetilde{B}P_1\|_F^2/4+\|P_1P_1^\top\widetilde{B}\|_F^2/4\\
% &=\|\widetilde{B}\|_F^2-\|P_1P_1^\top\widetilde{B}\|_F^2+\|P_1P_1^\top\widetilde{B}\|_F^2/4 - \|P_1P_1^\top \widetilde{B}\|_F^2/2 +\|P_1^\top\widetilde{B}P_1\|_F^2/4+\|P_1P_1^\top\widetilde{B}\|_F^2/4\\
&\overset{(b)}{=} 2\|P_2^\top \widetilde{B}P_2\|_F^2 = 2\|P_2^\top(B-W^\top Q W)P_2\|_F^2\\
&\overset{(c)}{=} 2\|P_2^\top B P_2\|_F^2 = \varepsilon_{{\rm test}}(f_\nt).
\end{split}
\end{eqnarray*}
where (a) and (b) are due to arguments analogous to arguments made in the beginning of proof of Theorem 2 in \cite{Ghorbani19} (except that our $\widetilde B$ plays the role of $B$ in \cite{Ghorbani19}) and (c) is because $P_2^\top P_1 = 0 \in \mathbb R^{(d-\min(m,d)) \times d}$ by construction of $P_2$. Dividing through the above display by $\mathfrak{S}(f_\star)^2 = 4\|B\|_F^2$ then gives \eqref{eq:ntlgen}.
\end{proof}

%%%%%%%%%%%%%%%%%%%%%%%%%%%%%%%%%%%%%%%%%%%%%%%%%%%%%%%%%%%%%%%%%%%%%%%%%%%%%%%
%%%%%%%%%%%%%%%%%%%%%%%%%%%%%%%%%%%%%%%%%%%%%%%%%%%%%%%%%%%%%%%%%%%%%%%%%%%%%%%
% APPENDIX
%%%%%%%%%%%%%%%%%%%%%%%%%%%%%%%%%%%%%%%%%%%%%%%%%%%%%%%%%%%%%%%%%%%%%%%%%%%%%%%
%%%%%%%%%%%%%%%%%%%%%%%%%%%%%%%%%%%%%%%%%%%%%%%%%%%%%%%%%%%%%%%%%%%%%%%%%%%%%%%

%%%%%%%%%%%%%%%%%%%%%%%%%%%%%%%%%%%%%%%%%%%%%%%%%%%%%%%%%%%%%%%%%%%%%%%%%%%%%%%
%%%%%%%%%%%%%%%%%%%%%%%%%%%%%%%%%%%%%%%%%%%%%%%%%%%%%%%%%%%%%%%%%%%%%%%%%%%%%%%

\end{document}